\newcommand{\innp}[1]{\left\langle #1 \right\rangle}
\newcommand{\tildeF}{\widetilde{F}}
\theoremstyle{plain} \numberwithin{equation}{section}
\newtheorem{theorem}{Theorem}[section]
\numberwithin{theorem}{section}
\newtheorem{lemma}[theorem]{Lemma}
\theoremstyle{definition}
\newtheorem{remark}[theorem]{Remark}
\newtheorem{assumption}{Assumption}
\def\1{\bm{1}}
\def\vzero{{\bm{0}}}
\def\vb{{\bm{b}}}
\def\ve{{\mathbf{e}}}
\def\vg{{\mathbf{g}}}
\def\vh{{\bm{h}}}
\def\vu{{\mathbf{u}}}
\def\vv{{\mathbf{v}}}
\def\vw{{\mathbf{w}}}
\def\vx{{\mathbf{x}}}
\def\vy{{\mathbf{y}}}
\def\vz{{\bm{z}}}
\def\mA{{\bm{A}}}
\def\mH{{\bm{H}}}
\DeclareMathAlphabet{\mathsfit}{\encodingdefault}{\sfdefault}{m}{sl}
\SetMathAlphabet{\mathsfit}{bold}{\encodingdefault}{\sfdefault}{bx}{n}
\def\gC{{\mathcal{C}}}
\def\gF{{\mathcal{F}}}
\def\gN{{\mathcal{N}}}
\def\gO{{\mathcal{O}}}
\def\gP{{\mathcal{P}}}
\def\gS{{\mathcal{S}}}
\def\gT{{\mathcal{T}}}
\def\sR{{\mathbb{R}}}
\newcommand{\E}{\mathbb{E}}
\newcommand{\R}{\mathbb{R}}
\newcommand{\footremember}[2]{%
    \footnote{#2}
    \newcounter{#1}
    \setcounter{#1}{\value{footnote}}%
}
\newcommand{\footrecall}[1]{%
    \footnotemark[\value{#1}]%
}
\title{Variance Reduced Halpern Iteration \\
for Finite-Sum Monotone Inclusions}
\author{Xufeng Cai\footremember{equal}{Equal contribution.}\footremember{wisc}{Department of Computer Sciences, University of Wisconsin-Madison. XC (\href{mailto:xcai74@wisc.edu}{xcai74@wisc.edu}), JD (\href{mailto:jelena@cs.wisc.edu}{jelena@cs.wisc.edu}).}
\and Ahmet Alacaoglu\footrecall{equal} \footremember{wid}{Wisconsin Institute for Discovery, University of Wisconsin–Madison. \href{mailto:alacaoglu@wisc.edu}{alacaoglu@wisc.edu}.}
\and Jelena Diakonikolas\footrecall{wisc}
}
\date{}
\begin{document}

\maketitle
\begin{abstract}%
Machine learning approaches relying on such criteria as adversarial robustness or multi-agent settings have raised the need for solving game-theoretic equilibrium problems. Of particular relevance to these applications are methods targeting finite-sum structure, which generically arises in empirical variants of learning problems in these contexts. Further, methods with computable approximation errors are highly desirable, as they provide verifiable exit criteria. Motivated by these applications, we study finite-sum monotone inclusion problems, which model broad classes of equilibrium problems. Our main contributions are variants of the classical Halpern iteration that employ variance reduction to obtain improved complexity guarantees in which $n$ component operators in the finite sum are ``on average'' either cocoercive or Lipschitz continuous and monotone, with parameter $L$. The resulting oracle complexity of our methods, which provide guarantees for the last iterate and for a (computable) operator norm residual, is $\widetilde{\mathcal{O}}( n + \sqrt{n}L\varepsilon^{-1})$, which improves upon existing methods by a factor up to $\sqrt{n}$. This constitutes the first variance reduction-type result for general finite-sum monotone inclusions and for more specific problems such as convex-concave optimization when operator norm residual is the optimality measure. We further argue that, up to poly-logarithmic factors, this complexity is unimprovable in the monotone Lipschitz setting; i.e., the provided result is near-optimal.
\end{abstract}

\section{Introduction}\label{sec:intro}
We study finite-sum monotone inclusion problems, where the goal is to find $\vu_* \in \sR^d$ such that 
\begin{equation}\label{eq: prob1}
    \mathbf{0} \in F(\vu_*) + G(\vu_*), \tag{MI}
\end{equation}
and where $F(\vu)\colon\mathbb{R}^d\to\mathbb{R}^d$ is single-valued, monotone, and Lipschitz, and $G(\vu)\colon\mathbb{R}^d\rightrightarrows\mathbb{R}^d$ is maximally monotone and possibly multi-valued. 
We consider in particular the setting where $F$ possesses the \emph{finite-sum} structure, meaning that it is expressible as $ F(\vu) = \frac{1}{n}\sum_{i=1}^n F_i(\vu).$

As is standard, we assume access to (\emph{i}) the resolvent of $\eta G$ for $\eta > 0,$ meaning that for any $\vu$ we can find $\Bar{\vu}$ such that $ \vu - \Bar{\vu} \in \eta G(\Bar{\vu})$ (generalizing the \emph{proximal operator}); and (\emph{ii}) oracle evaluations of component operators $F_i$. We measure oracle complexity of an algorithm in terms of oracle evaluations of component operators $F_i$ and the resolvent operator of $\eta G.$

The considered finite-sum settings are widespread in machine learning; see e.g.,~\citet{johnson2013accelerating, defazio2014saga, schmidt2017minimizing,gower2020variance}.
While finite-sum \emph{minimization} problems are well-studied, recent applications in areas such as generative adversarial networks~\citep{goodfellow2014generative}, robust machine learning~\citep{madry2018towards}, and multi-agent reinforcement learning~\citep{zhang2021multi} require solving more general equilibrium problems. Such tasks are neatly unified under the umbrella of monotone inclusion~\eqref{eq: prob1}, which has a rich history within optimization theory and operations research~\citep{facchinei2007finite}.

An important special case of~\eqref{eq: prob1} is the \emph{variational inequality (VI)} problem defined as below, where $G$ is the subdifferential of the indicator function of a closed convex set $C \subseteq \mathbb{R}^d$: 
\begin{equation}\label{eq: prob_vi}
    \text{find~} \vu_\ast \in C \text{~such that~} \langle F(\vu_\ast), \vu - \vu_\ast \rangle \geq 0, ~~~\forall \vu \in C. \tag{VI}
\end{equation}
A more specialized template that results in interesting examples of VI and monotone inclusion problems is min-max optimization $\min_\vx \max_\vy f(\vx) - g(\vy) + \Phi(\vx, \vy),$ where $\Phi$ is convex-concave with Lipschitz gradients and $f, g$ are proper convex lower semicontinuous (l.s.c.). 
This maps to~\eqref{eq: prob1} by setting 
$    \vu=\binom{\vx}{\vy},$ $F(\vu) = \binom{\nabla_\vx \Phi(\vx, \vy)}{-\nabla_\vy \Phi(\vx, \vy)},$ and $G(\vu) = \binom{\partial f(\vx)}{\partial g(\vy)}$.

\paragraph{Optimality measures.} A standard optimality measure for solving~\eqref{eq: prob_vi} problems  is the \emph{duality gap}~\citep{facchinei2007finite}, defined as
\begin{equation}\label{eq: gap_def}
    \mathrm{Gap}(\vu) = \max_{\vv \in C}\langle F(\vv), \vu - \vv \rangle. \tag{Gap}
\end{equation}
However,~\eqref{eq: gap_def} has significant drawbacks especially when the domain $C$ is unbounded, which is often the case. A common way to get around this is to use the \emph{restricted duality gap}~\citep{nesterov2007dual}, which requires the identification of a compact set including the iterates. However, such a set generally affects the constants in the convergence bounds, and the restricted duality gap is not as interpretable as the duality gap. Additional drawbacks are that (\emph{i}) neither of these measures is efficiently computable in general, (\emph{ii}) the guarantees for these measures are typically obtained for an \emph{average} or the \emph{best} iterate, and (\emph{iii}) such duality gap guarantees are generally not possible to obtain outside of monotone operator (convex-concave in the case of min-max optimization) settings.  

An alternative optimality measure, which we focus on in this paper and argue to be more general than the duality gap, is the \emph{residual} defined as
\begin{equation}\label{eq:res}
    \mathrm{Res}_{F+G}(\vu) = \| F(\vu) + g(\vu)\|, \tag{Res}
\end{equation}
where $g(\vu)\in G(\vu)$ and hence $\displaystyle \mathrm{dist}(F(\vu) + G(\vu), \mathbf{0}) = \min_{g(\vu) \in G(\vu)}\|F(\vu) + g(\vu)\| \leq \mathrm{Res}_{F+G}(\vu)$. 
The complexity results for~\eqref{eq:res} can be translated to duality gap but not vice versa (see, e.g.,~\citet[Section 1.2]{diakonikolas2020halpern}). Moreover, this measure is in most cases computable since the algorithms typically have access to $F(\vu) + g(\vu)$, which will become more clear in the sequel. Further, all our results are for the \emph{last iterate} and the residual error guarantees are possible even for some classes of structured non-monotone operators.  

\paragraph{Context.} When restricted to the complexity results in terms of the duality gap, there exist optimal algorithms for finite-sum monotone VIs~\citep{palaniappan2016stochastic,carmon2019variance,alacaoglu2021stochastic}. These works show how to take advantage of the finite-sum structure with variance reduction techniques to improve the complexity compared to deterministic algorithms such as the extragradient method. However, as described earlier, these results do not translate to guarantees on the residual. Even more, we cannot expect these existing variance reduced algorithms to have optimal rates for the residual, since in the deterministic case  they reduce to algorithms that are known to be suboptimal for residual guarantees~\citep{golowich2020last}.

On the other hand, even in the deterministic case, algorithms that are optimal for the residual error (in terms of oracle/iteration complexity) were obtained only recently \citep{diakonikolas2020halpern,Yoon2021OptimalGradientNorm}. These results are based on variants of the classical Halpern iteration \citep{halpern1967fixed} developed for solving fixed point equations with nonexpansive maps. 
Despite further developments relaxing the initial assumptions on $F$ and $G$~\citep{tran2021halpern,cai2022accelerated,cai2022acceleratedsingle,kovalev2022first} and addressing stochastic approximation settings~\citep{cai2022stochastic, chen2022near}, none of the existing results harness the finite-sum structure nor lead to the $\sqrt{n}$ improvements expected from variance reduction approaches in such settings.

\paragraph{Our contributions.}
We obtain the first variance-reduced complexity results for standard classes of \eqref{eq: prob1} problems that can lead to a $\sqrt{n}$ improvement compared to methods without variance reduction, in line with similar results obtained for the less general minimization and min-max problems (focusing only on the duality gap guarantees) in the existing literature. In particular: 
\begin{itemize}[leftmargin=*]
    \item When $F$ satisfies average $\frac{1}{L}$-cocoercivity (see Assumption~\ref{asp: asp3}), we obtain an algorithm with   oracle complexity $\widetilde{\mathcal{O}}( n + \sqrt{n}L{\varepsilon^{-1}})$. 
    To obtain this result, we incorporate recursive variance reduction~\citep{li2021page} into constrained one-step Halpern iteration. While a similar strategy has been employed in \citet{cai2022stochastic} to address stochastic approximation (infinite sum) settings, their analysis is more complicated and their oracle complexity $\mathcal{O}(\varepsilon^{-3})$
    is strictly worse than ours whenever $n$ is not too large (roughly, when $n = o({1}/{\varepsilon^3})$).
    \item When $F$ is monotone and $L$-Lipschitz in expectation (see Assumption~\ref{asp: asp2}), 
    we obtain an algorithm with oracle complexity $\widetilde{\mathcal{O}}( n + \sqrt{n}L{\varepsilon^{-1}})$. 
    This result is enabled by a variant of Halpern iteration employing inexact resolvent evaluations of $\eta(F + G)$ for $\eta > 0.$ While this strategy is similar to the approach taken in \citet{diakonikolas2020halpern} to address corresponding settings without the finite-sum considerations,  unlike their work our result is enabled by employing a stochastic variance reduced algorithm from \citet{alacaoglu2021stochastic}. A 
    critical difference in the analysis is that we carry out a stochastic error analysis with a new inexactness criterion, due to the randomized nature of the inner algorithm. To obtain our result, we generalize the analysis for strongly monotone inclusion from \citet{alacaoglu2021stochastic}  to the composite form \eqref{eq: prob1} with a maximal monotone $G$. 
\end{itemize}

\paragraph{Further related work.} 
Monotone inclusion and fixed point problems with finite-sum cocoercive operators have been the focus of study in several recent papers.~\citet{davis2023variance} presented a possible speedup with variance reduction for root-finding problems with average cocoercivity only w.r.t.\ the solution point, but requiring additional quasi strong monotonicity assumption.
\citet{loizou2021stochastic,gorbunov2022stochastic,beznosikov2023stochastic} used similar assumptions to solve the VI problems and provided convergence results for stochastic gradient descent-ascent
methods. A more general expected residual assumption was considered in~\citet{choudhury2023single}, but this work only proved $\mathcal{O}(\varepsilon^{-4})$ complexity for residual norm guarantees in our setting, which is suboptimal for finite-sum monotone problems when $n = o(\varepsilon^{-4})$.
Similar less desirable $\mathcal{O}(\varepsilon^{-2})$ complexity on the residual norm was also obtained in~\citet{morin2022cocoercivity} with component cocoercivity.~\citet{tran2023randomized} considered random coordinate methods for root finding problems with cocoercivity, which is  a different setting that does not improve upon the overall complexity over deterministic algorithms.
For the finite-sum monotone Lipschitz setting,~\citet{johnstone2021stochastic} obtained $\mathcal{O}(\varepsilon^{-8})$ complexity for a generalized version of the residual and left open the problem of obtaining a better complexity for the residual norm by a stochastic method in this setting, which our results address.

\section{Preliminaries}\label{sec: notat_asp}
We consider the $d$-dimensional real space $(\sR^d, \|\cdot\|)$, where $\|\cdot\|$ is the $\ell_2$ norm. 
We say that an operator $T\colon \mathbb{R}^d \to \mathbb{R}^d$ is 
(\emph{i}) \emph{monotone} if for $\forall \vu, \vv \in \mathbb{R}^d$: $\innp{ T(\vu) - T(\vv), \vu - \vv} \geq 0$; (\emph{ii})
$L_F$-\emph{Lipschitz} if for $\forall \vu, \vv \in \mathbb{R}^d$: $\| T(\vu) - T(\vv) \| \leq L_F \| \vu - \vv \|$; (\emph{iii})
$\frac{1}{L}$-\emph{cocoercive} if for $\forall \vu, \vv \in \mathbb{R}^d$: $ \innp{T(\vu) - T(\vv), \vu - \vv} \geq \frac{1}{L} \| T(\vu) - T(\vv) \|^2$.
Monotonicity can be defined in the standard way for a multi-valued operator $T\colon\mathbb{R}^d\rightrightarrows\mathbb{R}^d$. 
Note that $\frac{1}{L}$-cocoercivity  implies monotonicity and $L$-Lipschitzness, but not vice versa. \emph{Maximal monotone} operators are those whose graph is not properly contained in the graph of any other monotone operator. Common examples for this class include subdifferentials of proper convex l.s.c.\ functions. For a further discussion of these properties, see the textbook~\citet{bauschke2011convex}. 

Given an operator $T$, its resolvent operator is defined as $J_{\eta T} = (\mathrm{Id} + \eta T)^{-1}$ for some $\eta > 0$, i.e., 
\begin{align*}
    \Bar{\vu} \in J_{\eta T}(\vu) \iff \frac{1}{\eta}(\vu - \Bar{\vu}) \in T(\Bar{\vu}).
\end{align*}
Important instances of resolvents include the proximal operator obtained when $T = \partial g$ for a convex function $g$ and projection $P_C$ obtained when $T = \partial \delta_C$ for the indicator function $\delta_C$ of a closed convex set $C$.
An important and useful property of the resolvent operator $J_{\eta T}$ is that it is single valued and nonexpansive ($1$-Lipschitz) when $T$ is maximally monotone. 

Our work leverages the classical Halpern iteration \citep{halpern1967fixed} for solving fixed point equations $\vx = T(\vx)$ with nonexpansive operators $T.$ Halpern iteration is defined by
\begin{align}\label{eq:halpern}
    \vu_{k + 1} = \lambda_k\vu_0 + (1 - \lambda_k)T(\vu_k), \tag{Hal}
\end{align}
where $\lambda_k$ is a step parameter typically set to be of the order-$\frac{1}{k}$.
To address~\eqref{eq: prob1}, several variants of~\eqref{eq:halpern} have been proposed, with different choices of the nonexpansive operator $T$. Notable examples most relevant to our work include $T = J_{\eta G}\circ (\mathrm{Id} - \eta F)$ for cocoercive settings and $T = J_{\eta(F + G)}$ for monotone Lipschitz settings~\citep{diakonikolas2020halpern}. We defer other background information about the techniques used in the paper to Appendix~\ref{app: background}, due to space constraints.

We start with the common standard assumption that we require in all of the results.
\begin{assumption}\label{asp: asp1}
    The operator $F\colon\mathbb{R}^d\to\mathbb{R}^d$ is monotone and $L_F$-Lipschitz, and the operator $G\colon\mathbb{R}^d\rightrightarrows\mathbb{R}^d$ is maximally monotone. The solution set of~\eqref{eq: prob1} is nonempty, i.e., $(F+G)^{-1}(\mathbf{0}) \neq \emptyset$.
\end{assumption}
The next two assumptions characterize the two separate settings we consider in the sequel. 
\begin{assumption}\label{asp: asp2}
    The operator $F\colon\mathbb{R}^d\to\mathbb{R}^d$ is $L_Q$-Lipschitz in expectation, meaning that given an oracle $F_\xi$ with distribution $Q$ such that $\mathbb{E}_{\xi\sim Q}[F_\xi(\vu)] = F(\vu)$, we have for any $\vu, \vv \in \sR^d$, 
    \begin{equation*}
    \mathbb{E}_{\xi\sim Q}\| F_\xi(\vu) - F_\xi(\vv) \|^2 \leq L_Q^2 \| \vu - \vv \|^2.
\end{equation*}
\end{assumption}
This assumption holds, for example, when each $F_i$ is Lipschitz-continuous, and is standard for analyzing variance reduced algorithms (see e.g.,~\citet{palaniappan2016stochastic,carmon2019variance,alacaoglu2021stochastic}). 
Similarly, in the finite-sum case, we assume that $F$ is cocoercive on average, which can be regarded as cocoercivity in expectation with uniform sampling.
\begin{assumption}\label{asp: asp3}
    The operator $F\colon\mathbb{R}^d\to\mathbb{R}^d$ is $\frac{1}{L}$-cocoercive on average, i.e.,\ for any $\vu, \vv \in \sR^d$
    \begin{equation*}
    \textstyle
    \langle F(\vu)- F(\vv), \vu - \vv \rangle \geq \frac{1}{nL}\sum_{i=1}^n\|F_i(\vu) - F_i(\vv)\|^2.
\end{equation*}
\end{assumption}
This assumption holds, for example, when each $F_i$ is cocoercive. In the minimization case, this corresponds to the smoothness of component functions, which is standard in variance reduction literature~\citep{allen2017katyusha,nguyen2017sarah}. In the case of fixed point problems, it is implied by the nonexpansiveness of component operators. An example of this case is given as a convex feasibility problem in~\citet[Section 5.2]{Malitsky2019}. Similar assumptions also arise in~\citet{davis2023variance,morin2022cocoercivity,tran2023randomized,loizou2021stochastic}. 

\paragraph{Oracle complexity.} As the standard convention for finite-sum problems~\citep{palaniappan2016stochastic,carmon2019variance,alacaoglu2021stochastic}, we measure the oracle complexity of an algorithm by the number of calls to $F_i$ to make an optimality measure, residual norm in our case, small (the number of calls to the resolvent of $\eta G$ for $\eta > 0$ is of the same order). Since our variance reduced estimators compute the full operator values with some probability, per-iteration costs are random and our complexity results are on the \emph{expected} number of calls to $F_i$. This is also a standard way to measure complexity with single-loop variance reduced algorithms~\citep{li2021page,kovalev2020don}. It is possible to obtain deterministic per iteration costs by exchanging to multi-loop algorithms~\citep{carmon2019variance,alacaoglu2021stochastic}, which we avoid for simplicity.

\section{Algorithm and Analysis in the Cocoercive Case}\label{sec:expected-cocoercive}
In this section, we consider the cocoercive setting where $F$  satisfies Assumption~\ref{asp: asp3}. The main reason that we study this case separately is because we can provide a simpler single-loop algorithm under cocoercivity. We use the following stochastic variant of a constrained Halpern iteration 
\begin{align}
    \vu_{k+1} = J_{\eta G}\big(\lambda_k \vu_0 + (1-\lambda_k)\vu_k - \eta \tildeF(\vu_k)\big),
\end{align}
where $\tildeF$ is the variance-reduced PAGE estimator~\citep{li2021page}. We summarize our approach in Alg.~\ref{alg: cocoercive}, and defer the details of the PAGE estimator to Appendix~\ref{app: background}. 
\begin{algorithm}[h]
\caption{Halpern iteration with variance reduction}
\begin{algorithmic}
    \STATE {\bfseries Input:} $\vu_0 \in \sR^d$, step size
    $\eta = \frac{1}{4L}$, batch size $b = \lceil \sqrt{n} \rceil$ \\
    \STATE $\vu_1 = J_{\frac{\eta}{2\lambda_1}G}\big(\vu_0 - \frac{\eta}{2\lambda_1} F(\vu_0)\big)$, \quad $\tildeF(\vu_1) = F(\vu_1)$ \\
    \vspace{.2cm}
    \FOR{$k = 1, 2,\ldots $}
        \STATE $\lambda_k = \frac{2}{k + 4}$, \quad $p_{k + 1} = \begin{cases}
            \frac{4}{k + 5} \quad &  \forall k \leq \sqrt{n} \\
            \frac{4}{\sqrt{n} + 5} \quad & \forall k \geq \sqrt{n}
        \end{cases}$ \\
        \STATE $\vu_{k+1} = J_{\eta G}(\lambda_k \vu_0 + (1-\lambda_k)\vu_k - \eta \tildeF(\vu_k))$ \\
        \STATE Sample $\gS_{k + 1} \subseteq \{1, \dots, n\}$ without replacement and uniformly such that $|\gS_{k + 1}|=b$
    \STATE $\tildeF(\vu_{k+1}) = \begin{cases}
        F(\vu_{k+1}) \quad & \text{w.p. } p_{k+1},  \\
        \tildeF(\vu_{k}) + \frac{1}{b}\sum_{i \in \gS_{k + 1}}\big(F_i(\vu_{k+1}) - F_i(\vu_{k})\big) \quad & \text{w.p. } 1 - p_{k+1}.
    \end{cases}$
        \ENDFOR
      \end{algorithmic}
\label{alg: cocoercive}
\end{algorithm}

A similar constrained Halpern iteration scheme has been analyzed in~\citet{cai2022accelerated,kovalev2022first} with an extrapolation step, but only for deterministic settings. For the stochastic counterpart, Alg.~\ref{alg: cocoercive} can be seen as a simpler constrained version of~\cite{cai2022stochastic}, with a single parameter for the constant batch size that (unlike in the algorithm of \citet{cai2022stochastic}) is independent of the accuracy $\varepsilon$, the component variance, the norm of iterate differences, and a stage-wise choice of $p_k$. 
The reason we are able to simplify the batch size selection comes from our focus on the finite-sum problems, whereas \citet{cai2022stochastic} considered infinite-sum problems.

The key technical ingredient in our analysis is the following lemma, which shows that, in expectation, in each iteration $k$ we can reduce a potential function by a factor $(1-\lambda_k)$. This potential reduction, in turn, can be translated into the residual error decay at rate $\lambda_k = \mathcal{O}(1/k)$, as shown in Theorem~\ref{th: theorem_cocoercive} below. Due to space constraints, the proofs are deferred to Appendix~\ref{appx:cocoercive}.

\begin{restatable}{lemma}{ckCoco}
\label{lem: cocoercive_decrease}
    Let Assumptions~\ref{asp: asp1} and~\ref{asp: asp3} hold. 
    Then, for the iterates $\vu_k$ of Algorithm~\ref{alg: cocoercive} and the potential function $\gC_k$ defined by 
    \begin{equation}\label{eq: ck_def}
    \gC_k = \frac{\eta}{2\lambda_k}\|F(\vu_k) + \vg_k\|^2 + \innp{F(\vu_k) + \vg_k, \vu_k - \vu_0} + c_k\|F(\vu_k) - \tildeF(\vu_{k})\|^2,
\end{equation} 
we have that $   \mathbb{E}[\gC_{k+1}] \leq (1-\lambda_k) \mathbb{E}[\gC_k]$ for $k\geq 1$, where $\vg_{k + 1} = \frac{1}{\eta}\big(\lambda_k\vu_0 + (1 - \lambda_k)\vu_k - \eta\tildeF(\vu_k) - \vu_{k + 1}\big) \in G(\vu_{k + 1})$ and $c_k = \frac{(\sqrt{n} + 2)(k + 4)}{4L}$.
\end{restatable}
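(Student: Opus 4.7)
The plan is to mirror the deterministic Halpern-resolvent potential analysis (as in \citet{diakonikolas2020halpern,cai2022accelerated}) and carefully propagate the stochastic error introduced by replacing $F(\vu_k)$ with the PAGE estimator $\tildeF(\vu_k)$. From the update, one has $\lambda_k(\vu_0 - \vu_{k+1}) + (1-\lambda_k)(\vu_k - \vu_{k+1}) = \eta(\tildeF(\vu_k) + \vg_{k+1})$, so it is natural to introduce the ``noisy residual'' $\tilde{\vr}_k := \tildeF(\vu_k) + \vg_{k+1}$ alongside the exact residual $\vr_k := F(\vu_k) + \vg_k$, linked by $\vr_{k+1} = \tilde{\vr}_k + (F(\vu_{k+1}) - \tildeF(\vu_k))$. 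I would split $\gC_{k+1} - (1-\lambda_k)\gC_k$ into a ``Halpern part'' (first two terms) plus the variance evolution, and handle them separately before recombining.

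For the Halpern part, I would expand $\vr_{k+1} = \tilde{\vr}_k + (F(\vu_{k+1}) - \tildeF(\vu_k))$, substitute $\tilde{\vr}_k$ via the update, and use (i) monotonicity of $G$ to drop $\innp{\vg_{k+1} - \vg_k, \vu_{k+1} - \vu_k} \geq 0$, (ii) monotonicity of $F$ to telescope $\innp{\vr_{k+1} - \vr_k, \cdot}$ terms, and (iii) average cocoercivity $\innp{F(\vu_{k+1}) - F(\vu_k), \vu_{k+1} - \vu_k} \geq \tfrac{1}{nL}\sum_i\|F_i(\vu_{k+1}) - F_i(\vu_k)\|^2$ to produce a \emph{negative} cocoercivity slack. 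Young's inequality applied to the mixed noise terms of the form $\innp{F(\vu_k) - \tildeF(\vu_k), \vu_{k+1} - \vu_0}$ and $\innp{F(\vu_k) - \tildeF(\vu_k), F(\vu_{k+1}) - F(\vu_k)}$ yields, schematically,
\begin{equation*}
\tfrac{\eta}{2\lambda_{k+1}}\|\vr_{k+1}\|^2 + \innp{\vr_{k+1}, \vu_{k+1} - \vu_0} - (1-\lambda_k)\Big(\tfrac{\eta}{2\lambda_k}\|\vr_k\|^2 + \innp{\vr_k, \vu_k - \vu_0}\Big) \leq -\alpha\innp{F(\vu_{k+1}) - F(\vu_k), \vu_{k+1} - \vu_k} + \beta_k\|F(\vu_k) - \tildeF(\vu_k)\|^2,
\end{equation*}
for explicit positive constants $\alpha,\beta_k$ depending on $\eta$ and $\lambda_k$.

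For the variance term, the PAGE recursion together with uniform-without-replacement sampling gives the standard one-step identity
\begin{equation*}
\mathbb{E}\big[\|F(\vu_{k+1}) - \tildeF(\vu_{k+1})\|^2 \,\big|\, \mathcal{F}_k\big] \leq (1-p_{k+1})\Big[\|F(\vu_k) - \tildeF(\vu_k)\|^2 + \tfrac{1}{b}\cdot\tfrac{1}{n}\textstyle\sum_i\|F_i(\vu_{k+1}) - F_i(\vu_k)\|^2\Big],
\end{equation*}
and then average cocoercivity converts the finite-sum norm into $L\innp{F(\vu_{k+1}) - F(\vu_k), \vu_{k+1} - \vu_k}$. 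Multiplying by $c_{k+1}$ and adding to the Halpern bound, taking full expectation, the proof reduces to verifying two scalar inequalities with the chosen schedules $\lambda_k = \tfrac{2}{k+4}$, $p_{k+1}$ as defined, $b = \lceil\sqrt{n}\rceil$, $\eta = \tfrac{1}{4L}$, and $c_k = \tfrac{(\sqrt{n}+2)(k+4)}{4L}$: namely, (a) $\tfrac{c_{k+1}(1-p_{k+1})L}{b} \leq \alpha$ so that the negative cocoercivity slack absorbs the variance-induced positive term, and (b) $c_{k+1}(1-p_{k+1}) + \beta_k \leq (1-\lambda_k) c_k$ so that the scalar recursion on the variance coefficient closes.

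The main obstacle I expect is Step~2: controlling the cross term $\innp{F(\vu_k) - \tildeF(\vu_k),\, \vu_{k+1} - \vu_0}$ that appears when one decomposes $\vr_{k+1}$, since $\vu_{k+1} - \vu_0$ is not a priori bounded. The trick is to use the update to rewrite this quantity in terms of $\tilde{\vr}_k$ and $\vu_k - \vu_0$, then apply Young's inequality with a $k$-dependent weight so that the residual of the splitting is $O(c_k\|F(\vu_k) - \tildeF(\vu_k)\|^2)$ with a constant small enough to satisfy (b) above. This is precisely where the batch size $b = \lceil\sqrt{n}\rceil$ and the two-stage probability $p_{k+1}$ matter: they balance the per-step noise injection $\tfrac{L}{b}$ against the geometric--arithmetic decay of $c_k(1-p_k)$, so that both (a) and (b) hold simultaneously for all $k\geq 1$.
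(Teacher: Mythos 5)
Your skeleton matches the paper at the coarsest level (Halpern-type potential plus the PAGE variance recursion, closed by two scalar inequalities in the step sizes), but the handling of the Halpern part diverges in a way that would, as described, stall. You propose to expand $F(\vu_{k+1}) + \vg_{k+1} = \big(\tildeF(\vu_k) + \vg_{k+1}\big) + \big(F(\vu_{k+1}) - \tildeF(\vu_k)\big)$ inside $\gC_{k+1}$ and then control a cross term of the form $\innp{F(\vu_k) - \tildeF(\vu_k),\, \vu_{k+1} - \vu_0}$ by Young's inequality; but there is nothing in $\gC_k$ that dominates $\|\vu_{k+1}-\vu_0\|^2$ (or $\|\vu_k - \vu_0\|^2$), so any Young weight produces an unabsorbable term, and your proposed rewrite in terms of $\tilde{\vr}_k$ and $\vu_k - \vu_0$ just pushes the same problem one step back. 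The paper never produces such a pairing: it starts from the combined inequality
\begin{align*}
\frac{1}{nL}\sum_{i=1}^n\|F_i(\vu_{k+1}) - F_i(\vu_k)\|^2 \;\leq\; \innp{F(\vu_{k+1})+\vg_{k+1} - F(\vu_k) - \vg_k,\,\vu_{k+1}-\vu_k},
\end{align*}
then substitutes the two algebraic expressions $\vu_{k+1}-\vu_k = \lambda_k(\vu_0 - \vu_k) - \eta(\tildeF(\vu_k)+\vg_{k+1})$ and $\vu_{k+1}-\vu_k = \frac{\lambda_k}{1-\lambda_k}(\vu_0 - \vu_{k+1}) - \frac{\eta}{1-\lambda_k}(\tildeF(\vu_k)+\vg_{k+1})$ (one per inner product). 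This choice is the essential trick: it manufactures $\innp{F(\vu_{k+1})+\vg_{k+1},\,\vu_0-\vu_{k+1}}$ and $\innp{F(\vu_k)+\vg_k,\,\vu_0-\vu_k}$ (which match the potential verbatim) together with inner products of $F(\vu_k)+\vg_k$ and $F(\vu_{k+1})+\vg_{k+1}$ against $\tildeF(\vu_k)+\vg_{k+1}$, which are then converted to squared norms via the polarization identity. The estimator noise surfaces \emph{only} through $\|F(\vu_{k+1}) - \tildeF(\vu_k)\|^2$, which splits harmlessly by Young and Jensen into $\frac{1}{n}\sum_i\|F_i(\vu_{k+1})-F_i(\vu_k)\|^2$ and $\|F(\vu_k)-\tildeF(\vu_k)\|^2$; the noise is never paired against an iterate displacement. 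Two smaller issues: you invoke monotonicity of $F$ as a separate step, but the paper only uses Assumption~\ref{asp: asp3} (average cocoercivity) plus monotonicity of $G$ as a single inequality; and your PAGE variance bound omits the $(n-b)/(n-1)$ factor from Lemma~\ref{lem:PAGE}, which is benign (you lose at most a constant after $b=\lceil\sqrt n\rceil$) but is what makes the $1/(\sqrt n + 1)$ coefficient appear exactly in Lemma~\ref{lem: coco_ineq}.
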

Our new potential function allows us to go beyond the deterministic setting analyzed in~\cite{kovalev2022first}, while handling the variance of the estimator $\tildeF$, which also helps us avoid the more complicated induction-based argument in~\citet{cai2022stochastic}.
 Another important aspect in the above bound is that $c_1$ can be of the order of $\sqrt{n}$. Hence, to make sure that we do not introduce spurious dependence on $n$, it is critical that the first two iterations of the algorithm evaluate the full operator. 
 
The following theorem states our main convergence result for this section.
\begin{restatable}{theorem}{thmCoco}
\label{th: theorem_cocoercive}
Let Assumptions~\ref{asp: asp1} and~\ref{asp: asp3} hold.
Then, for the iterates $\vu_k$ of Algorithm~\ref{alg: cocoercive}, we have  
\begin{align*}
    \E[\mathrm{Res}_{F + G}(\vu_k)] \leq \big(\E[\mathrm{Res}^2_{F + G}(\vu_k)]\big)^{1/2} \leq \frac{16L\|\vu_0 - \vu_*\|}{k + 4}.
\end{align*}
In particular, given accuracy $\varepsilon > 0$, to return a point $\vu_K$ such that $\E[\mathrm{Res}_{F + G}(\vu_K)]
\leq \varepsilon$, the stochastic oracle complexity of Algorithm~\ref{alg: cocoercive} is $\widetilde{\gO}\big(n + \frac{\sqrt{n}L\|\vu_0 - \vu_*\|}{\varepsilon}\big)$.
\end{restatable}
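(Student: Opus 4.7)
The plan is to iterate Lemma~\ref{lem: cocoercive_decrease}, show that the tailored first step of Algorithm~\ref{alg: cocoercive} makes the initial potential nonpositive, $\gC_1 \leq 0$, and then extract a residual bound from $\E[\gC_k] \leq 0$ using monotonicity and Young's inequality.

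Start by bounding $\gC_1$. Because $\tildeF(\vu_1) = F(\vu_1)$, the variance term in $\gC_1$ vanishes, and with $\tau := \eta/(2\lambda_1)$, the initialization $\vu_1 = J_{\tau G}(\vu_0 - \tau F(\vu_0))$ yields $\vg_1 := (\vu_0-\vu_1)/\tau - F(\vu_0) \in G(\vu_1)$. Setting $A := F(\vu_1)-F(\vu_0)$ and $B := \vu_1-\vu_0$, so that $F(\vu_1)+\vg_1 = A - B/\tau$, the definition of $\gC_1$ collapses after the $B/\tau$ cross terms cancel:
\begin{equation*}
\gC_1 \;=\; \tau\|A-B/\tau\|^2 + \innp{A-B/\tau,\,B} \;=\; \tau\|A\|^2 - \innp{A,B}.
\end{equation*}
Average cocoercivity (Assumption~\ref{asp: asp3}) together with Cauchy--Schwarz gives $\innp{A,B} \geq \|A\|^2/L$, hence $\gC_1 \leq (\tau-1/L)\|A\|^2 \leq 0$ since $\tau = 5/(16L) \leq 1/L$. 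Iterating Lemma~\ref{lem: cocoercive_decrease} then yields $\E[\gC_k] \leq \prod_{j=1}^{k-1}(1-\lambda_j)\cdot \gC_1 \leq 0$ for all $k \geq 1$.

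To extract the residual, let $R_k := \|F(\vu_k)+\vg_k\|$ and $D := \|\vu_0-\vu_*\|$. Monotonicity of $F+G$ with $F(\vu_*)+\vg_*=\mathbf{0}$ gives $\innp{F(\vu_k)+\vg_k,\, \vu_k-\vu_*} \geq 0$, so $\innp{F(\vu_k)+\vg_k,\, \vu_k-\vu_0} \geq -R_k D$ by Cauchy--Schwarz. Dropping the nonnegative variance term from $\gC_k$ gives $\gC_k \geq \tfrac{\eta}{2\lambda_k}R_k^2 - R_k D$, and taking expectations, then applying Jensen's inequality $\E[R_k] \leq (\E[R_k^2])^{1/2}$ and Young's inequality, yields
\begin{equation*}
\E[R_k^2] \;\leq\; \tfrac{4\lambda_k}{\eta}\E[\gC_k] + \tfrac{4\lambda_k^2}{\eta^2}D^2 \;\leq\; \tfrac{4\lambda_k^2}{\eta^2}D^2 \;=\; \tfrac{256\,L^2 D^2}{(k+4)^2}.
\end{equation*}
Taking square roots and applying Jensen once more delivers the advertised bound. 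For the complexity, achieving an $\varepsilon$-residual requires $K = \mathcal{O}(LD/\varepsilon)$ iterations. The expected oracle calls per iteration are $p_{k+1}n + (1-p_{k+1})\cdot 2\lceil\sqrt{n}\rceil$, which summed over $k \leq \sqrt{n}$ (with $p_{k+1}\sim 1/k$) contributes $\widetilde{\mathcal{O}}(n)$, and over $k > \sqrt{n}$ (where $p_{k+1}n = \mathcal{O}(\sqrt{n})$) contributes $\mathcal{O}(\sqrt{n}\,K)$, for a total of $\widetilde{\mathcal{O}}(n + \sqrt{n}LD/\varepsilon)$.

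The main obstacle is recognizing that the half-step initialization with size $\tau = \eta/(2\lambda_1)$ is chosen precisely so that the cross terms cancel to produce $\gC_1 = \tau\|A\|^2 - \innp{A,B}$ and simultaneously $\tau \leq 1/L$, ensuring cocoercivity drives $\gC_1 \leq 0$. Without this, the positive term $\tfrac{4\lambda_k}{\eta}\E[\gC_k]$ in the residual bound would, through the $c_1 = \Omega(\sqrt{n})$ prefactor of the variance component, introduce an avoidable $\sqrt{n}$ factor into the rate; everything else is a mechanical combination of Lemma~\ref{lem: cocoercive_decrease}, monotonicity, and Young's inequality.
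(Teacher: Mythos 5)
Your proof is correct and follows essentially the same route as the paper: iterate Lemma~\ref{lem: cocoercive_decrease} to get $\E[\gC_k]\leq 0$, show $\gC_1\leq 0$ via the algebraic cancellation coming from the choice $\tau=\eta/(2\lambda_1)$ together with $\frac1L$-cocoercivity of $F$, lower-bound the inner product in $\gC_k$ using monotonicity and Cauchy--Schwarz against $\vu_*$, and account for the expected per-iteration sample cost. The one cosmetic difference is at the end: you insert a Young-inequality step to split $D\,\E[R_k]$ before bounding, whereas the paper completes the square and solves the quadratic $\tfrac{\eta(k+4)}{4}x^2\leq Dx$ in $x=(\E[R_k^2])^{1/2}$ directly; both yield the identical constant $16$.
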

Observe that we use a batch size $|\gS_k| =\lceil \sqrt{n}\rceil$ to obtain our improvement from the employed variance reduction strategy,
which is a  common practice for stochastic algorithms with residual guarantees~\citep{cai2022stochastic,lee2021fast}. Prior work~\citep{pethick2023solving} that avoids a large batch size requires $\gO(\varepsilon^{-4})$ complexity and only provides residual guarantees on the best iterate. We also argue that in the finite-sum case, there is no inherent disadvantage of using $\mathcal{O}(\sqrt{n})$ samples in every iteration since we provably show that this leads to a better dependence on $n$ in the final oracle complexity compared to deterministic algorithms, which would use $n$ samples every iteration.

To compare with prior results, we first note that deterministic Halpern iteration for constrained VIs with cocoercive operators yields $\widetilde{\mathcal{O}}(nL_F\varepsilon^{-1})$ complexity~\citep{diakonikolas2020halpern}, for which our result in Theorem~\ref{th: theorem_cocoercive} replaces $nL_F$ with $\sqrt{n}L$ and can provide improvements up to $\sqrt{n}$ depending on the relationship between $L$ and $L_F$ (see examples in~\citet{palaniappan2016stochastic,carmon2019variance,alacaoglu2021stochastic}). Moreover, compared to complexity results $\mathcal{O}(L_F\varepsilon^{-3})$ and $\widetilde{\mathcal{O}}(L_F\varepsilon^{-2})$ for algorithms developed for the infinite-sum stochastic settings in~\citet{cai2022stochastic,chen2022near}, we provide improvements in the regime $\varepsilon=o({1}/{\sqrt{n}})$, assuming $L_F \approx L$.

An important implication of cocoercive inclusions is for the finite-sum minimization where variance reduction has been studied extensively. The state-of-the-art results with direct algorithms are due to~\cite{lan2019unified} and~\cite{song2020variance}, and they provide oracle complexity $\widetilde{\mathcal{O}}(n+\sqrt{nL\varepsilon^{-1}}\|\vu_0 - \vu_*\|)$ for the objective suboptimality. This result can be translated to the norm of the prox-mapping to get $\mathbb{E}[\mathrm{Res}_{F+G}(\vu_{\text{out}})]\leq\varepsilon$ 
with complexity $\widetilde{\mathcal{O}}(n+\sqrt{n}L\|\vu_0-\vu_*\|\varepsilon^{-1})$ which is the same as our result when specified to the case $F=\nabla f$ for a smooth convex function $f$ and $G=\partial g$ for a regularizer $g$. Our results seem to provide the best-known guarantees (among direct approaches) with a single-loop algorithm for the first time. Single-loop versions of Katyusha of~\cite{allen2017katyusha} were studied in~\cite{kovalev2020don} and~\cite{qian2021svrg}, albeit without guarantees for the non-strongly convex case. It is worth noting that this complexity is not optimal for smooth convex minimization and has been improved for unconstrained minimization or with indirect algorithms~\citep{zhou2022practical,allen2018make}.

\section{Algorithm and Analysis in the Monotone Lipschitz Case}\label{sec:lipschitz}
In this section, we consider the more standard setting where $F$ is monotone and $L_Q$-expected Lipschitz for an oracle distribution $Q$. We note that our results apply for general sampling distributions $Q$ under which Assumption \ref{asp: asp2} holds; for concrete examples of beneficial non-uniform sampling distributions, see Remark~\ref{rmk:matix-game}. 
We omit the subscript and denote $L = L_Q$ for brevity in this section, since the context is clear. To obtain the desired  complexity, we make use of the resolvent operator $J_{\eta(F + G)}(\vu)$ for some fixed $\eta > 0$ (specified later in this section). In particular, we adapt the stochastic Halpern iteration to  the following single-valued and cocoercive operator 
\begin{equation*}
    P^\eta(\vu) := \vu - J_{\eta(F + G)}(\vu).
\end{equation*} 
Indeed, for any $\eta > 0$, finding a point $\vu$ such that $\E[\|P^\eta(\vu)\|] \leq \eta \varepsilon$ is sufficient to approximate~\eqref{eq: prob1} of $F + G$, as summarized in the following proposition with the proof deferred to Appendix~\ref{appx:lipschitz}. 
\begin{restatable}{proposition}{pcTores}
\label{prop: pc_to_res}
For any fixed $\eta > 0$, let $P^\eta(\vu) = \vu - J_{\eta(F + G)}(\vu)$. If $\|P^\eta(\vu)\| \leq \eta\varepsilon$ for some $\varepsilon > 0$, then we have $\mathrm{Res}_{F + G}(\Bar{\vu}) \leq \varepsilon$ with $\Bar{\vu} = \vu - P^\eta(\vu) = J_{\eta(F + G)}(\vu)$.
\end{restatable}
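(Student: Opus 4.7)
The plan is to prove this by directly unpacking the defining inclusion of the resolvent $J_{\eta(F+G)}$. Recall that since $F + G$ is maximally monotone (Assumption~\ref{asp: asp1} together with the sum rule for maximal monotone operators), the resolvent $J_{\eta(F+G)}$ is single-valued, so $\bar{\vu} = J_{\eta(F+G)}(\vu)$ is unambiguously defined, and the inclusion
\[
\frac{1}{\eta}\bigl(\vu - \bar{\vu}\bigr) \in (F+G)(\bar{\vu})
\]
holds by the very definition of the resolvent.

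Next, I would use the fact that $F$ is single-valued while $G$ is multi-valued to extract a specific selection. The inclusion above means there exists some $g(\bar{\vu}) \in G(\bar{\vu})$ such that $F(\bar{\vu}) + g(\bar{\vu}) = \frac{1}{\eta}(\vu - \bar{\vu})$. Substituting the definition $P^\eta(\vu) = \vu - \bar{\vu}$ (so $\bar{\vu} = \vu - P^\eta(\vu)$, which is consistent with the statement) gives
\[
F(\bar{\vu}) + g(\bar{\vu}) = \tfrac{1}{\eta} P^\eta(\vu).
\]
This is the key identity: the operator residual at $\bar{\vu}$ is, up to the factor $1/\eta$, exactly the fixed-point residual $P^\eta(\vu)$.

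From here the conclusion is immediate by taking norms and invoking the hypothesis $\|P^\eta(\vu)\| \leq \eta\varepsilon$:
\[
\mathrm{Res}_{F+G}(\bar{\vu}) = \|F(\bar{\vu}) + g(\bar{\vu})\| = \tfrac{1}{\eta}\|P^\eta(\vu)\| \leq \varepsilon.
\]
Strictly speaking, $\mathrm{Res}_{F+G}$ as defined in~\eqref{eq:res} depends on a choice of $g(\bar{\vu}) \in G(\bar{\vu})$, and here we use the canonical selection produced by the resolvent computation, which is precisely the one an algorithm evaluating $J_{\eta(F+G)}$ has access to; this matches the convention used throughout the paper.

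There is no real obstacle: the statement is essentially a one-line consequence of the resolvent identity, and its role in the paper is to justify using $\|P^\eta(\cdot)\|$ (with $P^\eta$ cocoercive, enabling the inexact Halpern analysis of Section~\ref{sec:lipschitz}) as a surrogate for the residual $\mathrm{Res}_{F+G}$. The only point requiring mild care is making sure the selection $g(\bar{\vu})$ used to define the residual is the one induced by the resolvent, so that the equality, rather than only an inequality via $\mathrm{dist}(\mathbf{0}, F(\bar{\vu}) + G(\bar{\vu}))$, holds.
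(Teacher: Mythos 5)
Your proof is correct and follows essentially the same route as the paper's: unpack the resolvent inclusion $\frac{1}{\eta}(\vu - \bar\vu) \in (F+G)(\bar\vu)$, identify the induced selection $\bar\vg \in G(\bar\vu)$ so that $F(\bar\vu)+\bar\vg = \frac{1}{\eta}P^\eta(\vu)$, and take norms. The clarifying remarks about which $G$-selection realizes the residual, and about single-valuedness of the resolvent, are accurate and make explicit a point the paper leaves implicit.
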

This standard proposition gives us a simple way to convert the guarantees on $\|P^\eta(\vu)\|$ to the residual norm~\eqref{eq:res} conceptually, and we later provide a computable approximation for $\bar \vu$. If the resolvent and thus $P^\eta$ can be computed exactly, then~\eqref{eq: prob1} can be solved by the standard, deterministic Halpern iteration, as $P^\eta(\vu)$ is nonexpansive. 
However, the exact evaluation of resolvent operators only happens in special cases, and even for those cases, the computation is usually prohibitive when $n$ is large for the operator $F$. Instead, 
one can efficiently approximate the resolvent by solving a finite-sum strongly monotone VI problem, for which we provide more details in Section~\ref{sec:resolvent}. 

{In the rest of the section, we provide an overview of the analysis and main technical results. Due to space constraints, the proofs are deferred to Appendix \ref{appx:lipschitz}.}

\subsection{Inexact Halpern Iteration with Stochastic Error}\label{sec:inexact-halpern}
Denoting the resolvent approximation by $\widetilde{J}_{\eta(F + G)}$, we use the inexact Halpern iteration as follows 
\begin{equation}\label{eq:inexact-halpern}
\begin{aligned}
\vu_{k + 1} = \lambda_k\vu_0 + (1 - \lambda_k)\widetilde {J}_{\eta(F + G)}(\vu_k) = \lambda_k\vu_0 + (1 - \lambda_k)(\vu_k - P^\eta(\vu_k)) - (1 - \lambda_k)\ve_k, 
\end{aligned}
\end{equation}
where $\ve_k = J_{\eta(F + G)}(\vu_k) - \widetilde{J}_{\eta(F + G)}(\vu_k)$ is the approximation error. To efficiently compute $\widetilde{J}_{\eta(F + G)}$ to a certain accuracy, we use the variance-reduced forward-reflected-backward method ($\mathtt{VR-FoRB}$, Alg.~\ref{alg: forb_vr_sc}) proposed in~\citet{alacaoglu2021stochastic} as our subsolver for each iteration. We summarize our approach in Alg.~\ref{alg:monotone}, and defer our detailed discussion of $\mathtt{VR-FoRB}$ to Section~\ref{sec:resolvent}.

\begin{algorithm}[h]
\caption{Inexact Halpern iteration with $\mathtt{VR{-}FoRB}$}
\begin{algorithmic}
    \STATE {\bfseries Input:} $\vu_0\in\mathbb{R}^d$, $L=L_Q$ with the distribution $Q=\{q_i\}_{i=1}^n$, $n$, $\eta$ \\
    \vspace{.2cm}
    \FOR{$k = 0, 1, 2,\ldots $}
        \STATE $\lambda_k = \frac{1}{k + 2}$, $M_k = \big\lceil 56\max\big\{n, \sqrt{n}(\eta L + 1)\big\}\log\big(1.252(k + 2)\big)\big\rceil$ \\
        \STATE $\widetilde{J}_{\eta(F + G)}(\vu_k) = \mathtt{VR{-}FoRB}(\vu_k, M_k, \text{Id}+\eta(F+G)-\vu_k, Q)$ \\
        \STATE $\vu_{k + 1} = \lambda_k\vu_0 + (1 - \lambda_k)\widetilde J_{\eta(F + G)}(\vu_k)$
        \ENDFOR
      \end{algorithmic}
\label{alg:monotone}
\end{algorithm}
Halpern iteration with inexact resolvent computation has been shown to maintain $\gO(1/k)$ convergence rate in \emph{deterministic} settings~\citep{diakonikolas2020halpern}, provided the approximation error $\|\ve_k\|$ is sufficiently small. 
The critical difference is that we can no longer use the stopping criterion for the inner algorithm therein, due to the randomized nature of $\mathtt{VR{-}FoRB}$. We also observe that their inexactness criterion $\|\ve_k\| \leq \frac{\varepsilon}{4k(k+1)}$ for each iteration $k$ requires a pre-fixed accuracy $\varepsilon$ and also usually leads to the bound on the number of inner iterations dependent on $J_{\eta(F + G)}(\vu_k)$ which is not feasible empirically. The latter is simply because the initial distance to the solution of the subproblem appears in the complexity bound (see, e.g.,\ Theorem~\ref{thm:strongly-monotone}). To overcome these issues, we use a different inexactness criterion that $\E_k[\|\ve_k\|^2] \leq \frac{\|P^\eta(\vu_k)\|}{(k + 2)^8}$, conditional on the algorithm randomness up to iteration $k$. Such a criterion can be guaranteed to hold by setting the number of inner iterations to be sufficiently high yet only dependent on known constants, using the convergence results of $\mathtt{VR{-}FoRB}$ from Section~\ref{sec:resolvent}. We summarize these results in the following theorem. 

\begin{restatable}{theorem}{thmMono}
\label{thm:monotone}
    Let Assumptions~\ref{asp: asp1}~and~\ref{asp: asp2} hold. 
    Then, for the iterates $\vu_k$ of Algorithm~\ref{alg:monotone}, we have $\E_k[\|\ve_k\|^2] \leq \frac{\|P^\eta(\vu_k)\|}{(k + 2)^8}$ conditional on the algorithm randomness up to iteration $k$, and 
\begin{align*}
    \E[\|P^\eta(\vu_k)\|] \leq \big(\E[\|P^\eta(\vu_k)\|^2]\big)^{1/2} \leq \frac{7L\|\vu_0 - \vu_*\|}{k}.
\end{align*}
Moreover, given accuracy $\varepsilon > 0$, to return a point $\vu_K$ such that $\E[\|P^\eta(\vu_K)\|]
\leq \eta\varepsilon$ with $\eta = \frac{\sqrt{n}}{L}$, the stochastic oracle complexity is $\Tilde{\gO}\big(n + \frac{\sqrt{n}L\|\vu_0 - \vu_*\|}{\varepsilon}\big)$.
\end{restatable}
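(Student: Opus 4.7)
The argument splits into three parts: certifying the stochastic inexactness criterion, running an inexact Halpern recursion in expectation, and summing the oracle costs.

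\emph{Inexactness criterion.} Computing $J_{\eta(F+G)}(\vu_k)$ is equivalent to solving the subproblem $\mathbf{0}\in (\mathrm{Id}-\vu_k)+\eta(F+G)(\cdot)$, which is $1$-strongly monotone and whose single-valued part $\mathrm{Id}+\eta F-\vu_k$ is $(\eta L+1)$-Lipschitz in expectation under Assumption~\ref{asp: asp2}. Applying the linear convergence guarantee for $\mathtt{VR{-}FoRB}$ proved in Section~\ref{sec:resolvent} to this subproblem, initialized at $\vu_k$, yields a contraction of the squared iterate error by a constant factor per $\mathcal{O}(\max\{n,\sqrt{n}(\eta L+1)\})$ oracle calls, with initial squared error equal to $\|\vu_k-J_{\eta(F+G)}(\vu_k)\|^2=\|P^\eta(\vu_k)\|^2$. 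The prescribed $M_k=\lceil 56\max\{n,\sqrt{n}(\eta L+1)\}\log(1.252(k+2))\rceil$ therefore delivers a $(k+2)^{-c}$ multiplicative decay of the squared error for sufficiently large $c$; the stated bound $\E_k[\|\ve_k\|^2]\leq \|P^\eta(\vu_k)\|/(k+2)^8$ then follows using the a priori deterministic bound $\|P^\eta(\vu_k)\|\leq\|\vu_0-\vu_*\|$ coming from firm nonexpansiveness of $J_{\eta(F+G)}$.

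\emph{Inexact Halpern recursion.} I would adapt the deterministic inexact Halpern analysis of \citet{diakonikolas2020halpern}, which exploits that $P^\eta=\mathrm{Id}-J_{\eta(F+G)}$ is $1$-cocoercive, using a potential of the form $\mathcal{C}_k=A_k\|P^\eta(\vu_k)\|^2+B_k\langle P^\eta(\vu_k),\vu_k-\vu_0\rangle$ with $A_k,B_k$ chosen so that the exact case already gives $\mathcal{C}_{k+1}\leq(1-\lambda_k)\mathcal{C}_k$ for $\lambda_k=1/(k+2)$. Substituting~\eqref{eq:inexact-halpern} introduces terms linear and quadratic in $\ve_k$; taking conditional expectation $\E_k[\cdot]$ and applying Young's inequality on the cross term $\langle P^\eta(\vu_k),\ve_k\rangle$ (whose conditional mean need not vanish) yields a recursion of the form $\E_k[\mathcal{C}_{k+1}]\leq (1-\lambda_k)\mathcal{C}_k+\gamma_k\E_k[\|\ve_k\|^2]$ with $\gamma_k=\mathrm{poly}(k)$. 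Combining with the previous part, the error term becomes summable once weighted by $\prod_j(1-\lambda_j)^{-1}=\Theta(k^2)$, so unwinding gives $\E[\mathcal{C}_k]=\mathcal{O}(\|\vu_0-\vu_*\|^2)$ and therefore $\E[\|P^\eta(\vu_k)\|^2]\leq C\|\vu_0-\vu_*\|^2/k^2$ with an explicit constant; Jensen's inequality then provides the first-moment bound.

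\emph{Oracle complexity.} With $\eta=\sqrt{n}/L$, the target $\E[\|P^\eta(\vu_K)\|]\leq\eta\varepsilon=\sqrt{n}\varepsilon/L$ is reached as soon as $K=\widetilde{\mathcal{O}}(L\|\vu_0-\vu_*\|/(\sqrt{n}\varepsilon))$ by the previous step. At this $\eta$ we have $\eta L+1=\Theta(\sqrt{n})$, so $M_k=\widetilde{\mathcal{O}}(n)$ for every $k$, and $\sum_{k=0}^{K-1}M_k=\widetilde{\mathcal{O}}(nK)=\widetilde{\mathcal{O}}(n+\sqrt{n}L\|\vu_0-\vu_*\|/\varepsilon)$, matching the claim. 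The main obstacle will be in the Halpern recursion step: the inexactness criterion controls $\E_k[\|\ve_k\|^2]$ by a random quantity $\|P^\eta(\vu_k)\|$, creating a potential circularity with the potential $\mathcal{C}_k$ being controlled. I would resolve this by invoking the deterministic a priori bound $\|P^\eta(\vu_k)\|\leq\|\vu_0-\vu_*\|$ to convert the stochastic criterion into a deterministic summable error, and by using the aggressive $(k+2)^{-8}$ factor together with Cauchy--Schwarz to absorb the nonvanishing conditional cross term.
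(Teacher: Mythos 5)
Your three-part plan (certify the stochastic inexactness from $\mathtt{VR{-}FoRB}$, push an inexact Halpern potential recursion, sum oracle costs) matches the paper's structure, and your potential $A_k\|P^\eta(\vu_k)\|^2+B_k\langle P^\eta(\vu_k),\vu_k-\vu_0\rangle$ is a time-rescaled version of the paper's $\gC_k=\frac{k(k+1)}{2}\|P^\eta(\vu_k)\|^2+(k+1)\langle P^\eta(\vu_k),\vu_k-\vu_0\rangle$. The oracle-count paragraph is also essentially correct. One small note: the criterion the paper actually proves and uses is the squared version $\E_k[\|\ve_k\|^2]\leq \|P^\eta(\vu_k)\|^2/(k+2)^8$ (the theorem statement drops a square); this follows directly from initializing $\mathtt{VR{-}FoRB}$ at $\vu_k$, so the initial sub-problem error is exactly $\|P^\eta(\vu_k)\|$, and no a~priori bound on $\|P^\eta(\vu_k)\|$ is needed for that step at all.

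The genuine gap is the claimed ``deterministic a~priori bound $\|P^\eta(\vu_k)\|\le\|\vu_0-\vu_*\|$,'' which is the cornerstone of your plan to break the circularity. That bound is false for the \emph{inexact} iteration. For exact Halpern one does get $\|\vu_{k+1}-\vu_*\|\le\lambda_k\|\vu_0-\vu_*\|+(1-\lambda_k)\|\vu_k-\vu_*\|$ by nonexpansiveness and can telescope to keep $\|\vu_k-\vu_*\|\le\|\vu_0-\vu_*\|$. With inexactness the same recursion picks up an extra $(1-\lambda_k)\|\ve_k\|$, and $\ve_k$ admits no deterministic uniform bound: it is the output error of the \emph{randomized} subsolver, and on unlucky sample paths $\|\ve_k\|$ can be arbitrarily large. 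So there is no deterministic control of $\|\vu_k-\vu_*\|$, hence none of $\|P^\eta(\vu_k)\|$. The paper's proof closes this hole with a dedicated lemma (Lemma~\ref{lemma:uk-rec}): assuming only the \emph{relative} stochastic criterion $\E_k[\|\ve_k\|^2]\le\|P^\eta(\vu_k)\|^2/(k+2)^8$, it unrolls the triangle-inequality recursion, squares, takes expectations, substitutes $\|P^\eta(\vu_i)\|^2\le 4\|\vu_i-\vu_*\|^2$, and then runs an induction to establish $\E[\|\vu_k-\vu_*\|^2]\le 2\|\vu_0-\vu_*\|^2$. That in-expectation stability bound is what feeds the Young-inequality step in the potential recursion (Lemma~\ref{lemma:eck-change}), where the cross term with $\ve_k$ leaves behind terms $\propto\E[\|\vu_k-\vu_*\|^2]$ and $\E[\|\vu_{k+1}-\vu_*\|^2]$ that must be uniformly bounded for the $(k+2)^{-2}$ summability to go through. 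Without this lemma (or an equivalent in-expectation stability argument), your plan has an uncontrolled $\E[\|\vu_k-\vu_*\|^2]$ sitting in the error term, and the circularity you yourself flag is not actually resolved.
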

The final step is to characterize the precise point at which we have the residual guarantees. 
\begin{restatable}{corollary}{postprocess}
\label{cor: postprocess}
    Let Assumptions~\ref{asp: asp1} and~\ref{asp: asp2} hold and let $\vu_K$ be as defined in Theorem~\ref{thm:monotone}.
    Then, for $\vu_{\mathrm{out}}=\mathtt{VR{-}FoRB}(\vu_K, \lceil 42(n+\sqrt{n})\log (19n) \rceil, \textup{Id} + \eta(F + G) - \vu_K, Q)$ with $\eta = \frac{\sqrt{n}}{L}$,
    \begin{equation}\notag
        \mathbb{E}[\mathrm{Res}_{F+G}(\vu_{\mathrm{out}})] \leq2\varepsilon.
    \end{equation}
     The total stochastic oracle complexity for producing $\vu_{\mathrm{out}}$ is $\widetilde{\mathcal{O}}\left(n+\frac{\sqrt{n}L\|\vu_0 - \vu_*\|}{\varepsilon} \right)$.
\end{restatable}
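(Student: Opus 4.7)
The plan is to bound $\mathrm{Res}_{F+G}(\vu_{\mathrm{out}})$ by comparing $\vu_{\mathrm{out}}$ to the \emph{exact} resolvent output $\Bar{\vu} := J_{\eta(F + G)}(\vu_K)$, which serves as the ideal postprocessing target. Proposition~\ref{prop: pc_to_res} together with Theorem~\ref{thm:monotone} already controls the residual at $\Bar{\vu}$, so the remaining task is to quantify the perturbation incurred by replacing $\Bar{\vu}$ with the $\mathtt{VR{-}FoRB}$ approximation on the strongly monotone subproblem $0 \in \eta(F+G)(\vu) + \vu - \vu_K$, whose unique solution is $\Bar{\vu}$.

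First, taking the specific subgradient $\bar g := \frac{1}{\eta}(\vu_K - \Bar{\vu}) - F(\Bar{\vu}) \in G(\Bar{\vu})$ implied by the resolvent definition, one has $\|F(\Bar{\vu}) + \bar g\| = \frac{1}{\eta}\|P^\eta(\vu_K)\|$, so the bound $\mathbb{E}[\|P^\eta(\vu_K)\|] \leq \eta\varepsilon$ from Theorem~\ref{thm:monotone} immediately yields $\mathbb{E}[\mathrm{Res}_{F + G}(\Bar{\vu})] \leq \varepsilon$. Next, since the subproblem operator $\vu \mapsto \eta(F+G)(\vu) + \vu - \vu_K$ is $1$-strongly monotone and $(\eta L + 1)$-expected-Lipschitz with $\eta = \sqrt{n}/L$ implying $\eta L + 1 = \mathcal{O}(\sqrt{n})$, I would invoke the linear convergence guarantee for $\mathtt{VR{-}FoRB}$ on strongly monotone inclusions (the same result already invoked inside the proof of Theorem~\ref{thm:monotone}), whose per-epoch contraction factor is exponentially small in $M/(n + \sqrt{n}(\eta L + 1))$. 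The postprocessing budget $M = \lceil 42(n+\sqrt{n})\log(19n)\rceil$ is tuned so that $\mathbb{E}[\|\vu_{\mathrm{out}} - \Bar{\vu}\|^2 \mid \vu_K] \leq \|\vu_K - \Bar{\vu}\|^2/n^2$, and taking full expectation together with Jensen's inequality then gives $\mathbb{E}[\|\vu_{\mathrm{out}} - \Bar{\vu}\|] \leq \eta\varepsilon/n$.

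The last ingredient is a concrete $g_{\mathrm{out}} \in G(\vu_{\mathrm{out}})$ produced by the final resolvent step inside $\mathtt{VR{-}FoRB}$, paired with the triangle-inequality decomposition
\[
\mathrm{Res}_{F+G}(\vu_{\mathrm{out}}) \leq \|F(\Bar{\vu}) + \bar g\| + \|F(\vu_{\mathrm{out}}) - F(\Bar{\vu})\| + \|g_{\mathrm{out}} - \bar g\|.
\]
The middle term is bounded by $L_F\|\vu_{\mathrm{out}} - \Bar{\vu}\|$ via Assumption~\ref{asp: asp1}. For the third term, both $g_{\mathrm{out}}$ and $\bar g$ arise from (possibly inexact) resolvent steps, so their difference decomposes into a deterministic piece of order $\|\vu_{\mathrm{out}} - \Bar{\vu}\|/\eta$ plus a stochastic-estimator error that the inner iteration count $M$ is chosen large enough to absorb. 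Combining the three bounds with $\eta = \sqrt{n}/L$ yields $\mathbb{E}[\mathrm{Res}_{F+G}(\vu_{\mathrm{out}})] \leq \varepsilon + \mathcal{O}(\varepsilon/\sqrt{n}) \leq 2\varepsilon$. The total oracle complexity follows by adding $\widetilde{\mathcal{O}}(n + \sqrt{n})$ for postprocessing to the $\widetilde{\mathcal{O}}(n + \sqrt{n}L\|\vu_0 - \vu_*\|/\varepsilon)$ bound from Theorem~\ref{thm:monotone}.

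The main obstacle is controlling $\|g_{\mathrm{out}} - \bar g\|$: because $G$ is only maximally monotone and set-valued, one cannot invoke a Lipschitz property, and must instead exhibit the specific subgradient produced by the last step of $\mathtt{VR{-}FoRB}$ and then exploit the variance-reduced forward-reflected-backward recursion, along with its stochastic inexactness analysis, to bound the distance to $\bar g$. The remaining pieces---Lipschitz control of $F$, linear contraction on a strongly monotone inclusion, and Jensen-type conversion between $L^1$ and $L^2$ bounds in expectation---are essentially routine given the earlier results in the excerpt.
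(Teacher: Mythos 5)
Your high-level strategy---exhibit the subgradient $g_{\mathrm{out}}\in G(\vu_{\mathrm{out}})$ produced by the final resolvent step inside $\mathtt{VR{-}FoRB}$ and then bound the residual by comparison with the exact target $J_{\eta(F+G)}(\vu_K)$---matches the spirit of the paper's argument, and you correctly isolate control of $\|g_{\mathrm{out}}-\bar g\|$ as the crux. Where your proposal goes off the rails is in the scale of that term. The resolvent step inside $\mathtt{VR{-}FoRB}$ is $J_{\tau B}$ with $B=\eta G$, so the implicit subgradient relation reads $\eta g_{t+1}=\tau^{-1}\big(\hat\vv_t-\tau\tilde A(\vv_t)-\vv_{t+1}\big)$, and the prefactor is $\tau^{-1}=\frac{2L_A}{\sqrt{p(1-p)}}\approx 2n(\sqrt{n}+1)/\sqrt{n-1}$, \emph{not} $\eta^{-1}$. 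Thus the ``deterministic piece'' you claim is ``of order $\|\vu_{\mathrm{out}}-\bar\vu\|/\eta$'' is in fact of order $\|\vu_{\mathrm{out}}-\bar\vu\|/\tau$, a factor roughly $\sqrt{n}$ large compared to your estimate. You still land on a target accuracy $\bar\varepsilon\approx \eta\varepsilon/n$ that is in the right ballpark, but that agreement is coincidental rather than following from your stated reasoning; the paper's choice $\bar\varepsilon=\eta\varepsilon/(2\tau^{-1}+3\eta L+3)$ is dictated precisely by the $\tau^{-1}$ scaling and the Lipschitz terms.

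Your proposal also omits a substantive step. Writing out $\eta F(\vv_{t+1})+\eta g_{t+1}$ via the $\mathtt{VR{-}FoRB}$ update (as the paper does) produces terms involving $\vv_{t+1}-\vv_t$, $\vv_{t+1}-\vw_t$, $\vv_t-\vw_{t-1}$, and $\vw_t-\vw_{t-1}$ in addition to $\vv_{t+1}-\vu_K^*$. To make these small you must go back into the proof of Theorem~\ref{thm:strongly-monotone} and \emph{retain} the discarded $\tfrac{p}{2}\E\|\vv_t-\vw_{t-1}\|^2$ and $\tfrac{1-p}{2}\E\|\vv_{t+1}-\vv_t\|^2$ terms to obtain a simultaneous certificate $\E\|\vv_{t+1}-\vu_K^*\|^2,\,\E\|\vv_{t+1}-\vv_t\|^2,\,\E\|\vv_t-\vw_{t-1}\|^2\leq\bar\varepsilon^2$. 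Theorem~\ref{thm:strongly-monotone} as stated only controls $\E\|\vv_M-\vv_*\|^2$, so ``invoking the linear convergence guarantee for $\mathtt{VR{-}FoRB}$'' is not a self-contained citation---a strengthened version is needed, and this is where the constants $42$ and $\log(19n)$ come from. Finally, the paper avoids your three-term triangle inequality through $(\bar\vu,\bar g)$ altogether: rather than comparing $g_{\mathrm{out}}$ to $\bar g$ (which as you note is set-valued and has no Lipschitz handle), it directly rearranges the update relation to express $\eta F(\vv_{t+1})+\eta g_{t+1}$ exactly as a sum of iterate differences and Lipschitz increments, then bounds each by $\bar\varepsilon$. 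That is cleaner and sidesteps the issue you flag as ``the main obstacle'' without ever needing $\bar g$ explicitly.
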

\begin{remark}\label{rmk:matix-game}
Non-uniform sampling $Q$ can be beneficial in terms of lowering the Lipschitz constant $L_Q$ and thus the overall algorithm complexity. As a specific example, consider the matrix game 
\begin{equation}\notag 
\min_{\vx \in \sR^{m_1}} \max_{\vy \in \sR^{m_2}} \innp{\mA\vx, \vy} + \delta_{\Delta^{m_1}}(\vx) + \delta_{\Delta^{m_2}}(\vy)
\end{equation}
for $\mA \in \mathbb{R}^{m_1 \times m_2}$, the simplices $\Delta^{m_1}$, $\Delta^{m_2}$, where $\delta$ is the indicator function. With $\vu=\binom{\vx}{\vy} \in \sR^{m_1 + m_2}$, we have $F(\vu)=\binom{\mA^\top \vy}{-\mA \vx}$ and $G(\vu) = \binom{\partial \delta_{\Delta^{m_1}}(\vx)}{\partial \delta_{\Delta^{m_2}}(\vy)}$. Denote the $i$-th row and the $j$-th column of $\mA$ by $\mA_{i:}$ and $\mA_{:j}$, respectively. Let $\|\cdot\|_2$ be the operator norm for a matrix, and $\|\cdot\|_F$ be its Frobenius norm. 
Consider the standard importance sampling for $Q$, i.e., we sample $\xi=(i, j)\sim Q$ such that 
\begin{equation*}
    F_\xi(\vu) = \binom{\frac{1}{q_i^{(1)}}\mA_{i:}\vy_i}{-\frac{1}{q_j^{(2)}}\mA_{:j}\vx_j}, ~~~ \mathbb{P}_Q[\xi = (i, j)] = q_i^{(1)}q_j^{(2)},~~q_i^{(1)} = \frac{\|\mA_{i:}\|^2_2}{\|\mA\|^2_F}, ~~~ q_j^{(2)} = \frac{\|\mA_{:j}\|^2_2}{\|\mA\|^2_F}.
\end{equation*}
It is easy to verify that $\E_{\xi \sim Q}[F_\xi(\vu)] = F(\vu)$, and we have $L_Q = \|A\|_F$ while $L_F=\|A\|_2$. Since it is possible for $\|A\|_F$ and $\|A\|_2$ to be of the same order, in those cases the improvement from the variance reduction approaches (including ours) is of the order $\sqrt{\frac{2m_1 m_2}{m_1 + m_2}}$ (order $\sqrt{m_1}$ for square matrices). Similar conclusions can be drawn more generally for linearly constrained nonsmooth convex optimization problems; see \cite[Section 4]{alacaoglu2021stochastic} and App.~\ref{subsec: rmk} for details.
\end{remark}

{ In addition to complexity bounds for the expected residual, our results also have a direct consequence for high probability guarantees. In particular, since our result clearly implies  $\mathrm{Res}_{F+G}(\vu_{\text{out}})\leq\varepsilon$ with a constant probability by Markov's inequality and since the residual is computable, one can run the algorithm multiple times and monitor the residual, to obtain a high probability guarantee with logarithmic dependence on the confidence level. See, for example,~\cite{zhou2022practical, allen2018make} where such a confidence boosting mechanism is used in a similar context, for minimization.}

A few other remarks are in order here. First, our results imply the gap guarantees in prior work~\citep{alacaoglu2021stochastic,carmon2019variance}, while the algorithms therein are bound to be suboptimal for the residual since they reduce to the exragradient algorithm in the case $n=1$, which is known to be suboptimal for the residual guarantee~\citep{golowich2020last}. Moreover, residual guarantees for these variance reduced algorithms are currently unknown. The implication to gap guarantees also ensures the near-optimality of our results since such complexity is known to be unimprovable for the gap guarantees~\citep {alacaoglu2021stochastic,han2021lower}. 

Next, compared to  deterministic algorithms with $\Tilde{\mathcal{O}}( nL_F\varepsilon^{-1} )$ complexity for the residual~\citep{diakonikolas2020halpern,Yoon2021OptimalGradientNorm}, we replace $nL_F$ with $\sqrt{n}L$. This brings improvements in important cases discussed in~\citet{palaniappan2016stochastic,carmon2019variance,alacaoglu2021stochastic}, including linearly constrained problems and matrix games discussed in Remark \ref{rmk:matix-game}. 
This mirrors the recent improvements for the duality gap guarantees for VIs where~\citet{alacaoglu2021stochastic,carmon2019variance} showed $O(n+\sqrt{n}L\varepsilon^{-1})$ complexity instead of $\mathcal{O}(nL_F\varepsilon^{-1})$ of deterministic methods~\citep{nemirovski2004prox}. 

Finally, we show the extension to the cohypomonotone settings (see \cite[Def. 2.2]{combettes2004proximal}, \cite[Def. 2.3(iii)]{bauschke2021generalized}) defined by the existence of $\rho>0$ such that $\langle F(\vu) - F(\vv), \vu-\vv \rangle \geq -\rho\|F(\vu)-F(\vv)\|^2$ in the following corollary for completeness, with more justfications provided in Appendix~\ref{subsec: postprocess}. { This result provides a better dependence on $n$ compared to previous results with the drawback of a more restrictive bound for $\rho$ (roughly, $\rho< \frac{L}{\sqrt{n}L_F^2}$) and using $G\equiv 0$. See also \citet{kohlenbach2022proximal} who used a similar idea for solving cohypomonotone problems without explicit complexity analysis in terms of first-order oracles.}
\begin{restatable}{corollary}{postproc}
\label{cor: comonotone}[Cohypomonotone]
Assume that $F$ is maximally $\rho$-cohypomonotone and $L$-Lipschitz in expectation and $G\equiv 0$. Then, given $\varepsilon > 0$, Alg.~\ref{alg:monotone} returns a point $\vu_K$ such that $\big(\E[\|P^\eta(\vu_K)\|^2]\big)^{1/2} \leq \eta\varepsilon$ with $\widetilde{O}\Big(\big(n + \sqrt{n}\frac{\eta L + 1}{1-\rho \eta L_F^2}\big)\big(\frac{\|\vu_0 - \vu_*\|}{\eta\varepsilon} + 1\big)\Big)$ stochastic oracle complexity, for any positive $\eta$ such that $\rho < \min\left(\frac{\eta}{2}, \frac{1}{\eta L_F^2}\right)$. With $\eta=\frac{\sqrt{n}}{L}$ as before, this corresponds to $\rho< \min\left(\frac{L}{\sqrt{n}L_F^2}, \frac{\sqrt{n}}{2L}\right)$ and complexity $\widetilde{O}\left(\left(n + \frac{\sqrt{n}L\|\vu_0-\vu_\star\|}{\varepsilon} \right)\frac{1}{1-\frac{\rho \sqrt{n}L_F^2}{L}}\right)$.
\end{restatable}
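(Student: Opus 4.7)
The plan is to extend the analysis of Theorem~\ref{thm:monotone} (the monotone $L$-Lipschitz case) to cohypomonotone $F$ by carefully tracking where monotonicity was used, and showing that the required properties survive under the two smallness conditions on $\rho$. Concretely, I would verify two structural facts: (i) the outer Halpern iteration still converges at rate $O(1/k)$ in $\|P^\eta\|$; and (ii) each subproblem $\vu = \vu_k - \eta F(\vu)$ solved by $\mathtt{VR{-}FoRB}$ is still strongly monotone, with only a mild degradation of its condition number.

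For (i), the key observation is that the operator $\mathrm{Id} + \eta F$ is invertible and $J_{\eta F}$ is nonexpansive whenever $\eta > 2\rho$. This follows from a direct computation: for $\vp_i = J_{\eta F}(\vu_i)$, we have $\vu_i - \vp_i = \eta F(\vp_i)$, so expanding $\|\vu_1-\vu_2\|^2$ and invoking $\rho$-cohypomonotonicity gives
\begin{equation*}
\|\vu_1 - \vu_2\|^2 \;\geq\; \|\vp_1 - \vp_2\|^2 + \eta(\eta - 2\rho)\|F(\vp_1) - F(\vp_2)\|^2,
\end{equation*}
which yields $\|J_{\eta F}(\vu_1) - J_{\eta F}(\vu_2)\| \leq \|\vu_1 - \vu_2\|$ under $\rho < \eta/2$. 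With this in hand, $P^\eta = \mathrm{Id} - J_{\eta F}$ is the complement of a nonexpansive map, and the inexact-Halpern potential-function argument used in Theorem~\ref{thm:monotone} (and in~\citet{diakonikolas2020halpern}) goes through verbatim, yielding $(\mathbb{E}\|P^\eta(\vu_k)\|^2)^{1/2} = O(\|\vu_0 - \vu_*\|/k)$. Thus $K = O(\|\vu_0-\vu_*\|/(\eta\varepsilon))$ outer iterations suffice.

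For (ii), the subproblem operator is $T := \mathrm{Id} + \eta F - \vu_k$. Using $\rho$-cohypomonotonicity of $F$ together with its $L_F$-Lipschitzness,
\begin{equation*}
\langle T(\vu) - T(\vv), \vu - \vv\rangle \;\geq\; \|\vu-\vv\|^2 - \eta\rho\|F(\vu)-F(\vv)\|^2 \;\geq\; (1 - \eta\rho L_F^2)\|\vu-\vv\|^2,
\end{equation*}
so $T$ is strongly monotone with modulus $\mu' = 1 - \eta\rho L_F^2 > 0$ under the second smallness condition $\rho < 1/(\eta L_F^2)$. It is also $(1+\eta L)$-Lipschitz in expectation. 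Plugging $\mu'$ and $(1+\eta L)$ into the convergence guarantee of $\mathtt{VR{-}FoRB}$ (Theorem~\ref{thm:strongly-monotone}) yields a per-outer-iteration oracle cost of $\widetilde{O}\bigl(n + \sqrt{n}\,\tfrac{\eta L + 1}{1 - \rho\eta L_F^2}\bigr)$ to meet the same stochastic inexactness criterion $\mathbb{E}_k[\|\ve_k\|^2] \leq \|P^\eta(\vu_k)\|/(k+2)^8$ used in Theorem~\ref{thm:monotone}. Multiplying inner by outer costs gives the claimed complexity, and setting $\eta = \sqrt{n}/L$ specializes to the second bound in the statement.

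The main obstacle, and where I would spend the most care, is confirming that the $\mathtt{VR{-}FoRB}$ analysis of~\citet{alacaoglu2021stochastic} (as lifted to composite form in the proof of Theorem~\ref{thm:monotone}) actually uses monotonicity of $F$ itself only through the strong monotonicity of $\mathrm{Id}+\eta F$ and the Lipschitzness of $F$; if so, one can simply replace the modulus $1$ by $\mu' = 1-\eta\rho L_F^2$ and Lipschitz constant $L_F$ by its expected counterpart $L$ throughout, with the factor $1/(1-\rho\eta L_F^2)$ appearing as a transparent multiplicative penalty. The post-processing step corresponding to Corollary~\ref{cor: postprocess} need not be invoked since the statement only asks for a bound on $\|P^\eta\|$ rather than on $\mathrm{Res}_{F+G}$; this avoids having to discuss a residual guarantee, which would be subtle in the cohypomonotone regime where $F+G$ is no longer maximally monotone.
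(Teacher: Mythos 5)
Your proposal follows essentially the same approach as the paper: you establish nonexpansiveness of $J_{\eta F}$ under $\rho<\eta/2$, show the subproblem operator $\mathrm{Id}+\eta F - \vu_k$ is $(1-\rho\eta L_F^2)$-strongly monotone using $L_F$-Lipschitzness, and plug the modified modulus into Theorems~\ref{thm:monotone} and~\ref{thm:strongly-monotone} to get the stated complexity. One point to sharpen: your direct expansion of $\|\vu_1-\vu_2\|^2$ yields only nonexpansiveness, whereas single-valuedness and full domain of $J_{\eta F}$ require the \emph{maximal} cohypomonotonicity hypothesis, which is exactly why the paper cites~\citet{bauschke2021generalized} rather than deriving it by hand.
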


\subsection{Randomized Approximation of the Resolvent}\label{sec:resolvent}
Approximating $J_{\eta(F + G)}(\vu^{+})$ for a given $\vu^{+} \in \sR^d$ corresponds to solving the finite-sum MI defined by 
\begin{align}\label{prob:SVI}
    \text{find $\Bar{\vu}$ such that } \quad \mathbf{0} \in 
    \eta F(\Bar{\vu}) + \eta G(\Bar{\vu}) + \Bar{\vu} - \vu^+. 
\end{align}
It is immediate that the solution to~\eqref{prob:SVI} of the operator $\eta (F + G) + \mathrm{Id} - \vu^+$
corresponds to $J_{\eta(F + G)}(\vu^+)$ by the definition of the resolvent. Note that $\eta (F + G) + \mathrm{Id} - \vu^+$ can be represented as a sum of two operators $\Bar{F}^\eta(\vu; \vu^+)$ and $\eta G(\vu)$, where 
\begin{align}\label{eq:barF}
    \Bar{F}^\eta(\vu; \vu^+) := \eta F(\vu) + \vu - \vu^+ = \frac{1}{n}\sum_{i = 1}^n\Bar{F}^\eta_i(\vu; \vu^+), \quad \Bar{F}^\eta_i(\vu; \vu^+) := \eta F_i(\vu) + \vu - \vu^+.
\end{align}
It is simple to verify that $\Bar{F}^\eta(\vu; \vu^+)$ is $1$-strongly monotone and $(\eta L + 1)$-average Lipschitz w.r.t.\ $\vu$; we provide a proof in Appendix~\ref{appx:resolvent} for completeness, and $\eta G$ is still maximally monotone as $\eta > 0$. Hence below we use a more general notation and will set $A(\vu) = \bar F^\eta(\vu; \vu^+)$ and $B=\eta G$. As mentioned before, we use $\mathtt{VR{-}FoRB}$ from~\cite[Algorithm 4]{alacaoglu2021stochastic} to solve~\eqref{prob:SVI}, which we present as Alg.~\ref{alg: forb_vr_sc}. We now state its convergence result under strong monotonicity. 
\begin{algorithm}[h]
\caption{$\mathtt{VR{-}FoRB}(\vu, M, A+B, Q)$~\cite[Algorithm 4]{alacaoglu2021stochastic}}
\begin{algorithmic}
    \STATE {\bfseries Input:} $\vv_0=\vw_0 = \vw_{-1} = \vu$, $p = \frac{1}{n}$, $\alpha = 1 - p$, $\tau = \frac{\sqrt{p(1-p)}}{2L_A}$, distribution $Q=\{q_i\}_{i=1}^n$
    \vspace{.2cm}
    \FOR{$k = 0, 1, \ldots, M - 1$}
        \STATE $\hat \vv_k = \alpha \vv_k + (1-\alpha) \vw_k$ \\
        \STATE Sample $i\in\{1,\dots,n\}$ according to $Q$ \\
        \STATE $\vv_{k+1} = J_{\tau B}(\hat \vv_k - \tau[A(\vw_k) - (nq_i)^{-1}A_i(\vw_{k-1}) + (nq_i)^{-1}A_i(\vv_k)])$\\[1mm]
        \STATE $\vw_{k+1} = \begin{cases}\vv_{k+1} \quad & \text{~w.p.~} p\\
    \vw_k \quad & \text{~w.p.~} 1-p\end{cases}$
        \ENDFOR
      \end{algorithmic}
\label{alg: forb_vr_sc}
\end{algorithm}
\begin{restatable}{theorem}{FoRB}
\label{thm:strongly-monotone}
    Let $A \colon\mathbb{R}^d\to\mathbb{R}^d$ be monotone and $L_A$-Lipschitz in expectation with $A=\frac{1}{n}\sum_{i=1}^n A_i$, $B \colon \mathbb{R}^d \rightrightarrows \mathbb{R}^d$ be maximally monotone, and $A+B$ be $\mu$-strongly monotone with $(A+B)^{-1}(\mathbf{0}) \neq \emptyset$ and $\vv_* = (A+B)^{-1}(\mathbf{0})$. 
    Given $\Bar{\varepsilon} > 0$, Alg.~\ref{alg: forb_vr_sc} returns $\vv_M$ with $\E[\| \vv_{M} - \vv_*\|^2] \leq \Bar{\varepsilon}^2$ 
    in $\big\lceil 14\max\{n, \frac{\sqrt{n}L_A}{\mu}\}\log(\frac{\sqrt{6}\|\vv_0 - \vv_*\|}{\bar \varepsilon})\big\rceil$ iterations and $\gO\big( ( n + \frac{\sqrt{n}L_A}{\mu}) \log\big(\frac{\|\vv_0 - \vv_*\|}{\Bar{\varepsilon}}\big) \big)$ oracle queries.
\end{restatable}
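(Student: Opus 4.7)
The plan is to adapt the analysis of $\mathtt{VR{-}FoRB}$ from \cite{alacaoglu2021stochastic}, extending it from the VI setting to the composite monotone inclusion setting by replacing the projection step with the resolvent $J_{\tau B}$. The crucial observation is that the original analysis uses only monotonicity and the firmly nonexpansive structure of the constraint-like operator, both of which hold for $J_{\tau B}$ whenever $B$ is maximally monotone; hence the extension requires no new ingredient beyond monotonicity of the graph of $B$.

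First, I would write the optimality characterization of the resolvent step: the update $\vv_{k+1} = J_{\tau B}(\hat \vv_k - \tau \hat A_k)$, with $\hat A_k = A(\vw_k) + (nq_i)^{-1}(A_i(\vv_k) - A_i(\vw_{k-1}))$, is equivalent to the existence of some $b_{k+1} \in B(\vv_{k+1})$ with $\tau(\hat A_k + b_{k+1}) = \hat \vv_k - \vv_{k+1}$. Paired with $\mathbf{0} = A(\vv_*) + b_*$ for some $b_* \in B(\vv_*)$ (guaranteed by $(A+B)^{-1}(\mathbf{0})\neq\emptyset$) and the monotonicity $\langle b_{k+1} - b_*, \vv_{k+1} - \vv_*\rangle \geq 0$, this yields the same primal inequality as in the VI case. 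Next, I would introduce a Lyapunov function of the form
\begin{align*}
    \Phi_k = \|\vv_k - \vv_*\|^2 + \tfrac{1-p}{p}\|\vw_k - \vv_k\|^2 + \gamma \|\vv_k - \vw_{k-1}\|^2,
\end{align*}
with a constant $\gamma$ to be calibrated. The third term is motivated by the variance estimate $\mathbb{E}_{i \sim Q}\bigl[\|\hat A_k - \mathbb{E}_i[\hat A_k]\|^2\bigr] \leq L_A^2 \|\vv_k - \vw_{k-1}\|^2$, which follows from the expected Lipschitz condition analogous to Assumption~\ref{asp: asp2}. Combining this with the probabilistic rule $\vw_{k+1}=\vv_{k+1}$ with probability $p$, which gives $\mathbb{E}[\|\vw_{k+1} - \vv_*\|^2 \mid \mathcal{F}_k] = p\|\vv_{k+1} - \vv_*\|^2 + (1-p)\|\vw_k - \vv_*\|^2$, the interpolation $\hat \vv_k = (1-p)\vv_k + p\vw_k$ that comes from $\alpha = 1-p$, and strong monotonicity $\langle A(\vv_{k+1}) - A(\vv_*), \vv_{k+1} - \vv_*\rangle \geq \mu\|\vv_{k+1} - \vv_*\|^2$, the target is the contraction $\mathbb{E}[\Phi_{k+1} \mid \mathcal{F}_k] \leq (1 - c \mu \tau)\Phi_k$ for an absolute constant $c$, valid when $\tau = \frac{\sqrt{p(1-p)}}{2L_A}$.

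Iterating this contraction gives $\mathbb{E}[\|\vv_M - \vv_*\|^2] \leq \mathbb{E}[\Phi_M] \leq 6(1 - c \mu \tau)^M \|\vv_0 - \vv_*\|^2$, and solving $\mathbb{E}[\|\vv_M - \vv_*\|^2] \leq \bar\varepsilon^2$ yields $M = \bigl\lceil 14\max\{n, \sqrt{n} L_A/\mu\}\log(\sqrt{6}\|\vv_0 - \vv_*\|/\bar\varepsilon)\bigr\rceil$, because $\tau = \Theta(1/(\sqrt{n}L_A))$ forces $1/(\mu \tau) = \Theta(\max\{n, \sqrt{n}L_A/\mu\})$, where the $n$ cap reflects the feasibility constraint $\mu \tau \leq 1$. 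The per-iteration oracle cost is one sampled $A_i$-call plus a probability-$p=1/n$ full operator evaluation, averaging to $\mathcal{O}(1)$ calls per iteration; multiplying by $M$ yields the stated total complexity. The main technical obstacle is the multi-iterate bookkeeping in the Lyapunov argument: three sequences $\vv_k, \vw_k, \vw_{k-1}$ appear with distinct conditional-expectation rules, and the parameters $p, \alpha, \gamma, \tau$ must be tuned jointly so that every cross term is absorbed with a non-positive coefficient, at which point the composite extension amounts to checking that no step invokes any property of $B$ beyond graph monotonicity.
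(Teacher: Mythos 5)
Your overall plan is right — the extension from projections to $J_{\tau B}$ really does require nothing beyond firm nonexpansiveness of the resolvent and graph monotonicity of $B$, exactly as you argue, and your final accounting of $M$, $\tau = \Theta(1/(\sqrt{n}L_A))$, and the $O(1)$ average per-iteration cost reproduces the stated complexity. The paper's own proof likewise proceeds by importing the one-step inequality of the $\mathtt{VR{-}FoRB}$ analysis from \citet{alacaoglu2021stochastic} and only adding strong monotonicity of $A+B$ on top, so the high-level architecture matches.

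Where your plan genuinely diverges, and where I see a real gap, is in the choice of Lyapunov function. You propose $\Phi_k = \|\vv_k - \vv_*\|^2 + \frac{1-p}{p}\|\vw_k - \vv_k\|^2 + \gamma\|\vv_k - \vw_{k-1}\|^2$, a squared-norm potential of loopless-SVRG/forward-backward type. But the underlying method is a forward-\emph{reflected}-backward scheme: in expectation the operator injected into the resolvent is $A(\vw_k) - A(\vw_{k-1}) + A(\vv_k)$, and the one-step optimality inequality this generates contains the term $2\tau\langle A(\vv_{k+1}) - A(\vw_k), \vv_* - \vv_{k+1}\rangle$ with a matching $2\tau\langle A(\vv_k) - A(\vw_{k-1}), \vv_* - \vv_k\rangle$ on the other side. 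The paper's potential (its $b_k$) carries this inner product so that it telescopes; it is only \emph{bounded} (to show $b_k \geq 0$) when unwinding the recursion at the very end. If you instead try to absorb this cross term into squared norms at every step with Young's inequality, the coefficients do not line up: the coefficient on $\|\vv_k - \vv_*\|^2$ picked up from absorption exceeds what strong monotonicity produces on the next iterate, and you cannot certify $\E[\Phi_{k+1}\,|\,\mathcal{F}_k] \leq (1 - c\mu\tau)\Phi_k$ with $\tau = \Theta(1/(\sqrt{n}L_A))$. A squared-norm-only $\Phi_k$ is the right shape for an SVRG-type forward-backward iteration, but not for $\mathtt{VR{-}FoRB}$; you need a term of the form $2\tau\langle A(\vv_k) - A(\vw_{k-1}), \vv_* - \vv_k\rangle$ in the potential (or some algebraic maneuver that reproduces its telescoping effect).

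One smaller point: the $n$ in $\max\{n, \sqrt{n}L_A/\mu\}$ does not come from a ``feasibility constraint $\mu\tau \leq 1$.'' It comes from the update probability $p = 1/n$ of the reference point $\vw_k$: the per-step contraction factor is $\min\big\{\frac{\tau\mu}{1+4\tau\mu},\ p/7,\ \tau\mu/2\big\}$, and when $\mu$ is large the binding constraint is $p/7 = 1/(7n)$, simply because a variance-reduced scheme whose snapshot refreshes with probability $p$ cannot contract its Lyapunov faster than $O(p)$ per iteration regardless of $\tau\mu$.
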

We remark that only almost sure convergence was proved for $\mathtt{VR{-}FoRB}$ in~\citet{alacaoglu2021stochastic}. We show its non-asymptotic linear convergence  which is needed for our main result in Theorem~\ref{thm:monotone}. A similar rate is in~\cite[Theorem 6]{alacaoglu2021stochastic} for strongly monotone VIs, but for a different algorithm -- variance reduced extragradient.
Our result can be seen as a ``single-call'' alternative to this method, for the slightly more general setting of strongly monotone inclusions. We derive it to keep the generality of our setting and also for possible interest in its own right, since such algorithms have been popular recently~\citep{cai2022acceleratedsingle,hsieh2019convergence}.

\section{Numerical Experiments}
We provide preliminary numerical results for our proposed algorithms. We compare Alg.~\ref{alg: cocoercive} and Alg.~\ref{alg:monotone} with existing algorithms on matrix games and a special quadratic program used for lower bound derivations in~\cite{ouyang2021lower}. We emphasize that our main contributions are theoretical, while the provided examples are mainly for illustration with two goals in mind: (\emph{i}) verify our improved complexity bounds compared to deterministic Halpern-based methods~\citep{cai2022accelerated}, (\emph{ii}) show benefits of our algorithms compared to prior variance reduced algorithms~\citep{alacaoglu2021stochastic} for \emph{difficult} problem instances used for establishing lower bounds.

We compare our algorithms with deterministic extragradient (EG)~\citep{korpelevich1977extragradient}, constrained extra anchored gradient method (EAG)~\citep{cai2022accelerated}, and variance-reduced extragradient (VR-EG)~\citep[Alg.~1]{alacaoglu2021stochastic}.
First problem we consider is a matrix game with simplex constraints, i.e., $\min_{\vx \in \Delta^{m_1}} \max_{\vy \in \Delta^{m_2}} \innp{\mA\vx, \vy}$ with $m_1 = m_2 = 500$. We use the policeman and burglar matrix~\citep{nemirovski2013mini}.
Second problem we consider is a quadratic program from~\citet{ouyang2021lower} equivalent to the problem $\min_{\vx \in \R^{m_1}}\max_{\vy \in \R^{m_2}} \frac{1}{2}\vx^\top \mH \vx - \vh^\top\vx - \innp{\mA\vx - \vb, \vy}$. This problem was used in~\citet{ouyang2021lower} for establishing lower bounds for min-max optimization and we use this example with $m_1 = m_2 = 200$ to show the efficacy of Halpern-type algorithms. We use uniform sampling for all the algorithms, set $M_k = \lfloor 0.05n\log(k + 2)\rfloor$ for Alg.~\ref{alg:monotone}, and tune the stepsizes for each method individually. We implement all the algorithms in Python, and run the experiments on Google Colab standard CPU backend. We provide further experiment details in Appendix~\ref{appx:exp}.

We summarize our numerical results and plot the operator norm against the number of epochs in Fig.~\ref{fig:num-exp}, where one epoch means $n$ individual component operator evaluations. Operator norm stands for $\|F(\vu)\|$ for the unconstrained case, and we follow the convention and use the norm of gradient mapping, i.e., $\sqrt{\|\vx - \gP_{\Delta^{m_1}}(\vx - \mA^\top\vy)\|^2 + \|\vy - \gP_{\Delta^{m_2}}(\vy + \mA\vx)\|^2}$ for the matrix game (which our guarantees can be directly translated to, see for example~\cite[Fact 1]{cai2022accelerated}). We observe that (\emph{i}) our variance reduced Alg.~\ref{alg: cocoercive} and Alg.~\ref{alg:monotone} converge faster than deterministic methods in both cases, validating our complexity results; (\emph{ii}) VR-EG exhibits slightly faster convergence than our Halpern-type algorithms in Fig.~\ref{fig:matrix-game} (see similar empirical observations and comments in~\citet{park2022exact,tran2023sublinear}), however  VR-EG suffers stagnation while our algorithms progress on the difficult worst-case problem, as shown in Fig.~\ref{fig:lagrangian}.

\begin{figure}[ht]
    \hspace*{\fill}\subfigure[Matrix game]{\includegraphics[width=0.4\textwidth]{./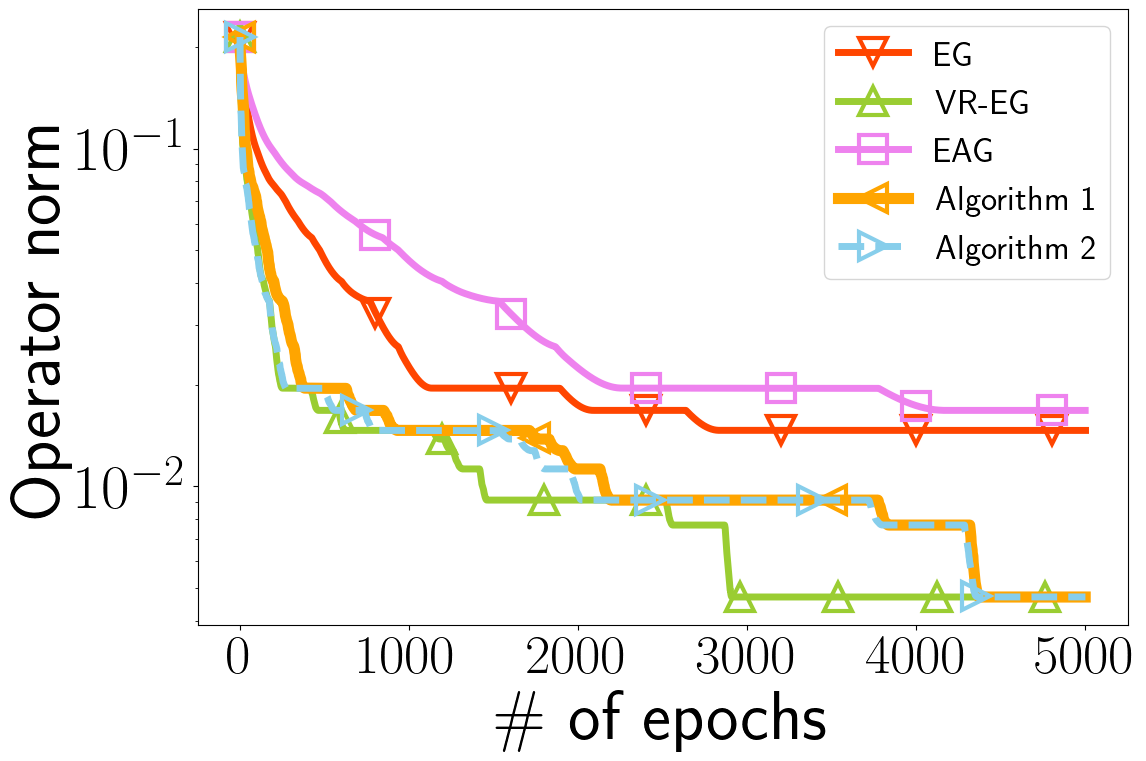}\label{fig:matrix-game}}\hfill
    \subfigure[Quadratic program]{\includegraphics[width=0.4\textwidth]{./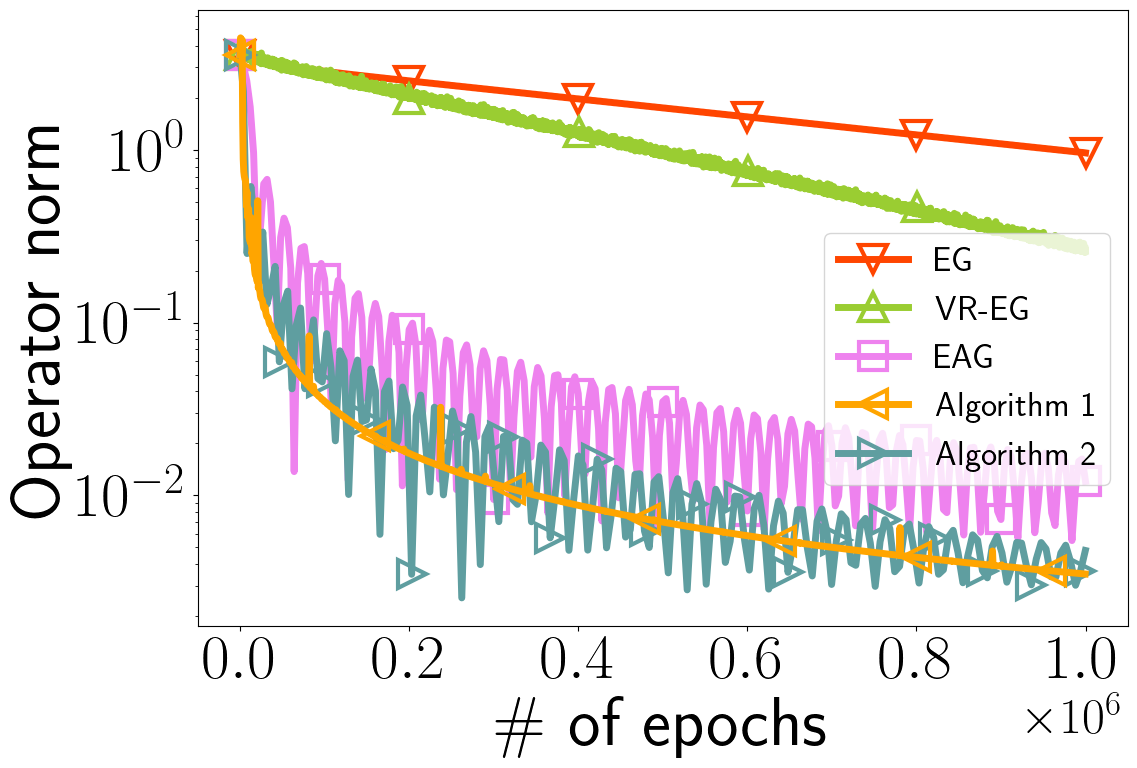}\label{fig:lagrangian}}
    \hspace*{\fill}
    \caption{Comparison of Alg.~\ref{alg: cocoercive}, Alg.~\ref{alg:monotone}, deterministic extragradient (EG), extra anchored gradient (EAG), variance reduced extragradient (VR-EG)  on the matrix games and quadratic programming.}
    \label{fig:num-exp}
\end{figure}

\section{Conclusion}\label{sec: conclusion}
We showed complexity guarantees for variance reduced algorithms that improve the best-known results for minimizing the residual in finite-sum monotone inclusions. Our improvements mirror those that were shown for the duality gap for finite-sum VIs in the recent literature; see e.g.,~\citet{palaniappan2016stochastic,carmon2019variance,alacaoglu2021stochastic}. 

Our result for the cocoercive case is with a direct algorithm whereas for the Lipschitz monotone case, we have an indirect algorithm. In particular, our algorithm in the latter case works by solving randomized approximations to the resolvent, which can be seen as an inexact Halpern iteration~\citep{diakonikolas2020halpern}. An important related open question
is the development of direct algorithms that achieve the same complexity guarantee that we showed for  the Lipschitz monotone case, which is optimal up to log factors. We provide more perspectives about this question in Appendix~\ref{app: perspectives}.

\section*{Acknowledgements}
This research was supported in part by the NSF grant 2023239, the NSF grant 2007757, the NSF grant 2224213, the DOE ASCR Subcontract 8F-30039 from Argonne National Laboratory, the Office of Naval Research under contract number N00014-22-1-2348. 

\bibliography{arxiv_ref}

\begin{thebibliography}{64}
\providecommand{\natexlab}[1]{#1}
\providecommand{\url}[1]{\texttt{#1}}
\expandafter\ifx\csname urlstyle\endcsname\relax
  \providecommand{\doi}[1]{doi: #1}\else
  \providecommand{\doi}{doi: \begingroup \urlstyle{rm}\Url}\fi

\bibitem[Alacaoglu \& Malitsky(2022)Alacaoglu and
  Malitsky]{alacaoglu2021stochastic}
Ahmet Alacaoglu and Yura Malitsky.
\newblock Stochastic variance reduction for variational inequality methods.
\newblock In \emph{Proc.~COLT'22}, 2022.

\bibitem[Alacaoglu et~al.(2021)Alacaoglu, Malitsky, and
  Cevher]{alacaoglu2021forward}
Ahmet Alacaoglu, Yura Malitsky, and Volkan Cevher.
\newblock Forward-reflected-backward method with variance reduction.
\newblock \emph{Computational Optimization and Applications}, 80\penalty0
  (2):\penalty0 321--346, 2021.

\bibitem[Allen-Zhu(2017)]{allen2017katyusha}
Zeyuan Allen-Zhu.
\newblock Katyusha: The first direct acceleration of stochastic gradient
  methods.
\newblock \emph{The Journal of Machine Learning Research}, 18\penalty0
  (1):\penalty0 8194--8244, 2017.

\bibitem[Allen-Zhu(2018)]{allen2018make}
Zeyuan Allen-Zhu.
\newblock How to make the gradients small stochastically: Even faster convex
  and nonconvex sgd.
\newblock In \emph{Proc.~NeurIPS'18}, 2018.

\bibitem[Bauschke \& Combettes(2011)Bauschke and Combettes]{bauschke2011convex}
Heinz~H Bauschke and Patrick~L Combettes.
\newblock \emph{Convex analysis and monotone operator theory in Hilbert
  spaces}, volume 408.
\newblock Springer, 2011.

\bibitem[Bauschke et~al.(2021)Bauschke, Moursi, and
  Wang]{bauschke2021generalized}
Heinz~H Bauschke, Walaa~M Moursi, and Xianfu Wang.
\newblock Generalized monotone operators and their averaged resolvents.
\newblock \emph{Mathematical Programming}, 189:\penalty0 55--74, 2021.

\bibitem[Beznosikov et~al.(2023)Beznosikov, Gorbunov, Berard, and
  Loizou]{beznosikov2023stochastic}
Aleksandr Beznosikov, Eduard Gorbunov, Hugo Berard, and Nicolas Loizou.
\newblock Stochastic gradient descent-ascent: Unified theory and new efficient
  methods.
\newblock In \emph{Proc.~AISTATS'23}, 2023.

\bibitem[Cai et~al.(2022{\natexlab{a}})Cai, Song, Guzm{\'a}n, and
  Diakonikolas]{cai2022stochastic}
Xufeng Cai, Chaobing Song, Crist{\'o}bal Guzm{\'a}n, and Jelena Diakonikolas.
\newblock Stochastic halpern iteration with variance reduction for stochastic
  monotone inclusions.
\newblock In \emph{Proc.~NeurIPS'22}, 2022{\natexlab{a}}.

\bibitem[Cai \& Zheng(2023)Cai and Zheng]{cai2022acceleratedsingle}
Yang Cai and Weiqiang Zheng.
\newblock Accelerated single-call methods for constrained min-max optimization.
\newblock In \emph{Proc.~ICLR'23}, 2023.

\bibitem[Cai et~al.(2022{\natexlab{b}})Cai, Oikonomou, and
  Zheng]{cai2022accelerated}
Yang Cai, Argyris Oikonomou, and Weiqiang Zheng.
\newblock Accelerated algorithms for monotone inclusions and constrained
  nonconvex-nonconcave min-max optimization.
\newblock \emph{arXiv preprint arXiv:2206.05248}, 2022{\natexlab{b}}.

\bibitem[Carmon et~al.(2019)Carmon, Jin, Sidford, and Tian]{carmon2019variance}
Yair Carmon, Yujia Jin, Aaron Sidford, and Kevin Tian.
\newblock Variance reduction for matrix games.
\newblock \emph{Proc.~NeurIPS'19}, 32, 2019.

\bibitem[Chavdarova et~al.(2019)Chavdarova, Gidel, Fleuret, and
  Lacoste-Julien]{chavdarova2019reducing}
Tatjana Chavdarova, Gauthier Gidel, Fran{\c{c}}ois Fleuret, and Simon
  Lacoste-Julien.
\newblock Reducing noise in gan training with variance reduced extragradient.
\newblock \emph{Proc.~NeurIPS'19}, 32, 2019.

\bibitem[Chen \& Luo(2022)Chen and Luo]{chen2022near}
Lesi Chen and Luo Luo.
\newblock Near-optimal algorithms for making the gradient small in stochastic
  minimax optimization.
\newblock \emph{arXiv preprint arXiv:2208.05925}, 2022.

\bibitem[Choudhury et~al.(2023)Choudhury, Gorbunov, and
  Loizou]{choudhury2023single}
Sayantan Choudhury, Eduard Gorbunov, and Nicolas Loizou.
\newblock Single-call stochastic extragradient methods for structured
  non-monotone variational inequalities: Improved analysis under weaker
  conditions.
\newblock \emph{arXiv preprint arXiv:2302.14043}, 2023.

\bibitem[Combettes \& Pennanen(2004)Combettes and
  Pennanen]{combettes2004proximal}
Patrick~L Combettes and Teemu Pennanen.
\newblock Proximal methods for cohypomonotone operators.
\newblock \emph{SIAM journal on control and optimization}, 43\penalty0
  (2):\penalty0 731--742, 2004.

\bibitem[Condat(2016)]{condat2016fast}
Laurent Condat.
\newblock Fast projection onto the simplex and the $l_1$ ball.
\newblock \emph{Mathematical Programming}, 158\penalty0 (1-2):\penalty0
  575--585, 2016.

\bibitem[Davis(2023)]{davis2023variance}
Damek Davis.
\newblock Variance reduction for root-finding problems.
\newblock \emph{Mathematical Programming}, 197\penalty0 (1):\penalty0 375--410,
  2023.

\bibitem[Defazio et~al.(2014)Defazio, Bach, and
  Lacoste-Julien]{defazio2014saga}
Aaron Defazio, Francis Bach, and Simon Lacoste-Julien.
\newblock {SAGA}: A fast incremental gradient method with support for
  non-strongly convex composite objectives.
\newblock In \emph{Proc.~NeurIPS'14}, 2014.

\bibitem[Diakonikolas(2020)]{diakonikolas2020halpern}
Jelena Diakonikolas.
\newblock Halpern iteration for near-optimal and parameter-free monotone
  inclusion and strong solutions to variational inequalities.
\newblock In \emph{Proc.~COLT'20}, 2020.

\bibitem[Facchinei \& Pang(2003)Facchinei and Pang]{facchinei2007finite}
Francisco Facchinei and Jong-Shi Pang.
\newblock \emph{Finite-dimensional variational inequalities and complementarity
  problems}.
\newblock Springer Science \& Business Media, 2003.

\bibitem[Fang et~al.(2018)Fang, Li, Lin, and Zhang]{fang2018spider}
Cong Fang, Chris~Junchi Li, Zhouchen Lin, and Tong Zhang.
\newblock Spider: Near-optimal non-convex optimization via stochastic
  path-integrated differential estimator.
\newblock \emph{Proc.~NeurIPS'18}, 2018.

\bibitem[Golowich et~al.(2020)Golowich, Pattathil, Daskalakis, and
  Ozdaglar]{golowich2020last}
Noah Golowich, Sarath Pattathil, Constantinos Daskalakis, and Asuman Ozdaglar.
\newblock Last iterate is slower than averaged iterate in smooth convex-concave
  saddle point problems.
\newblock In \emph{Proc.~COLT'20}, 2020.

\bibitem[Goodfellow et~al.(2014)Goodfellow, Pouget-Abadie, Mirza, Xu,
  Warde-Farley, Ozair, Courville, and Bengio]{goodfellow2014generative}
Ian Goodfellow, Jean Pouget-Abadie, Mehdi Mirza, Bing Xu, David Warde-Farley,
  Sherjil Ozair, Aaron Courville, and Yoshua Bengio.
\newblock Generative adversarial nets.
\newblock In \emph{Proc.~NeurIPS'14}, 2014.

\bibitem[Gorbunov et~al.(2022)Gorbunov, Berard, Gidel, and
  Loizou]{gorbunov2022stochastic}
Eduard Gorbunov, Hugo Berard, Gauthier Gidel, and Nicolas Loizou.
\newblock Stochastic extragradient: General analysis and improved rates.
\newblock In \emph{Proc.~AISTATS'22}, 2022.

\bibitem[Gower et~al.(2020)Gower, Schmidt, Bach, and
  Richt{\'a}rik]{gower2020variance}
Robert~M Gower, Mark Schmidt, Francis Bach, and Peter Richt{\'a}rik.
\newblock Variance-reduced methods for machine learning.
\newblock \emph{Proceedings of the IEEE}, 108\penalty0 (11):\penalty0
  1968--1983, 2020.

\bibitem[Halpern(1967)]{halpern1967fixed}
Benjamin Halpern.
\newblock Fixed points of nonexpanding maps.
\newblock \emph{Bulletin of the American Mathematical Society}, 73\penalty0
  (6):\penalty0 957--961, 1967.

\bibitem[Han et~al.(2021)Han, Xie, and Zhang]{han2021lower}
Yuze Han, Guangzeng Xie, and Zhihua Zhang.
\newblock Lower complexity bounds of finite-sum optimization problems: The
  results and construction.
\newblock \emph{arXiv preprint arXiv:2103.08280}, 2021.

\bibitem[Hofmann et~al.(2015)Hofmann, Lucchi, Lacoste-Julien, and
  McWilliams]{hofmann2015variance}
Thomas Hofmann, Aurelien Lucchi, Simon Lacoste-Julien, and Brian McWilliams.
\newblock Variance reduced stochastic gradient descent with neighbors.
\newblock In \emph{Proc.~NeurIPS'15}, 2015.

\bibitem[Hsieh et~al.(2019)Hsieh, Iutzeler, Malick, and
  Mertikopoulos]{hsieh2019convergence}
Yu-Guan Hsieh, Franck Iutzeler, J{\'e}r{\^o}me Malick, and Panayotis
  Mertikopoulos.
\newblock On the convergence of single-call stochastic extra-gradient methods.
\newblock In \emph{Proc.~NeurIPS'19}, 2019.

\bibitem[Johnson \& Zhang(2013)Johnson and Zhang]{johnson2013accelerating}
Rie Johnson and Tong Zhang.
\newblock Accelerating stochastic gradient descent using predictive variance
  reduction.
\newblock In \emph{Advances in Neural Information Processing Systems},
  volume~26, 2013.

\bibitem[Johnstone et~al.(2021)Johnstone, Eckstein, Flynn, and
  Yoo]{johnstone2021stochastic}
Patrick~R Johnstone, Jonathan Eckstein, Thomas Flynn, and Shinjae Yoo.
\newblock Stochastic projective splitting: Solving saddle-point problems with
  multiple regularizers.
\newblock \emph{arXiv preprint arXiv:2106.13067}, 2021.

\bibitem[Kohlenbach(2022)]{kohlenbach2022proximal}
Ulrich Kohlenbach.
\newblock On the proximal point algorithm and its halpern-type variant for
  generalized monotone operators in hilbert space.
\newblock \emph{Optimization Letters}, 16\penalty0 (2):\penalty0 611--621,
  2022.

\bibitem[Korpelevich(1977)]{korpelevich1977extragradient}
GM~Korpelevich.
\newblock Extragradient method for finding saddle points and other problems.
\newblock \emph{Matekon}, 13\penalty0 (4):\penalty0 35--49, 1977.

\bibitem[Kovalev \& Gasnikov(2022)Kovalev and Gasnikov]{kovalev2022first}
Dmitry Kovalev and Alexander Gasnikov.
\newblock The first optimal algorithm for smooth and
  strongly-convex-strongly-concave minimax optimization.
\newblock In \emph{Proc.~NeurIPS'22}, 2022.

\bibitem[Kovalev et~al.(2020)Kovalev, Horv{\'a}th, and
  Richt{\'a}rik]{kovalev2020don}
Dmitry Kovalev, Samuel Horv{\'a}th, and Peter Richt{\'a}rik.
\newblock Don’t jump through hoops and remove those loops: Svrg and katyusha
  are better without the outer loop.
\newblock In \emph{Proc.~ALT'20}, 2020.

\bibitem[Lan et~al.(2019)Lan, Li, and Zhou]{lan2019unified}
Guanghui Lan, Zhize Li, and Yi~Zhou.
\newblock A unified variance-reduced accelerated gradient method for convex
  optimization.
\newblock In \emph{Proc.~NeurIPS'19}, 2019.

\bibitem[Lee \& Kim(2021)Lee and Kim]{lee2021fast}
Sucheol Lee and Donghwan Kim.
\newblock Fast extra gradient methods for smooth structured
  nonconvex-nonconcave minimax problems.
\newblock In \emph{Proc.~NeurIPS'21}, volume~34, 2021.

\bibitem[Li et~al.(2021)Li, Bao, Zhang, and Richt{\'a}rik]{li2021page}
Zhize Li, Hongyan Bao, Xiangliang Zhang, and Peter Richt{\'a}rik.
\newblock Page: A simple and optimal probabilistic gradient estimator for
  nonconvex optimization.
\newblock In \emph{Proc.~ICML'21}, 2021.

\bibitem[Lohr(2021)]{lohr2021sampling}
Sharon~L Lohr.
\newblock \emph{Sampling: design and analysis}.
\newblock CRC press, 2021.

\bibitem[Loizou et~al.(2021)Loizou, Berard, Gidel, Mitliagkas, and
  Lacoste-Julien]{loizou2021stochastic}
Nicolas Loizou, Hugo Berard, Gauthier Gidel, Ioannis Mitliagkas, and Simon
  Lacoste-Julien.
\newblock Stochastic gradient descent-ascent and consensus optimization for
  smooth games: Convergence analysis under expected co-coercivity.
\newblock In \emph{Proc.~NeurIPS'21}, 2021.

\bibitem[Madry et~al.(2018)Madry, Makelov, Schmidt, Tsipras, and
  Vladu]{madry2018towards}
Aleksander Madry, Aleksandar Makelov, Ludwig Schmidt, Dimitris Tsipras, and
  Adrian Vladu.
\newblock Towards deep learning models resistant to adversarial attacks.
\newblock In \emph{Proc.~ICLR'18}, 2018.

\bibitem[Malitsky(2019)]{Malitsky2019}
Yura Malitsky.
\newblock Golden ratio algorithms for variational inequalities.
\newblock \emph{Mathematical Programming}, Jul 2019.

\bibitem[Malitsky \& Tam(2020)Malitsky and Tam]{malitsky2020forward}
Yura Malitsky and Matthew~K Tam.
\newblock A forward-backward splitting method for monotone inclusions without
  cocoercivity.
\newblock \emph{SIAM Journal on Optimization}, 30\penalty0 (2):\penalty0
  1451--1472, 2020.

\bibitem[Morin \& Giselsson(2022)Morin and Giselsson]{morin2022cocoercivity}
Martin Morin and Pontus Giselsson.
\newblock Cocoercivity, smoothness and bias in variance-reduced stochastic
  gradient methods.
\newblock \emph{Numerical Algorithms}, 91\penalty0 (2):\penalty0 749--772,
  2022.

\bibitem[Nemirovski(2004)]{nemirovski2004prox}
Arkadi Nemirovski.
\newblock Prox-method with rate of convergence {$O(1/t)$} for variational
  inequalities with {L}ipschitz continuous monotone operators and smooth
  convex-concave saddle point problems.
\newblock \emph{SIAM Journal on Optimization}, 15\penalty0 (1):\penalty0
  229--251, 2004.

\bibitem[Nemirovski(2013)]{nemirovski2013mini}
Arkadi Nemirovski.
\newblock Mini-course on convex programming algorithms.
\newblock \emph{Lecture Notes}, 2013.

\bibitem[Nesterov(2007)]{nesterov2007dual}
Yurii Nesterov.
\newblock Dual extrapolation and its applications to solving variational
  inequalities and related problems.
\newblock \emph{Mathematical Programming}, 109\penalty0 (2-3):\penalty0
  319--344, 2007.

\bibitem[Nguyen et~al.(2017)Nguyen, Liu, Scheinberg, and
  Tak{\'a}{\v{c}}]{nguyen2017sarah}
Lam~M Nguyen, Jie Liu, Katya Scheinberg, and Martin Tak{\'a}{\v{c}}.
\newblock Sarah: A novel method for machine learning problems using stochastic
  recursive gradient.
\newblock In \emph{Proc.~ICML'17}, 2017.

\bibitem[Ouyang \& Xu(2021)Ouyang and Xu]{ouyang2021lower}
Yuyuan Ouyang and Yangyang Xu.
\newblock Lower complexity bounds of first-order methods for convex-concave
  bilinear saddle-point problems.
\newblock \emph{Mathematical Programming}, 185\penalty0 (1-2):\penalty0 1--35,
  2021.

\bibitem[Palaniappan \& Bach(2016)Palaniappan and
  Bach]{palaniappan2016stochastic}
Balamurugan Palaniappan and Francis Bach.
\newblock Stochastic variance reduction methods for saddle-point problems.
\newblock In \emph{Proc.~NeurIPS'16}, 2016.

\bibitem[Park \& Ryu(2022)Park and Ryu]{park2022exact}
Jisun Park and Ernest~K Ryu.
\newblock Exact optimal accelerated complexity for fixed-point iterations.
\newblock In \emph{Proc.~ICML'22}, 2022.

\bibitem[Pethick et~al.(2023)Pethick, Fercoq, Latafat, Patrinos, and
  Cevher]{pethick2023solving}
Thomas Pethick, Olivier Fercoq, Puya Latafat, Panagiotis Patrinos, and Volkan
  Cevher.
\newblock Solving stochastic weak minty variational inequalities without
  increasing batch size.
\newblock \emph{arXiv preprint arXiv:2302.09029}, 2023.

\bibitem[Qian et~al.(2021)Qian, Qu, and Richt{\'a}rik]{qian2021svrg}
Xun Qian, Zheng Qu, and Peter Richt{\'a}rik.
\newblock L-svrg and l-katyusha with arbitrary sampling.
\newblock \emph{The Journal of Machine Learning Research}, 22\penalty0
  (1):\penalty0 4991--5039, 2021.

\bibitem[Rockafellar(1976)]{rockafellar1976monotone}
R~Tyrrell Rockafellar.
\newblock Monotone operators and the proximal point algorithm.
\newblock \emph{SIAM journal on control and optimization}, 14\penalty0
  (5):\penalty0 877--898, 1976.

\bibitem[Rockafellar \& Wets(2009)Rockafellar and
  Wets]{rockafellar2009variational}
R~Tyrrell Rockafellar and Roger J-B Wets.
\newblock \emph{Variational analysis}, volume 317.
\newblock Springer Science \& Business Media, 2009.

\bibitem[Schmidt et~al.(2017)Schmidt, Le~Roux, and Bach]{schmidt2017minimizing}
Mark Schmidt, Nicolas Le~Roux, and Francis Bach.
\newblock Minimizing finite sums with the stochastic average gradient.
\newblock \emph{Mathematical Programming}, 162\penalty0 (1):\penalty0 83--112,
  2017.

\bibitem[Song et~al.(2020)Song, Jiang, and Ma]{song2020variance}
Chaobing Song, Yong Jiang, and Yi~Ma.
\newblock Variance reduction via accelerated dual averaging for finite-sum
  optimization.
\newblock In \emph{Proc.~NeurIPS'20}, 2020.

\bibitem[Tran-Dinh(2023)]{tran2023sublinear}
Quoc Tran-Dinh.
\newblock Sublinear convergence rates of extragradient-type methods: A survey
  on classical and recent developments.
\newblock \emph{arXiv preprint arXiv:2303.17192}, 2023.

\bibitem[Tran-Dinh \& Luo(2021)Tran-Dinh and Luo]{tran2021halpern}
Quoc Tran-Dinh and Yang Luo.
\newblock Halpern-type accelerated and splitting algorithms for monotone
  inclusions.
\newblock \emph{arXiv preprint arXiv:2110.08150}, 2021.

\bibitem[Tran-Dinh \& Luo(2023)Tran-Dinh and Luo]{tran2023randomized}
Quoc Tran-Dinh and Yang Luo.
\newblock Randomized block-coordinate optimistic gradient algorithms for
  root-finding problems.
\newblock \emph{arXiv preprint arXiv:2301.03113}, 2023.

\bibitem[Yazdandoost~Hamedani \& Jalilzadeh(2023)Yazdandoost~Hamedani and
  Jalilzadeh]{yazdandoost2023stochastic}
Erfan Yazdandoost~Hamedani and Afrooz Jalilzadeh.
\newblock A stochastic variance-reduced accelerated primal-dual method for
  finite-sum saddle-point problems.
\newblock \emph{Computational Optimization and Applications}, pp.\  1--27,
  2023.

\bibitem[Yoon \& Ryu(2021)Yoon and Ryu]{Yoon2021OptimalGradientNorm}
Taeho Yoon and Ernest~K Ryu.
\newblock Accelerated algorithms for smooth convex-concave minimax problems
  with {$O(1/k^2)$} rate on squared gradient norm.
\newblock In \emph{Proc.~ICML'21}, 2021.

\bibitem[Zhang et~al.(2021)Zhang, Yang, and Ba{\c{s}}ar]{zhang2021multi}
Kaiqing Zhang, Zhuoran Yang, and Tamer Ba{\c{s}}ar.
\newblock Multi-agent reinforcement learning: A selective overview of theories
  and algorithms.
\newblock \emph{Handbook of Reinforcement Learning and Control}, pp.\
  321--384, 2021.

\bibitem[Zhou et~al.(2022)Zhou, Tian, So, and Cheng]{zhou2022practical}
Kaiwen Zhou, Lai Tian, Anthony Man-Cho So, and James Cheng.
\newblock Practical schemes for finding near-stationary points of convex
  finite-sums.
\newblock In \emph{Proc.~AISTATS'22}, 2022.

\end{thebibliography}
\bibliographystyle{iclr2024_conference}

\newpage
\appendix
\section{Background}\label{app: background}
\paragraph{VI algorithms, extragradient, Forward-Backward, Proximal Point. }
Two of the most fundamental algorithms for solving VI problems with monotone operators are Forward-Backward (FB) and proximal-point (PP) algorithms, see for example~\citep{facchinei2007finite,rockafellar1976monotone}. FB iterates as
\begin{equation*}
    \vu_{k+1} = J_{\tau G}(\vu_k -  \tau F(\vu_k)),
\end{equation*}
and converges when $F$ is cocoercive.
On the other hand, PP iterates as
\begin{equation*}
    \vu_{k+1} = J_{\tau(F+G)}(\vu_k),
\end{equation*}
and does not require cocoercivity. However, the computation of $J_{\tau (F+G)}$ is in general nontrivial even when $J_{\tau G}$ can be computed efficiently. Hence, the advantage of not requiring cocoercivity comes at the cost of a more expensive iteration.

Extragradient (EG)~\citep{korpelevich1977extragradient} is a classical algorithm that gets the best of both worlds and iterates as
\begin{equation*}
\begin{aligned}
    \vu_{k+1/2} &= P_C(\vu_k - \tau F(\vu_k)) \\
    \vu_{k+1} &= P_C(\vu_k - \tau F(\vu_{k+1/2})).
\end{aligned}
\end{equation*}
This method converges with $L$-Lipschitz operator $F$ and only uses $J_{\tau G}$ for $G = \partial \delta_C$. It turns out that PP and EG have the optimal rates for the gap whereas they are suboptimal for guarantees on the residual, see~\cite{golowich2020last}.

\paragraph{Variance reduction. }
The main idea of variance reduction is to use an estimator $\widetilde F$ such that $\mathbb{E} \| \widetilde F(\vu) - F(\vu)\|^2$ gets progressively smaller as we run the algorithm.
There are several different estimators that are used in minimization such as SVRG~\citep{johnson2013accelerating}, SARAH/SPIDER/PAGE~\citep{nguyen2017sarah,fang2018spider,li2021page} or SAGA~\citep{defazio2014saga,schmidt2017minimizing}. For reference, SVRG estimator, written for the case of operators uses 
\begin{equation*}
    \widetilde F(\vu_k) = F(\vw_k) - F_i(\vw_k) + F_i(\vu_k),
\end{equation*}
for a randomly selected index $i$ and a suitably selected ``snapshot'' point $\vw_k$. A common choice is to select $\vw_k$ to be updated only once every couple of epochs. By using this estimator, recent work~\citet{alacaoglu2021stochastic} showed how to obtain variance reduced extragradient algorithms with optimal complexity for the gap in terms of both the number of operators $n$ and the desired accuracy $\varepsilon$.

Another estimator that is popular for minimization problems is the PAGE estimator~\citep{li2021page}, written for operators as:
\begin{equation}\label{eq:PAGE}
    \widetilde F(\vu_k) = \begin{cases} F(\vu_k), \quad & \text{~w.p.~} p_k \\
    \widetilde F(\vu_{k-1}) + \frac{1}{b}\sum_{i\in \gS_k} \big(F_i(\vu_k) - F_i(\vu_{k-1})\big), \quad & \text{~w.p.~} 1-p_k,\end{cases}
\end{equation}
where $p_k$ and the mini-batch $\gS_k$ with $b = |\gS_k|$ are the parameters.
Even though this estimator has been shown to have unique benefits for minimization, it has not find much use for operators for finite-sum case. It was recently used by~\citet{cai2022stochastic} for operators given as an expectation. Here we introduce a useful result on the variance bound of the PAGE estimator, for our later analysis. Note that this lemma is a slight modification of~\cite[Lemma 3]{li2021page} 
by using without replacement sampling.
\begin{lemma}\label{lem:PAGE}
Let the minibatch $\gS_k$ be uniformly sampled from $[n]$ without replacement. Then the variance of $\tildeF$ defined by Eq.~\eqref{eq:PAGE} satisfies the following recursive bound: for all $k \geq 1,$ 
\begin{align}\notag
\E[\|{\tildeF(\vu_{k}) - F(\vu_{k})}\|^2] &\leq  (1 - p_k)\E[\|{\tildeF(\vu_{k- 1}) - F(\vu_{k - 1})}\|^2] \notag \\
&\quad+ \frac{(n - b)(1-p_k)}{b(n - 1)}\mathbb{E}\left[\sum_{i=1}^n\frac{1}{n}\|F_i(\vu_{k}) - F_i(\vu_{k - 1})\|^2\right].
\end{align}
\end{lemma}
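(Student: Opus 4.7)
The plan is to obtain the recursion by conditioning on the random choice the PAGE update makes at step $k$ (full batch vs.\ incremental), and then analyzing the incremental branch via a bias-variance decomposition combined with the standard finite-population variance formula for sampling without replacement.

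First, let $\gF_{k-1}$ denote the $\sigma$-algebra generated by all algorithm randomness through iteration $k-1$ (so $\vu_{k-1}$, $\vu_k$, and $\tildeF(\vu_{k-1})$ are $\gF_{k-1}$-measurable), and condition on $\gF_{k-1}$. With probability $p_k$, we set $\tildeF(\vu_k)=F(\vu_k)$, which contributes $0$ to the conditional expectation of $\|\tildeF(\vu_k)-F(\vu_k)\|^2$. With probability $1-p_k$, write
\begin{equation*}
\tildeF(\vu_k)-F(\vu_k)=\bigl(\tildeF(\vu_{k-1})-F(\vu_{k-1})\bigr)+\Delta_k,\qquad \Delta_k:=\tfrac{1}{b}\sum_{i\in\gS_k}\bigl(F_i(\vu_k)-F_i(\vu_{k-1})\bigr)-\bigl(F(\vu_k)-F(\vu_{k-1})\bigr).
\end{equation*}
Because $\gS_k$ is a uniform without-replacement sample of size $b$, each index is marginally uniform on $[n]$, so $\E[\Delta_k\mid\gF_{k-1}]=\mathbf{0}$. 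Since the first summand is $\gF_{k-1}$-measurable, expanding the square and taking $\E[\cdot\mid\gF_{k-1}]$ cancels the cross term and yields
\begin{equation*}
\E\bigl[\|\tildeF(\vu_k)-F(\vu_k)\|^2\mid\gF_{k-1},\text{branch 2}\bigr]=\|\tildeF(\vu_{k-1})-F(\vu_{k-1})\|^2+\E\bigl[\|\Delta_k\|^2\mid\gF_{k-1}\bigr].
\end{equation*}

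Next I would bound $\E[\|\Delta_k\|^2\mid\gF_{k-1}]$ using the classical variance formula for a sample mean drawn without replacement from a finite population. Writing $X_i:=F_i(\vu_k)-F_i(\vu_{k-1})$ and $\bar X:=\frac{1}{n}\sum_{i=1}^n X_i=F(\vu_k)-F(\vu_{k-1})$, a componentwise application of that formula (equivalently, expanding $\E\|\tfrac{1}{b}\sum_{i\in\gS_k}X_i-\bar X\|^2$ using $\Pr[i\in\gS_k]=b/n$ and $\Pr[i,j\in\gS_k]=b(b-1)/(n(n-1))$ for $i\neq j$) gives
\begin{equation*}
\E\bigl[\|\Delta_k\|^2\mid\gF_{k-1}\bigr]=\frac{n-b}{b(n-1)}\cdot\frac{1}{n}\sum_{i=1}^n\|X_i-\bar X\|^2\leq\frac{n-b}{b(n-1)}\cdot\frac{1}{n}\sum_{i=1}^n\|F_i(\vu_k)-F_i(\vu_{k-1})\|^2,
\end{equation*}
where the inequality uses $\E\|Y-\E Y\|^2\leq\E\|Y\|^2$ applied to the uniform measure on $[n]$.

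Finally I would combine the two branches by total probability, namely multiply the branch-2 bound by $(1-p_k)$, add $p_k\cdot 0$ for branch 1, and then take $\E[\,\cdot\,]$ over $\gF_{k-1}$ using the tower property to arrive at the claimed recursion. I do not anticipate a real obstacle here; the only mildly delicate point is deriving the finite-population correction factor $\tfrac{n-b}{n-1}$ correctly (as opposed to the with-replacement factor $1$), which is where the without-replacement hypothesis on $\gS_k$ enters. Everything else is a straightforward bias-variance decomposition and tower-property argument.
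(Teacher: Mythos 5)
Your proof is correct and follows essentially the same route as the paper's: condition on the natural filtration (so that $\vu_{k-1}$, $\vu_k$, $\tildeF(\vu_{k-1})$ are measurable), split into the two PAGE branches by total probability, expand the square so the cross term cancels by unbiasedness of the without-replacement sample mean, and then invoke the finite-population variance formula with correction factor $\tfrac{n-b}{n-1}$ followed by $\E\|Y-\E Y\|^2 \le \E\|Y\|^2$. The only cosmetic difference is your explicit labeling of the branches and the name you give the $\sigma$-algebra ($\gF_{k-1}$ versus the paper's $\gF_k$ for the same object).
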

\begin{proof}
Let $\gF_{k}$ denote the filtration that contains all algorithm randomness up to and including iteration $\vu_k$. Using the definition of $\tildeF$ where $\tildeF(\vu_k) = F(\vu_k)$ with probability $p_k$, then conditional on $\gF_k$, we have for all $k \geq 1$ 
\begin{equation*}
\begin{aligned}
    &\E\Big[\|\tildeF(\vu_{k}) - F(\vu_{k})\|^2 ~|~ \gF_{k} \Big] \\
    = \;& (1 - p_k)\E\Big[\Big\|\tildeF(\vu_{k - 1}) + \frac{1}{b} \sum_{i \in \gS_k}\big(F_i(\vu_{k}) - F_i(\vu_{k - 1})\big) - F(\vu_{k})\Big\|^2~\Big|~\gF_k\Big] \\
    = \;& (1 - p_k)\E_{\gS_k}\Big[\Big\|\tildeF(\vu_{k - 1}) + \frac{1}{b} \sum_{i \in \gS_k}\big(F_i(\vu_{k}) - F_i(\vu_{k - 1})\big) - F(\vu_{k})\Big\|^2\Big],
\end{aligned}
\end{equation*}
where $\mathbb{E}_{\gS_k}$ denotes the expectation w.r.t.\ the randomness of $\gS_k$.
Adding and substracting $F(\vu_{k - 1})$ in the quadratic, and noticing that $\E_{\gS_k}\Big[\frac{1}{b}\sum_{i \in \gS_k}\big(F_i(\vu_{k}) - F_i(\vu_{k - 1})\Big] = F(\vu_k) - F(\vu_{k - 1})$, we have 
\begin{align*}
    \;& \E_{\gS_k}\Big[\Big\|\tildeF(\vu_{k - 1}) + \frac{1}{b} \sum_{i \in \gS_k}\big(F_i(\vu_{k}) - F_i(\vu_{k - 1})\big) - F(\vu_{k})\Big\|^2\Big] \\
    = \;& \E_{\gS_k}\Big[\Big\|\frac{1}{b} \sum_{i \in \gS_k}\big(F_i(\vu_{k}) - F_i(\vu_{k - 1})\big) - \big(F(\vu_{k}) - F(\vu_{k - 1})\big)\Big\|^2\Big] + \|F(\vu_{k - 1}) - \tildeF(\vu_{k - 1})\|^2 \\
    & + 2\innp{\tildeF(\vu_{k - 1}) - F(\vu_{k - 1}), \E_{\gS_k}\Big[\frac{1}{b}\sum_{i \in \gS_k}\big(F_i(\vu_{k}) - F_i(\vu_{k - 1}) - \big(F(\vu_k) - F(\vu_{k - 1})\big)\Big]} \\
    = \;& \E_{\gS_k}\Big[\Big\|\frac{1}{b} \sum_{i \in \gS_k}\big(F_i(\vu_{k}) - F_i(\vu_{k - 1})\big) - \big(F(\vu_{k}) - F(\vu_{k - 1})\big)\Big\|^2\Big] + \|F(\vu_{k - 1}) - \tildeF(\vu_{k - 1})\|^2. 
\end{align*}
With $\gS_k$ sampled without replacement according to the uniform distribution, we have (see for example~\cite[Section 2.7]{lohr2021sampling})
\begin{align*}
    \;& \E_{\gS_k}\Big[\Big\|\frac{1}{b} \sum_{i \in \gS_k}\big(F_i(\vu_{k}) - F_i(\vu_{k - 1})\big) - \big(F(\vu_{k}) - F(\vu_{k - 1})\big)\Big\|^2\Big] \\
    \leq \;& \frac{(n - b)}{b(n - 1)}\sum_{i=1}^n \frac{1}{n}\|F_i(\vu_k) - F_i(\vu_{k - 1}) - \big(F(\vu_{k}) - F(\vu_{k - 1})\big)\|^2 \\
    \overset{(\romannumeral1)}{\leq} \;& \frac{(n - b)}{b(n - 1)}\sum_{i=1}^n\frac{1}{n}\|F_i(\vu_k) - F_i(\vu_{k - 1})\|^2,
\end{align*}
where $(\romannumeral1)$ holds because $\E[\|X - \E[X]\|^2] \leq \E[\|X\|^2]$ for any random variable $X$. So we obtain 
\begin{align*}
    \;& \E\Big[\|\tildeF(\vu_{k}) - F(\vu_{k})\|^2 ~|~ \gF_{k} \Big] \\
    \leq \;& (1 - p_k)\|F(\vu_{k - 1}) - \tildeF(\vu_{k - 1})\|^2 + (1 - p_k)\frac{(n - b)}{b(n - 1)}\sum_{i=1}^n\frac{1}{n}\|F_i(\vu_k) - F_i(\vu_{k - 1})\|^2. 
\end{align*}
Taking expectation w.r.t.\ all the randomness on both sides and using the law of total expectation, we finally have  the result.
\end{proof}

\section{Omitted Proofs From Section~\ref{sec:expected-cocoercive} (Cocoercive Settings)}\label{appx:cocoercive}
We first prove the following lemma that is useful for our proof of Lemma~\ref{lem: cocoercive_decrease}.
\begin{lemma}\label{lem: coco_ineq}
For $k\geq 1$, let
\begin{align*}
\eta &= \frac{1}{4L},~~~\lambda_k = \frac{2}{k+4}, ~~~ c_k = \frac{(\sqrt{n}+2)(k+4)}{4L}, ~~~b= \lceil \sqrt{n}\rceil,~~~
p_{k + 1} = \begin{cases}
            \frac{4}{k + 5} \quad &  \forall k \leq \sqrt{n} \\
            \frac{4}{\sqrt{n} + 5} \quad & \forall k \geq \sqrt{n}
        \end{cases}
\end{align*}
as in Alg.~\ref{alg: cocoercive} and Lemma~\ref{lem: cocoercive_decrease}. Then, for $k\geq 1$ it holds that
    \begin{subequations}\label{eq:coco-ineq-01}
\begin{align}
    \frac{c_{k + 1}(1 - p_{k + 1})}{1 - \lambda_k} + \frac{2\eta}{\lambda_k} - c_k \leq \;& 0, \label{eq:coco-ineq-01a}\\
    \frac{2\eta}{\lambda_k} + \frac{c_{k + 1}(1 - p_{k + 1})(n - b)}{b(n - 1)(1 - \lambda_k)} - \frac{1}{\lambda_k L} \leq \;& 0.\label{eq:coco-ineq-01b}
\end{align}
\end{subequations}
\end{lemma}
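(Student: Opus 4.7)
The plan is purely algebraic: substitute the explicit parameter choices into both inequalities, simplify, and reduce each one to a clean sufficient condition on $p_{k+1}$ that is immediate to verify in the two regimes $k \leq \sqrt{n}$ and $k \geq \sqrt{n}$.

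First I would compute the following auxiliary quantities that will appear repeatedly:
\[
\frac{2\eta}{\lambda_k} = \frac{k+4}{4L}, \qquad \frac{1}{\lambda_k L} = \frac{k+4}{2L}, \qquad 1-\lambda_k = \frac{k+2}{k+4}, \qquad c_k = \frac{(\sqrt{n}+2)(k+4)}{4L},
\]
so that $c_{k+1}/(1-\lambda_k) = \frac{(\sqrt{n}+2)(k+5)(k+4)}{4L(k+2)}$. Substituting into \eqref{eq:coco-ineq-01a} and dividing by $\frac{k+4}{4L}$, the first inequality becomes equivalent to
\[
p_{k+1} \;\geq\; 1 - \frac{(\sqrt{n}+1)(k+2)}{(\sqrt{n}+2)(k+5)} \;=\; \frac{3\sqrt{n}+k+8}{(\sqrt{n}+2)(k+5)}.
\]
For $k\leq \sqrt{n}$ with $p_{k+1}=\frac{4}{k+5}$, cross-multiplying reduces this to $(\sqrt{n}+1)(k+2)+4(\sqrt{n}+2)\geq(\sqrt{n}+2)(k+5)$, whose left-minus-right simplifies to $\sqrt{n}-k\geq 0$. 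For $k\geq \sqrt{n}$ with $p_{k+1}=\frac{4}{\sqrt{n}+5}$, the analogous cross-multiplication gives $4(\sqrt{n}+2)(k+5)\geq (3\sqrt{n}+k+8)(\sqrt{n}+5)$, and after expansion the left-minus-right collapses to $3(\sqrt{n}+1)(k-\sqrt{n})\geq 0$. Both reductions are tight at the transition $k=\sqrt{n}$, which also explains the choice of the stagewise schedule for $p_{k+1}$.

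For \eqref{eq:coco-ineq-01b}, I would use $\frac{1}{\lambda_k L}-\frac{2\eta}{\lambda_k}=\frac{k+4}{4L}$ and the same expression for $c_{k+1}/(1-\lambda_k)$ to rewrite the inequality as
\[
(1-p_{k+1}) \;\leq\; \frac{b(n-1)(k+2)}{(\sqrt{n}+2)(k+5)(n-b)}.
\]
Since $b=\lceil\sqrt{n}\rceil\geq \sqrt{n}$, I have $\frac{b(n-1)}{n-b}\geq \frac{\sqrt{n}(n-1)}{n-\sqrt{n}}=\sqrt{n}+1$, so the right-hand side above is at least $\frac{(\sqrt{n}+1)(k+2)}{(\sqrt{n}+2)(k+5)}$, which is exactly the bound required by the reformulation of \eqref{eq:coco-ineq-01a}. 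Hence \eqref{eq:coco-ineq-01b} follows from the same case analysis on $p_{k+1}$, with strictly more slack.

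The only minor obstacle is keeping the algebra clean: the bounds are tight at the boundary $k=\sqrt{n}$, so any slip in the coefficients would fail. I would therefore present the two reformulations in a single lemma-style display, then handle the two regimes for $p_{k+1}$ in parallel, showing that each reduces to a nonnegativity statement ($\sqrt{n}-k\geq 0$ or $k-\sqrt{n}\geq 0$) that holds by definition of the regime.
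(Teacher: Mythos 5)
Your proposal is correct and follows essentially the same route as the paper: substitute the parameter choices, reduce both inequalities to the single condition $1-p_{k+1}\leq \tfrac{\sqrt{n}+1}{\sqrt{n}+2}\cdot\tfrac{k+2}{k+5}$ (using $b\geq\sqrt{n}$ to bound $\tfrac{n-b}{b(n-1)}\leq\tfrac{1}{\sqrt{n}+1}$ for the second one), and then split into the two regimes for $p_{k+1}$. The only cosmetic difference is that the paper motivates $c_k$ by equating $c_k-\tfrac{1}{2\lambda_kL}=\tfrac{\sqrt{n}+1}{2\lambda_kL}$ and then checks the case inequalities via $\tfrac{\sqrt{n}+1}{\sqrt{n}+2}\geq\tfrac{k+1}{k+2}$ and $\tfrac{k+2}{k+5}\geq\tfrac{\sqrt{n}+2}{\sqrt{n}+5}$, whereas you plug everything in directly and verify by cross-multiplication to get $\sqrt{n}-k\geq 0$ or $k-\sqrt{n}\geq 0$; both are algebraically equivalent.
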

\begin{proof}
By the definition of $b$, we have $b \geq \sqrt{n}$ which implies
\begin{align*}
    \frac{c_{k + 1}(1 - p_{k + 1})(n - b)}{b(n - 1)(1 - \lambda_k)} \leq \frac{c_{k + 1}(1 - p_{k + 1})(n - \sqrt{n})}{\sqrt{n}(n - 1)(1 - \lambda_k)} = \frac{c_{k + 1}(1 - p_{k + 1})}{(\sqrt{n} + 1)(1 - \lambda_k)}. 
\end{align*}
By this inequality and $\eta = \frac{1}{4L}$ for simplicity, the following inequality suffices to guarantee Eq.~\eqref{eq:coco-ineq-01b}
\begin{align*}
    \frac{c_{k + 1}(1 - p_{k + 1})}{1 - \lambda_k} \leq \frac{\sqrt{n} + 1}{2\lambda_k L}. 
\end{align*}
On the other hand, by Eq.~\eqref{eq:coco-ineq-01a}, we have 
\begin{align*}
    \frac{c_{k + 1}(1 - p_{k + 1})}{1 - \lambda_k} \leq c_k - \frac{1}{2\lambda_k L}.
\end{align*}
To let these two inequalities hold simultaneously, we take $c_k - \frac{1}{2\lambda_k L} = \frac{\sqrt{n} + 1}{2\lambda_k L}$ for simplicity and obtain $c_k = \frac{\sqrt{n} + 2}{2\lambda_k L}$, thus $c_{k + 1} = \frac{\sqrt{n} + 2}{2\lambda_{k + 1} L}$. Then the above two inequalities are equivalent to 
\begin{align}\label{eq:coco-ineq-1}
    1 - p_{k + 1} &\leq \frac{(\sqrt{n}+1)(1-\lambda_k)}{2\lambda_k L c_{k+1}} \notag \\
    &=\frac{\sqrt{n} + 1}{\sqrt{n} + 2}\frac{\lambda_{k + 1}(1 - \lambda_k)}{\lambda_k} \notag \\
    &= \frac{\sqrt{n} + 1}{\sqrt{n} + 2}\frac{k + 2}{k + 5}, 
\end{align}
where we plug in the definition of $c_{k+1}$ for the first equality and the definition of $\lambda_k$ for the last equality. When $k \leq \sqrt{n}$, we have $\frac{\sqrt{n} + 1}{\sqrt{n} + 2} \geq \frac{k + 1}{k + 2}$, then it suffices to choose $p_{k + 1} = \frac{4}{k + 5}$ to ensure Eq.~\eqref{eq:coco-ineq-1} holds. When $k \geq \sqrt{n}$, we have $\frac{k + 2}{k + 5} \geq \frac{\sqrt{n} + 2}{\sqrt{n} + 5}$, and it suffices to take $p_{k + 1} = \frac{4}{\sqrt{n} + 5}$. This is the definition of $p_{k+1}$ and the proof is completed.
\end{proof}

\ckCoco*
\begin{proof}
    By Assumption~\ref{asp: asp3} on $F$ and the monotonicity of $G$, we have  for $k\geq 1$
\begin{align*}
    \frac{1}{nL}\sum_{i=1}^n\|F_i(\vu_{k + 1}) - F_i(\vu_{k})\|^2 \leq \innp{F(\vu_{k + 1}) + \vg_{k + 1}, \vu_{k + 1} - \vu_{k}} - \innp{F(\vu_{k}) + \vg_k, \vu_{k + 1} - \vu_{k}},
\end{align*}
where $\vg_{k+1} \in G(\vu_{k+1})$ and $\vg_k \in G(\vu_k)$.
Dividing both sides by $\lambda_k$, we get
\begin{align}
    \;& \frac{1}{\lambda_k nL}\sum_{i=1}^n\|F_i(\vu_{k + 1}) - F_i(\vu_{k})\|^2 \notag \\
    \leq \;& \frac{1}{\lambda_k}\innp{F(\vu_{k + 1}) + \vg_{k + 1}, \vu_{k + 1} - \vu_{k}} - \frac{1}{\lambda_k}\innp{F(\vu_{k}) + \vg_k, \vu_{k + 1} - \vu_{k}}.\label{eq: sfr4}
\end{align}
Recall that by the definition of the resolvent operator and the definition of $\vu_{k+1}$, we have for $k\geq 1$
\begin{align*}
    \vg_{k + 1} = \frac{1}{\eta}\big(\lambda_k\vu_0 + (1 - \lambda_k)\vu_k - \eta\tildeF(\vu_k) - \vu_{k + 1}\big) \in G(\vu_{k + 1}), 
\end{align*}
which lets us rewrite the algorithm updates as
\begin{equation*}
    \vu_{k + 1} = \lambda_{k}\vu_0 + (1 - \lambda_{k})\vu_k - \eta(\tildeF(\vu_k) + \vg_{k + 1}).
\end{equation*}
By simple rearrangements on this representation of $\vu_{k+1}$, we have for $k\geq 1$
\begin{subequations}
\begin{align}
    \vu_{k + 1} - \vu_k = \;& \lambda_{k}(\vu_0 - \vu_k) - \eta(\tildeF(\vu_{k}) + \vg_{k + 1}) \label{eq: sgr3}\\
    \vu_{k + 1} - \vu_k = \;& \frac{\lambda_{k}}{1 - \lambda_{k}}(\vu_0 - \vu_{k + 1}) - \frac{\eta}{1 - \lambda_{k}}(\tildeF(\vu_{k}) + \vg_{k + 1}).\label{eq: sgr4}
\end{align}
\end{subequations}
Plugging Eq.~\eqref{eq: sgr4} in the first term in the right-hand side of Eq.~\eqref{eq: sfr4} and Eq.~\eqref{eq: sgr3} in the second term in the right-hand side of Eq.~\eqref{eq: sfr4}, we obtain
\begin{align}
    \;& \frac{1}{\lambda_k nL}\sum_{i=1}^n\|F_i(\vu_{k + 1}) - F_i(\vu_{k})\|^2 \notag \\
    \leq \;& \frac{1}{1 - \lambda_{k}}\innp{F(\vu_{k + 1}) + \vg_{k + 1}, \vu_0 - \vu_{k + 1}}  - \frac{\eta}{\lambda_k(1 - \lambda_{k})}\innp{F(\vu_{k + 1}) + \vg_{k + 1}, \tildeF(\vu_k) + \vg_{k + 1}}  \notag \\
    & - \innp{F(\vu_{k}) + \vg_k, \vu_0 - \vu_{k}} + \frac{\eta}{\lambda_k}\innp{F(\vu_{k}) + \vg_k, \tildeF(\vu_k) + \vg_{k + 1}}.\label{eq: nbh2}
\end{align}
We next represent the second and fourth inner products in the right-hand side of Eq.~\eqref{eq: nbh2} with squared norms as 
\begin{align}
    \;& \frac{\eta}{\lambda_k}\innp{F(\vu_{k}) + \vg_k, \tildeF(\vu_k) + \vg_{k + 1}} \notag \\
    = \;& \frac{\eta}{2\lambda_k}\Big(\|F(\vu_{k}) + \vg_k\|^2 + \|\tildeF(\vu_k) + \vg_{k + 1}\|^2 - \|F(\vu_{k}) + \vg_k - \tildeF(\vu_k) - \vg_{k + 1}\|^2\Big) \notag \\
    \leq \;& \frac{\eta}{2\lambda_k}\Big(\|F(\vu_{k}) + \vg_k\|^2 + \|\tildeF(\vu_k) + \vg_{k + 1}\|^2 \Big)\label{eq: nbh3}
\end{align}
and 
\begin{align}\label{eq: swr4}
    \;& -\frac{\eta}{\lambda_k(1-\lambda_k)}\innp{F(\vu_{k + 1}) + \vg_{k + 1}, \tildeF(\vu_k) + \vg_{k + 1}} \notag \\
    = \;& -\frac{\eta}{2\lambda_k(1-\lambda_k)}\Big(\|F(\vu_{k + 1}) + \vg_{k + 1}\|^2 + \|\tildeF(\vu_k) + \vg_{k + 1}\|^2- \|F(\vu_{k + 1}) - \tildeF(\vu_k)\|^2\Big).
\end{align}
We now estimate the second term on the right-hand side of Eq.~\eqref{eq: swr4} by Young's inequality as
\begin{align*}
     -\frac{\eta}{2\lambda_k(1-\lambda_k)} \| \tildeF(\vu_k) + \vg_{k+1}\|^2
    = \;& -\left(\frac{\eta}{2\lambda_k} + \frac{\eta}{2(1-\lambda_k)} \right)\| \tildeF(\vu_k) + \vg_{k+1}\|^2 \\ 
    \leq \;& -\frac{\eta}{2\lambda_k} \| \tildeF(\vu_k) + \vg_{k+1} \|^2 - \frac{\eta}{4(1-\lambda_k)} \| F(\vu_{k+1}) + \vg_{k+1} \|^2 \\
    &\quad+ \frac{\eta}{2(1-\lambda_k)} \| F(\vu_{k+1})-\tildeF(\vu_k) \|^2.
\end{align*}
We use this estimation in Eq.~\eqref{eq: swr4} and combine like terms by also using the definition of $\lambda_k$ to get
\begin{align}
    \;& -\frac{\eta}{\lambda_k(1-\lambda_k)}\innp{F(\vu_{k + 1}) + \vg_{k + 1}, \tildeF(\vu_k) + \vg_{k + 1}} \notag \\
    \leq \;& -\frac{\eta}{2\lambda_{k+1}(1-\lambda_k)} \| F(\vu_{k+1}) + \vg_{k+1}\|^2 - \frac{\eta}{2\lambda_k} \| \tildeF(\vu_k) + \vg_{k+1}\|^2 \notag \\
    & + \frac{\eta(1+\lambda_k)}{2\lambda_{k}(1-\lambda_k)} \| F(\vu_{k+1}) - \tildeF(\vu_k)\|^2 \notag \\
    \leq \;&  -\frac{\eta}{2\lambda_{k+1}(1-\lambda_k)} \| F(\vu_{k+1}) + \vg_{k+1}\|^2 - \frac{\eta}{2\lambda_k} \| \tildeF(\vu_k) + \vg_{k+1}\|^2   \notag \\
    & + \frac{2\eta}{n\lambda_{k}} \sum_{i=1}^n\| F_i(\vu_{k+1}) - F_i(\vu_k) \|^2+ \frac{2\eta}{\lambda_{k}} \| F(\vu_k) - \tildeF(\vu_k)\|^2,\label{eq: nbh4}
\end{align}
where the last step is by Young's inequality, Jensen's inequality and $\frac{1+\lambda_k}{1-\lambda_k} \leq 2$.

Combining Eq.~\eqref{eq: nbh3} and Eq.~\eqref{eq: nbh4}, plugging into Eq.~\eqref{eq: nbh2}, and rearranging the terms gives
\begin{align*}
    \;& \frac{1}{1-\lambda_k}\left( \frac{\eta}{2\lambda_{k+1}} \| F(\vu_{k+1}) + \vg_{k+1} \|^2 + \innp{F(\vu_{k+1}) + \vg_{k+1}, \vu_{k+1} - \vu_0} \right) \notag \\
    \leq \;& \frac{\eta}{2\lambda_{k}} \| F(\vu_{k}) + \vg_{k} \|^2 + \innp{F(\vu_{k}) + \vg_{k}, \vu_{k} - \vu_0} \\
    & + \frac{1}{n\lambda_k}\left(2\eta - \frac{1}{L}\right) \sum_{i=1}^n\| F_i(\vu_{k+1}) - F_i(\vu_k) \|^2 + \frac{2\eta}{\lambda_{k}} \| F(\vu_k) - \tildeF(\vu_k)\|^2.
\end{align*}
Adding $\frac{c_{k + 1}}{1 - \lambda_k}\|F(\vu_{k + 1}) - \tildeF(\vu_{k + 1})\|^2-c_k\|F(\vu_k) - \tildeF(\vu_{k})\|^2$ to both sides and using the definition of $\gC_k$, we obtain 
\begin{align}
    \frac{1}{1 - \lambda_k}\gC_{k + 1}
    \leq \;& \gC_k + \frac{c_{k + 1}}{1 - \lambda_k}\|F(\vu_{k + 1}) - \tildeF(\vu_{k + 1})\|^2 + \left(\frac{2\eta}{\lambda_k} - c_k\right)\|F(\vu_k) - \tildeF(\vu_k)\|^2 \notag \\
    & + \frac{1}{n\lambda_k}\left(2\eta- \frac{1}{L}\right)\sum_{i=1}^n\|F_i(\vu_{k + 1}) - F_i(\vu_{k})\|^2.\label{eq: byt4}
\end{align}
We recall the result of Lemma~\ref{lem:PAGE} which  states for $k\geq 1$
\begin{align}
     \E[\|F(\vu_{k + 1}) - \tildeF(\vu_{k + 1})\|^2] 
    &\leq (1 - p_{k + 1})\mathbb{E}[\|F(\vu_k) - \tildeF(\vu_k)\|^2] \notag\\&\quad+ \frac{(1 - p_{k + 1})(n - b)}{b(n - 1)}\mathbb{E}\left[\sum_{i=1}^n\frac{1}{n}\|F_i(\vu_{k + 1}) - F_i(\vu_k)\|^2\right].\label{eq: glo4}
\end{align}
We take the expectation of both sides of~\eqref{eq: byt4}, and then upper bound the resulting second term on the right-hand side of~\eqref{eq: byt4} by~\eqref{eq: glo4}. As a result, we  have for $k\geq 1$ that
\begin{align}
    \frac{1}{1 - \lambda_k}\;&\E[\gC_{k + 1}] 
    \leq \E[\gC_k] + \left( \frac{2\eta}{\lambda_k} - c_k + \frac{c_{k + 1}(1 - p_{k + 1})}{1 - \lambda_k}\right)\mathbb{E}\|F(\vu_k) - \tildeF(\vu_k)\|^2 \notag \\
    & + \left(\frac{2\eta}{\lambda_k} - \frac{1}{\lambda_k L}+ \frac{c_{k + 1}(1 - p_{k + 1})(n - b)}{b(n - 1)(1 - \lambda_k)}\right)\mathbb{E}\left[\sum_{i=1}^n\frac{1}{n}\|F_i(\vu_{k + 1}) - F_i(\vu_k)\|^2\right]. \label{eq: seh5}
\end{align}
By Lemma~\ref{lem: coco_ineq}, we have that the second and third terms on the right-hand side of~\eqref{eq: seh5} are nonpositive. Hence, we get the result after multiplying both sides by $1-\lambda_k$.
\end{proof}

\thmCoco*
\begin{proof}
After iterating the result of Lemma~\ref{lem: cocoercive_decrease}, we have
    \begin{align*}
    \E[\gC_k] \leq \Big(\prod_{i = 1}^{k - 1}(1 - \lambda_i)\Big)\E[\gC_1]. 
\end{align*}
Since $\lambda_i = \frac{2}{i + 4}$, we have 
\begin{align*}
    \prod_{i = 1}^{k - 1}(1 - \lambda_i) = \prod_{i = 1}^{k - 1}\frac{i+2}{i + 4} = \frac{(k + 1)!/2!}{(k + 3)! / 4!} = \frac{12}{(k+2)(k + 3)}, 
\end{align*}
which leads to 
\begin{align}
    \E[\gC_k] \leq \frac{12}{(k+2)(k + 3)}\E[\gC_1].\label{eq: shg5}
\end{align}
Next, we bound $\E[\gC_1]$, recalling the definition in~\eqref{eq: ck_def}. First note that $\widetilde F(\vu_1) = F(\vu_1)$, we know that $\gC_1$ does not involve any randomness, i.e., $\E[\gC_1] = \gC_1$, and have
\begin{align*}
    \gC_1 = \frac{\eta}{2\lambda_1}\|F(\vu_1) + \vg_1\|^2 + \innp{F(\vu_1) + \vg_1, \vu_1 - \vu_0}.
\end{align*}
Further, by  the definition of $\vu_1$, we have
\begin{align*}
    \vu_1 = J_{\frac{\eta}{2\lambda_1} G}\left(\vu_0 - \frac{\eta}{2\lambda_1} F(\vu_0)\right) = \vu_0 - \frac{\eta}{2\lambda_1} F(\vu_0) - \frac{\eta}{2\lambda_1} \vg_1.
\end{align*}
With this, we obtain 
\begin{align*}
    \gC_1 = \;& \frac{\eta}{2\lambda_1}\|F(\vu_1) + \vg_1\|^2 + \innp{F(\vu_1) + \vg_1, \vu_1 - \vu_0} \\
    = \;& \frac{\eta}{2\lambda_1}\|F(\vu_1) + \vg_1\|^2 - \frac{\eta}{2\lambda_1}\innp{F(\vu_1) + \vg_1, F(\vu_0) + \vg_1}.
\end{align*}
Decomposing the inner product term above, by adding and subtracting $F(\vu_1)$ in the second argument, we obtain 
\begin{align*}
    \gC_1 = \;& \frac{\eta}{2\lambda_1}\|F(\vu_1) + \vg_1\|^2 - \frac{\eta}{2\lambda_1}\|F(\vu_1) + \vg_1\|^2 - \frac{\eta}{2\lambda_1}\innp{F(\vu_1) + \vg_1, F(\vu_0) - F(\vu_1)} \\
    = \;& \frac{\eta}{2\lambda_1}\innp{F(\vu_1) + \vg_1, F(\vu_1) - F(\vu_0)}.
\end{align*}
Plugging in the definition $\vg_1 = \frac{2\lambda_1}{\eta}\left( \vu_0 - \vu_1\right) -F(\vu_0)$, we obtain 
\begin{align*}
    \gC_1 = \;& \frac{\eta}{2\lambda_1}\innp{F(\vu_1) - F(\vu_0) + \frac{2\lambda_1}{\eta}(\vu_0 - \vu_1), F(\vu_1) - F(\vu_0)} \\
    = \;& \frac{\eta}{2\lambda_1}\|F(\vu_1) - F(\vu_0)\|^2 - \innp{F(\vu_1) - F(\vu_0), \vu_1 - \vu_0} \\
    \overset{(\romannumeral1)}{\leq} \;& 0, 
\end{align*}
where $(\romannumeral1)$ is by $\frac{1}{L}$-cocoercivity of $F$ and $\frac{\eta}{2\lambda_1} = \frac{5}{16L}<\frac{1}{L}$. So we obtain $\E[\gC_k] \leq 0$ in view of~\eqref{eq: shg5}.
Recalling the definition of $\gC_k$ and noticing the term $c_k\|F(\vu_k) - \tildeF(\vu_k)\|^2$ is nonnegative, we have 
\begin{align*}
    \E\left[\frac{\eta(k + 4)}{4}\|F(\vu_k) + \vg_k\|^2 + \innp{F(\vu_k) + \vg_k, \vu_k - \vu_0}\right] \leq 0.
\end{align*}
Since $\vu_*$ is a solution to~\eqref{eq: prob1}, there exists $\vg_*$ such that $\vg_* \in G(\vu_*)$ and $F(\vu_*) + \vg_* = \mathbf{0}$, then we have 
\begin{align*}
    \innp{F(\vu_k) + \vg_k, \vu_k - \vu_0} = \;& \innp{F(\vu_k) + \vg_k, \vu_k - \vu_*} + \innp{F(\vu_k) + \vg_k, \vu_* - \vu_0} \\
    \overset{(\romannumeral1)}{\geq} \;& \innp{F(\vu_k) + \vg_k, \vu_* - \vu_0} \\
    \overset{(\romannumeral2)}{\geq} \;& -\|F(\vu_k) + \vg_k\|\|\vu_0 - \vu_*\|, 
\end{align*}
where we use the monotonicity of $F+G$ for $(\romannumeral1)$ and Cauchy-Schwarz inequality for $(\romannumeral2)$. Noticing that $\E[\|F(\vu_k) + \vg_k\|] \leq \big(\E[\|F(\vu_k) + \vg_k\|^2]\big)^{1/2}$ by Jensen's inequality, then we have 
\begin{align*}
    \frac{\eta(k + 4)}{4}\E[\|F(\vu_k) + \vg_k\|^2] \leq \|\vu_0 - \vu_*\|\big(\E[\|F(\vu_k) + \vg_k\|^2]\big)^{1/2}
\end{align*}
Completing the square and then solving for the quadratic gives
\begin{align*}
    \big(\E[\mathrm{Res}^2_{F + G}(\vu_k)]\big)^{1/2} = \big(\E[\|F(\vu_k) + \vg_k\|^2]\big)^{1/2} \leq \frac{16L\|\vu_0 - \vu_*\|}{k + 4}. 
\end{align*}
Given $\varepsilon > 0$, to return a point $\vu_K$ such that $\big(\E[\|F(\vu_K) + \vg_K\|^2]\big)^{1/2} \leq \varepsilon$, which also guarantees that $\E[\|F(\vu_K) + \vg_K\|] \leq \varepsilon$ by Jensen's inequality, the total number of iterations required is $K =\lceil \frac{16L\|\vu_0 - \vu_*\|}{\varepsilon}\rceil$. To obtain the total number of stochastic oracle queries, we let $m_{k}$ be the number of individual operator evaluations at iteration $k$ (in which we compute $\vu_{k + 1}$ and $\tildeF(\vu_{k + 1})$ as Alg.~\ref{alg: cocoercive}) for $k \geq 1$, and $M = 2n + \sum_{k = 1}^{K - 1} m_{k}$ be the total number of individual operator evaluations to return $\vu_K$. Conditional on the natural filtration $\gF_k$ that contains all algorithm randomness up to and not including iteration $k$, we have for $k \geq 1$
\begin{equation*}
    \E[m_{k}~|~\gF_{k}] = p_{k + 1} n + (1 - p_{k + 1})2b = \begin{cases}
        \frac{4}{k + 5}n + 2\frac{k + 1}{k + 5}\lceil\sqrt{n}\rceil \quad & \text{if } k \leq \sqrt{n}, \\
        \frac{4}{\sqrt{n} + 5}n + 2\frac{\sqrt{n} + 1}{\sqrt{n} + 5}\lceil\sqrt{ n}\rceil \quad & \text{if } k \geq \sqrt{n}.
    \end{cases}
\end{equation*}
Taking expectation w.r.t.\ all randomness and summing from $k = 1$ to $K$, we obtain 
\begin{equation*}
\begin{aligned}
    \E[M] = \;& 2n+ \E\Big[\sum_{k = 1}^{K - 1} m_{k}\Big] \\
    = \;& 2n + \sum_{k = 1}^{\lfloor \sqrt{n} \rfloor}\E[m_{k}] + \sum_{k = \lceil \sqrt{n} \rceil}^{K - 1}\E[m_{k}] \\
    \leq \;& 2n + 4n\sum_{k = 1}^{\lfloor \sqrt{n} \rfloor}\frac{1}{k + 5} + 2\lceil \sqrt{n} \rceil^2 + 6\sqrt{n}(K - \sqrt{n}) \\
    \leq \;& 4n + 4\sqrt{n} + 2 + 4n\log(\sqrt{n} + 5) + \frac{96\sqrt{n}L\|\vu_0 - \vu_*\|}{\varepsilon} = \widetilde{\gO}\Big(n + \frac{\sqrt{n}L\|\vu_0 - \vu_*\|}{\varepsilon}\Big), 
\end{aligned}
\end{equation*}
thus completing the proof.
\end{proof}

\section{Omitted Proofs From Section~\ref{sec:lipschitz} (Monotone Lipschitz Settings)}\label{appx:lipschitz}
\pcTores*
\begin{proof}
By the definition of resolvent operator, we have 
\begin{equation*}
    \vu - \Bar{\vu} \in \eta F(\bar \vu) + \eta G(\bar \vu) \iff \vu \in \vu - P^{\eta}(\vu) + \eta F(\Bar{\vu}) + \eta G(\Bar{\vu}) \iff \mathbf{0} \in \eta F(\Bar{\vu}) + \eta G(\Bar{\vu}) - P^{\eta}(\vu).
\end{equation*}
So we have $\|\eta F(\Bar{\vu}) + \eta \Bar{\vg}\| \leq \eta\varepsilon$, thus $\|F(\Bar{\vu}) + \Bar{\vg}\| \leq \varepsilon$, where $\bar\vg = \frac{1}{\eta}(\vu-\bar\vu)-F(\bar\vu)\in G(\bar\vu)$.
\end{proof}

In the rest of this section, for readability, we first provide, in Section~\ref{appx:resolvent}, the proofs for the convergence of Alg.~\ref{alg: forb_vr_sc} from Section~\ref{sec:resolvent}. Then, in Section~\ref{appx:inexact-halpern}, we give the proofs for the convergence  of inexact Halpern iteration from Section~\ref{sec:inexact-halpern}.
\subsection{Approximating the Resolvent}\label{appx:resolvent}
\begin{lemma}\label{lem: feta}
Let $F: \R^d \rightarrow \R^d$ be monotone and $L_Q$-Lipschitz in expectation as in Assumption~\ref{asp: asp2}. Then for $\vu^+ \in \R^d$, $\Bar{F}^\eta(\vu; \vu^+) = \eta F(\vu) + \vu - \vu^+$ defined in Eq.~\eqref{eq:barF} is $1$-strongly monotone and $(\eta L_Q + 1)$-average Lipschitz w.r.t.\ $\vu$.
\end{lemma}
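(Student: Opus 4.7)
The plan is to verify both properties directly from the definition of $\bar F^\eta(\vu;\vu^+)=\eta F(\vu)+\vu-\vu^+$ together with Assumption~\ref{asp: asp2}. Observe first that the additive term $-\vu^+$ is constant in $\vu$ and will cancel in every difference $\bar F^\eta(\vu;\vu^+)-\bar F^\eta(\vv;\vu^+)$, so both claims reduce to statements about $\eta F(\vu)+\vu$ (and its stochastic counterpart $\eta F_\xi(\vu)+\vu$).

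For the strong monotonicity part, I would compute
\[
\langle \bar F^\eta(\vu;\vu^+)-\bar F^\eta(\vv;\vu^+),\,\vu-\vv\rangle \;=\; \eta\langle F(\vu)-F(\vv),\vu-\vv\rangle + \|\vu-\vv\|^2,
\]
then drop the first term using monotonicity of $F$ (from Assumption~\ref{asp: asp1}, which is subsumed here) to conclude $\langle \bar F^\eta(\vu;\vu^+)-\bar F^\eta(\vv;\vu^+),\vu-\vv\rangle\geq\|\vu-\vv\|^2$, i.e.\ $1$-strong monotonicity. This is a one-line computation.

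For the expected Lipschitz constant, I would introduce the natural stochastic oracle $\bar F^\eta_\xi(\vu;\vu^+)=\eta F_\xi(\vu)+\vu-\vu^+$, which clearly satisfies $\mathbb{E}_{\xi\sim Q}[\bar F^\eta_\xi(\vu;\vu^+)]=\bar F^\eta(\vu;\vu^+)$. Writing
\[
\bar F^\eta_\xi(\vu;\vu^+)-\bar F^\eta_\xi(\vv;\vu^+) \;=\; \eta\bigl(F_\xi(\vu)-F_\xi(\vv)\bigr) + (\vu-\vv),
\]
I would then apply the triangle inequality in $L^2(Q)$ (Minkowski), namely $\sqrt{\mathbb{E}\|X+Y\|^2}\leq\sqrt{\mathbb{E}\|X\|^2}+\sqrt{\mathbb{E}\|Y\|^2}$, with $X=\eta(F_\xi(\vu)-F_\xi(\vv))$ and $Y=\vu-\vv$ (which is deterministic). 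Invoking Assumption~\ref{asp: asp2} on the first summand and noting that $Y$ has no $\xi$-dependence yields
\[
\sqrt{\mathbb{E}_{\xi\sim Q}\|\bar F^\eta_\xi(\vu;\vu^+)-\bar F^\eta_\xi(\vv;\vu^+)\|^2} \;\leq\; \eta L_Q\|\vu-\vv\|+\|\vu-\vv\| \;=\; (\eta L_Q+1)\|\vu-\vv\|,
\]
and squaring gives the claimed $(\eta L_Q+1)$-Lipschitzness in expectation.

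There is no real obstacle here; the only thing to be careful about is to use Minkowski rather than a naive Cauchy--Schwarz expansion, since a term like $2\eta\,\mathbb{E}\langle F_\xi(\vu)-F_\xi(\vv),\vu-\vv\rangle$ would only give an in-expectation cross term and not directly the clean $(\eta L_Q+1)^2$ constant. Using Minkowski keeps the constant tight and matches the form used elsewhere in the paper.
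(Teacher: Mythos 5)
Your proof is correct, and the strong-monotonicity part coincides with the paper's argument line by line. For the expected Lipschitz bound you take a genuinely different but equally valid route: you apply the Minkowski inequality in $L^2(Q)$ to the decomposition $\eta\bigl(F_\xi(\vu)-F_\xi(\vv)\bigr)+(\vu-\vv)$, which immediately yields the $(\eta L_Q+1)$ constant after squaring. The paper instead expands $\E_{\xi\sim Q}\|\eta(F_\xi(\vu)-F_\xi(\vv))+(\vu-\vv)\|^2$ as a sum of three terms, bounds the quadratic term by $\eta^2 L_Q^2\|\vu-\vv\|^2$ via Assumption~\ref{asp: asp2}, and bounds the cross term $2\eta\,\E_{\xi\sim Q}\innp{F_\xi(\vu)-F_\xi(\vv),\vu-\vv}$ by $2\eta L_Q\|\vu-\vv\|^2$ using Cauchy--Schwarz together with Jensen's inequality ($\E\|X\|\leq(\E\|X\|^2)^{1/2}$); the three pieces then recombine exactly into $(\eta L_Q+1)^2\|\vu-\vv\|^2$. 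So your closing remark — that a Cauchy--Schwarz expansion ``would only give an in-expectation cross term and not directly the clean $(\eta L_Q+1)^2$ constant'' — is not quite right: with one extra Jensen step the direct expansion does produce the clean constant, and that is precisely what the paper does. The two derivations are mathematically equivalent (both are really the $L^2$ triangle inequality in disguise), and neither is tighter than the other; Minkowski is perhaps a touch more economical notationally, while the direct expansion makes explicit where each of the three terms in $(\eta L_Q+1)^2 = \eta^2 L_Q^2 + 2\eta L_Q + 1$ comes from.
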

\begin{proof}
For $\vu^+ \in \R^d$, strong monotonicity clearly follows since  for any $\vu, \vv \in \R^d$, we have
\begin{equation*}
    \innp{\Bar{F}^{\eta}(\vu; \vu^+) - \Bar{F}^{\eta}(\vv; \vu^+), \vu - \vv} = \eta \innp{F(\vu) - F(\vv), \vu - \vv} + \|\vu - \vv\|^2 \geq \|\vu - \vv\|^2, 
\end{equation*}
where we use monotonicity of $F$ for the last inequality. 
Further, since $F$ is $L$-average Lipschitz with oracle $F_\xi$ as Assumption~\ref{asp: asp2}, we have $\bar{F}_{\xi}^{\eta}(\vu; \vu^+) = \eta F_{\xi}(\vu) + \vu - \vu^+$ such that $\E_{\xi \sim Q}[\bar{F}_{\xi}^{\eta}(\vu; \vu^+)] = \eta F(\vu) + \vu - \vu^+ = \Bar{F}^{\eta}(\vu; \vu^+)$. Then for any $\vu, \vv \in \R^d$, 
\begin{equation*}
\begin{aligned}
    \;& \E_{\xi \sim Q}[\|\bar{F}_{\xi}^{\eta}(\vu; \vu^+) - \bar{F}_{\xi}^{\eta}(\vv; \vu^+)\|^2] \\
    = \;& \E_{\xi \sim Q}[\|\eta F_{\xi}(\vu) - \eta F_{\xi}(\vv) + \vu - \vv\|^2] \\
    = \;& \eta^2\E_{\xi \sim Q}[\|F_{\xi}(\vu) - F_{\xi}(\vv)\|^2] + \|\vu - \vv\|^2 + 2\eta\E_{\xi \sim Q}[\innp{F_{\xi}(\vu) - F_{\xi}(\vv), \vu - \vv}].
\end{aligned}
\end{equation*}
Using $L$-expected Lipschitzness of $F$ and Cauchy-Schwarz inequality for the quantity above, we obtain 
\begin{align*}
    \;& \E_{\xi \sim Q}[\|\bar{F}_{\xi}^{\eta}(\vu; \vu^+) - \bar{F}_{\xi}^{\eta}(\vv; \vu^+)\|^2] \\
    \leq \;& (\eta^2 L^2 + 1)\|\vu - \vv\|^2 + 2\eta\|\vu - \vv\|\E_{\xi \sim Q}[\|F_{\xi}(\vu) - F_{\xi}(\vv)\|] \\
    \overset{(\romannumeral1)}{\leq} \;& (\eta^2 L^2 + 2\eta L + 1)\|\vu - \vv\|^2 = (\eta L + 1)^2\|\vu - \vv\|^2.
\end{align*}
where $(\romannumeral1)$ is due to $\E_{\xi \sim Q}[\|F_{\xi}(\vu) - F_{\xi}(\vv)\|] \leq L\|\vu - \vv\|$ by Assumption~\ref{asp: asp2} and Jensen's inequality. Hence, $\bar{F}^{\eta}(\vu; \vu^+)$ is $(\eta L + 1)$-expected Lipschitz.
\end{proof}
We now move to the convergence proof for $\mathtt{VR{-}FoRB}$ which incorporates the loopless SVRG variance reduction technique~\citep{kovalev2020don,hofmann2015variance} into FoRB method~\citep{malitsky2020forward}.
\FoRB*
\begin{proof}
First, following the proof of~\cite[Theorem 22]{alacaoglu2021stochastic} with the addition of strong monotonicity of $A+B$ (\emph{cf.} the equation after Eq.~(56) therein using only monotonicity), and taking expectation w.r.t.\ all randomness on both sides, we plug in our parameter choices $\tau = \frac{\sqrt{p(1 - p)}}{2L_A}$ and $\alpha = 1 - p$, and then obtain
\begin{equation}\label{eq: grt3}
\begin{aligned}
    \;& (1-p+2\tau \mu)\mathbb{E}[\| \vv_{k+1} - \vv_*\|^2] + \mathbb{E}[\|\vw_{k+1} - \vv_*\|^2] + p \mathbb{E}[\| \vv_{k+1} - \vw_k\|^2] \\
    & + 2\tau \mathbb{E}[\langle A(\vv_{k+1}) - A(\vw_k), \vv_* - \vv_{k+1} \rangle] \\
    \leq \;& (1-p)\mathbb{E}[\| \vv_{k} - \vv_*\|^2] + \mathbb{E}[\|\vw_{k} - \vv_*\|^2] + p \mathbb{E}[\| \vv_{k} - \vw_{k-1}\|^2] \\
    & + 2\tau \mathbb{E}[\langle A(\vv_{k}) - A(\vw_{k-1}), \vv_* - \vv_{k} \rangle] - \frac{p}{2}\mathbb{E}[\| \vv_k - \vw_{k-1}\|^2] - \frac{1-p}{2} \E[\| \vv_{k+1} - \vv_k\|^2].
\end{aligned}
\end{equation}
To simplify the notation, we define
\begin{align*}
    a_k = \;& \frac{1}{2} \mathbb{E}[\| \vv_{k} - \vv_*\|^2], \\
    b_k = \;& \frac{1-p}{2} \mathbb{E}[\| \vv_k - \vv_*\|^2] + \mathbb{E}[\| \vw_k - \vv_*\|^2] + p\mathbb{E}[\| \vv_k - \vw_{k-1}\|^2] \\
    & + 2\tau \mathbb{E}[\langle A(\vv_k) - A(\vw_{k-1}), \vv_* - \vv_k \rangle].
\end{align*}
We first note that $b_k \geq 0$. Indeed, using Young's inequality, Lipschitzness of $A$ and the definition of $\tau$, we have 
\begin{align}
    |2\tau \langle A(\vv_k) - A(\vw_{k-1}), \vv_* - \vv_k \rangle| \leq \;& 2\tau L_A \|\vv_k - \vw_{k-1}\| \|\vv_k - \vv_*\| \notag  \\
    \leq \;& \frac{4\tau^2 L_A^2}{2(1-p)} \| \vv_{k} - \vw_{k-1}\|^2 + \frac{1-p}{2} \| \vv_k - \vv_*\|^2\notag \\
    = \;& \frac{p}{2} \| \vv_{k} - \vw_{k-1}\|^2 + \frac{1-p}{2} \| \vv_k - \vv_*\|^2.\label{eq: sfr3}
\end{align}
Then we have, by plugging in the definitions of $a_k, b_k$ in Eq.~\eqref{eq: grt3} and discarding the last term on the right-hand side, that 
\begin{align}
    (1 - p + 4\tau\mu) a_{k+1} + b_{k+1} \leq (1-p) a_k + b_k - \frac{p}{2} \E[\|\vv_k - \vw_{k-1}\|^2]. \label{eq: slo4}
\end{align}
For a constant $c > 0$ to be set later, let us write the right-hand side of Eq.~\eqref{eq: slo4} as 
\begin{align}
    (1-p)a_k + b_k = \;& (1-p)(1+c) a_k + (1-c) b_k + c\mathbb{E}[\|\vw_k - \vv_*\|^2] + pc\mathbb{E}[\|\vv_k - \vw_{k-1}\|^2] \notag \\
    & + 2\tau  c\mathbb{E}[\langle A(\vv_k) - A(\vw_{k-1}), \vv_* - \vv_k \rangle].\label{eq: vbn4}
\end{align}
By using the definition of $\vw_k$ and the tower rule, we have
\begin{align*}
    c\mathbb{E}[\|\vw_k - \vv_*\|^2] \leq \;& 2c\mathbb{E}[\|\vv_k - \vv_*\|^2] + 2c\mathbb{E}[\| \vv_k - \vw_{k}\|^2] \\
    = \;& 2c\mathbb{E}[\|\vv_k - \vv_*\|^2] + 2c(1-p)\mathbb{E}[\| \vv_k - \vw_{k-1}\|^2].
\end{align*}
Combining this estimate with Eq.~\eqref{eq: sfr3}, and using trivial facts that $cp \leq c$ and $1 - p \leq 1$ to further manipulate Eq.~\eqref{eq: vbn4} gives
\begin{align}
    (1-p)a_k + b_k \leq \;& (1-p)(1+c) a_k + (1-c) b_k + \frac{5c}{2} \mathbb{E}[\| \vv_k - \vv_*\|^2] + \frac{7c}{2} \mathbb{E}[\| \vv_{k} - \vw_{k-1}\|^2] \notag \\
    \leq \;& (1-p+6c)a_k + (1-c)b_k + \frac{7c}{2} \mathbb{E}[\| \vv_{k} - \vw_{k-1}\|^2].\label{eq: sfr5}
\end{align}
We now let $\frac{7c}{2} \leq \frac{p}{2}$ and use the estimation Eq.~\eqref{eq: sfr5} in Eq.~\eqref{eq: slo4} to get
\begin{equation}\label{eq: nhf4}
    (1 - p + 4\tau\mu)a_{k+1} + b_{k+1} \leq (1-p+6c) a_k + (1-c)b_k.
\end{equation}
Let us also set $6c \leq 3\tau\mu$ and then
\begin{align*}
    (1 - p + 4\tau\mu)a_{k+1} + b_{k+1} \leq \;& (1 - p + 3\tau\mu) a_k + (1-c)b_k \\
    = \;& \Big( 1-\frac{\tau\mu}{1 - p + 4\tau\mu} \Big)(1 - p + 4\tau\mu) a_k + (1-c)b_k \\
    \leq \;& \Big( 1-\min\big\{ \frac{\tau\mu}{1 + 4\tau\mu}, c\big\} \Big)  \left((1 - p + 4\tau\mu) a_k + b_k \right),
\end{align*}
where the last step used nonnegativity of $b_k$. Let us also define $c=\min\left\{ \frac{p}{7}, \frac{\tau\mu}{2} \right\}$.
By iterating the inequality and using $b_{k} \geq 0$, we have 
\begin{align}\label{eq: dwe3}
    (1 - p + 4\tau\mu)a_{k} \leq \Big( 1-\min\big\{ \frac{\tau\mu}{1 + 4\tau\mu}, c\big\} \Big)^{k}\big((1 - p + 4\tau\mu)a_0 + b_0\big)
\end{align}
thus 
\begin{align*}
    \E[\|\vv_k - \vv_*\|^2] \leq 2\Big( 1-\min\big\{ \frac{\tau\mu}{1 + 4\tau\mu}, c\big\} \Big)^{k}\Big(a_0 + \frac{1}{1 - p + 4\tau\mu}b_0\Big).
\end{align*}
Further noticing $\vv_0 = \vw_0 = \vw_{-1}$, then we have 
\begin{align*}
    a_0 + \frac{1}{1 - p + 4\tau\mu}b_0 = \;& \frac{1}{2}\|\vv_0 - \vv_*\|^2 + \frac{1}{1 - p + 4\tau\mu}\Big(\frac{1 - p}{2}\|\vv_0 - \vv_*\|^2 + \|\vv_0 - \vv_*\|^2\Big) \\
    \overset{(\romannumeral1)}{\leq} \;& \Big(\frac{1}{2} + \frac{1}{2} + \frac{1}{1 - p}\Big)\|\vv_0 - \vv_*\|^2 = \frac{2 - p}{1 - p}\|\vv_0 - \vv_*\|^2, 
\end{align*}
where we use $1 - p + 4\tau\mu \geq 1 - p$ for $(\romannumeral1)$. So we obtain 
\begin{equation*}
    \mathbb{E}[\| \vv_{k} - \vv_*\|^2] \leq \Big( 1-\min\big\{ \frac{\tau\mu}{1 + 4\tau\mu}, c\big\} \Big)^{k} \frac{4-2p}{1-p}\| \vv_0 - \vv_*\|^2.
\end{equation*}
Given $\bar \varepsilon > 0$, we now see the exact number of iterations to guarantee $\E[\|\vv_k - \vu_*\|^2] \leq \bar \varepsilon^2$. We have 
\begin{align}
    \mathbb{E}[\| \vv_{k} - \vv_*\|^2] \leq \;& \Big( 1-\min\big\{ \frac{\tau\mu}{1 + 4\tau\mu}, c\big\} \Big)^{k} \frac{4-2p}{1-p}\| \vv_0 - \vv_*\|^2 \notag \\
    \leq \;& 6\exp\Big(-\min\big\{ \frac{\tau\mu}{1 + 4\tau\mu}, c\big\}k\Big)\| \vv_0 - \vv_*\|^2, \label{eq: vvf4}
\end{align}
where we use $p = \frac{1}{n}$ and assume $n \geq 2$ without loss of generality, thus 
\begin{align*}
    k \geq 2\max\left\{ \frac{1 + 4\tau\mu}{\tau\mu}, \frac{1}{c}\right\}\log\big(\frac{\sqrt{6}\|\vv_0 - \vv_*\|}{\bar \varepsilon}\big) = 2\max\left\{ \frac{1 + 4\tau\mu}{\tau\mu}, \frac{7}{p}, \frac{2}{\tau\mu}\right\}\log\big(\frac{\sqrt{6}\|\vv_0 - \vv_*\|}{\bar\varepsilon}\big).
\end{align*}
using $c=\min\left\{ \frac{p}{7}, \frac{\tau\mu}{2} \right\}$. Plugging in the choices that $p = \frac{1}{n}$ and $\tau = \frac{\sqrt{p(1 - p)}}{2L_A} = \frac{\sqrt{n - 1}}{2nL_A}$, we have 
\begin{align}\label{eq:forb-k}
    k \geq 2\max\left\{4 + \frac{2nL_A}{\mu\sqrt{n - 1}}, 7n, \frac{4nL_A}{\mu\sqrt{n - 1}}\right\}\log\big(\frac{\sqrt{6}\|\vv_0 - \vv_*\|}{\bar \varepsilon}\big), 
\end{align}
For simplicity, we assume without loss of generality that $n \geq 2$ and $L_A \geq 1$, then we have $\frac{nL_A}{\sqrt{n - 1}} \geq 2$ and $\frac{4nL_A}{\sqrt{n - 1}} \leq 7\sqrt{n}L_A$.
So it suffices to choose $k = \big\lceil 14\max\big\{n, \frac{\sqrt{n}L_A}{\mu}\big\}\log\big(\frac{\sqrt{6}\|\vv_0 - \vv_*\|}{\bar \varepsilon}\big)\big\rceil$. 
Further, we notice that Alg.~\ref{alg: forb_vr_sc} has constant per-iteration average cost $pn + 2 = 3$ by $p = \frac{1}{n}$, so we have the oracle complexity to be $\gO\Big(\big(n + \frac{\sqrt{n}L_A}{\mu}\big)\log\big(\frac{\|\vv_0 - \vv_*\|}{\bar \varepsilon}\big)\Big)$.
\end{proof}

\subsection{Inexact Halpern Iteration}\label{appx:inexact-halpern}
\begin{restatable}{lemma}{uk}\label{lemma:uk-rec}
Let $F$ be monotone and Lipschitz, and $G$ be maximally monotone. 
Assume that conditional on the algorithm randomness up to iteration k, we have $\E_k[\|\ve_k\|^2] \leq \frac{\|P^\eta(\vu_k)\|^2}{(k + 2)^8} = \frac{\|\vu_k - J_{\eta(F + G)}(\vu_k)\|^2}{(k + 2)^8}$ for all $k \geq 1$ and $\E[\|\ve_0\|^2] \leq \frac{\|P^\eta(\vu_0)\|^2}{27}$. Then the iterates $\vu_k$ generated by Alg.~\ref{alg:monotone} satisfy 
\begin{align}
    \E[\|\vu_k - \vu_*\|^2] \leq 2\|\vu_0 - \vu_*\|^2
\end{align}
where $\vu_*$ is the solution point such that $J_{\eta(F + G)}(\vu_*) = \vu_*$.
\end{restatable}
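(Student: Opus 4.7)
The plan is to reduce the statement to a one-step recursion for $D_k := \bigl(\E[\|\vu_k-\vu_*\|^2]\bigr)^{1/2}$ and close it by a short induction. I would start by rewriting the update using $\widetilde J_{\eta(F+G)}(\vu_k)=J_{\eta(F+G)}(\vu_k)-\ve_k$ and subtracting $\vu_*=J_{\eta(F+G)}(\vu_*)$ (a fixed point by assumption), giving
\[
\vu_{k+1}-\vu_* = \lambda_k(\vu_0-\vu_*) + (1-\lambda_k)\bigl(J_{\eta(F+G)}(\vu_k)-\vu_*\bigr) - (1-\lambda_k)\ve_k.
\]
Applying Minkowski's inequality in $L^2$ and using nonexpansiveness of $J_{\eta(F+G)}$ (since $F+G$ is maximally monotone), I obtain
\[
D_{k+1}\le \lambda_k D_0 + (1-\lambda_k)D_k + (1-\lambda_k)\bigl(\E\|\ve_k\|^2\bigr)^{1/2}.
\]

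Next I would bound the noise term. By the tower rule together with the hypothesis $\E_k[\|\ve_k\|^2]\le \|P^\eta(\vu_k)\|^2/(k+2)^8$, it holds that $\E\|\ve_k\|^2\le \E\|P^\eta(\vu_k)\|^2/(k+2)^8$. Since $\vu_*=J_{\eta(F+G)}(\vu_*)$, the triangle inequality and nonexpansiveness yield $\|P^\eta(\vu_k)\|=\|\vu_k-J_{\eta(F+G)}(\vu_k)\|\le 2\|\vu_k-\vu_*\|$, so $(\E\|\ve_k\|^2)^{1/2}\le 2D_k/(k+2)^4$ for $k\ge 1$, and similarly $(\E\|\ve_0\|^2)^{1/2}\le 2D_0/\sqrt{27}$. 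Substituting $\lambda_k=1/(k+2)$ then produces the recursion
\[
D_{k+1}\le \tfrac{1}{k+2}D_0 + \tfrac{k+1}{k+2}\Bigl(1+\tfrac{2}{(k+2)^4}\Bigr)D_k\qquad(k\ge 1),
\]
together with the base bound $D_1\le \bigl(1+\tfrac{1}{\sqrt{27}}\bigr)D_0$.

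I would then prove $D_k\le \sqrt{2}D_0$ by induction on $k$. The base case $k=1$ is immediate since $1+1/\sqrt{27}<\sqrt{2}$. For the inductive step, plugging $D_k\le \sqrt{2}D_0$ into the recursion reduces the claim to the scalar inequality
\[
\tfrac{1}{k+2}+\tfrac{2\sqrt{2}(k+1)}{(k+2)^5}\le \tfrac{\sqrt{2}}{k+2},\quad\text{equivalently,}\quad \tfrac{2\sqrt{2}(k+1)}{(k+2)^4}\le \sqrt{2}-1,
\]
and the left-hand side is maximized at $k=1$ with value $4\sqrt{2}/81\approx 0.07$, which is comfortably below $\sqrt{2}-1\approx 0.414$. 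Squaring the resulting bound $D_k\le \sqrt{2}D_0$ yields the claim.

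The only mildly subtle point is the base case, which is precisely why the hypothesis includes the stronger explicit bound $\E\|\ve_0\|^2\le \|P^\eta(\vu_0)\|^2/27$ at $k=0$ (instead of the $(k+2)^{-8}$ scaling that would give only $(k+2)^{-8}=1/256$ at $k=0$—still fine, but the proof is written to accommodate the slightly weaker constant $1/27$). After that, the $(k+2)^{-4}$ decay of the noise amplification swamps any slack in the standard Halpern contraction and the induction closes routinely; no clever potential is needed.
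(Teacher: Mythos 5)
Your argument is correct, and it takes a genuinely different and leaner route than the paper's. The paper iterates the one-step triangle inequality \emph{in norm} from $k$ down to $1$, obtaining $\|\vu_k - \vu_*\| \leq \|\vu_0 - \vu_*\| + \frac{1}{k+1}\sum_{i=1}^k i \|\ve_{i-1}\|$, then squares, applies Young's inequality and $\bigl(\sum_{i=1}^k x_i\bigr)^2 \leq k\sum_{i=1}^k x_i^2$, and arrives at a \emph{cumulative} recursion
\[
\E[\|\vu_k - \vu_*\|^2] \leq \tfrac{3}{2}\|\vu_0 - \vu_*\|^2 + \tfrac{12}{k+1}\sum_{i=1}^{k-1}\frac{\E[\|\vu_i - \vu_*\|^2]}{(i+1)^6} + \tfrac{4\|\vu_0 - \vu_*\|^2}{9(k+1)},
\]
which it closes by induction and $\sum_{i\geq 1} i^{-6} = \pi^6/945$. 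You replace this with a \emph{one-step} recursion for $D_k = (\E\|\vu_k - \vu_*\|^2)^{1/2}$ via the $L^2$ triangle (Minkowski) inequality, which avoids ever unrolling the sum of past errors: $D_{k+1} \leq \tfrac{1}{k+2}D_0 + \tfrac{k+1}{k+2}\bigl(1 + \tfrac{2}{(k+2)^4}\bigr) D_k$. The induction then reduces to the scalar inequality $2\sqrt{2}(k+1)/(k+2)^4 \leq \sqrt{2}-1$, whose left side is decreasing in $k \geq 1$ and is at most $4\sqrt{2}/81$ at $k=1$. All the supporting facts you invoke --- nonexpansiveness of the resolvent, $2$-Lipschitzness of $P^\eta$ with $P^\eta(\vu_*)=\mathbf{0}$, the tower rule to pass from conditional to total expectation --- are used correctly, and the base case $D_1 \leq (1 + 1/\sqrt{27})D_0 < \sqrt{2}D_0$ matches the hypothesis at $k=0$. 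Net: your route is shorter, sharper (single-step contraction in $D_k$ rather than a sum over all history), and dispenses with the $\pi^6/945$ estimate and the Young/Cauchy--Schwarz bookkeeping; the only substantive ingredient it adds is the $L^2$ form of the triangle inequality.
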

\begin{proof}
By Eq.~\eqref{eq:inexact-halpern}, definition of $P^\eta$ and noticing $J_{\eta(F + G)}(\vu_*) = \vu_*$, we have 
\begin{align*}
    \;& \|\vu_k - \vu_*\| \\
    = \;& \|\lambda_{k - 1}(\vu_0 - \vu_*) + (1 - \lambda_{k - 1})(J_{\eta(F + G)}(\vu_{k - 1}) - J_{\eta(F + G)}(\vu_*)) - (1 - \lambda_{k - 1})\ve_{k - 1}\| \\
    \overset{(\romannumeral1)}{\leq} \;& \lambda_{k - 1}\|\vu_0 - \vu_*\| + (1 - \lambda_{k - 1})\|J_{\eta(F + G)}(\vu_{k - 1}) - J_{\eta(F + G)}(\vu_*)\| - (1 - \lambda_{k - 1})\|\ve_{k - 1}\| \\
    \overset{(\romannumeral2)}{\leq} \;& \lambda_{k - 1}\|\vu_0 - \vu_*\| + (1 - \lambda_{k - 1})\|\vu_{k - 1} - \vu_*\| + (1 - \lambda_{k - 1})\|\ve_{k - 1}\|, 
\end{align*}
where we use the triangle inequality for $(\romannumeral1)$ and $(\romannumeral2)$ is due to nonexpansivity of resolvent operators. Iterating this inequality till $k = 1$, we obtain 
\begin{align*}
    \|\vu_k - \vu_*\| \leq \;& \Big(\underbrace{\lambda_{k - 1} + \prod_{i = 1}^k(1 - \lambda_{i - 1}) + \sum_{i = 1}^{k - 1} \big(\lambda_{i - 1}\prod_{j = i + 1}^k(1 - \lambda_{j - 1})\big)}_{\gT_{[1]}}\Big)\|\vu_0 - \vu_*\| \\
    & + \sum_{i = 1}^k \Big(\|\ve_{i - 1}\|\underbrace{\prod_{j = i}^k (1 - \lambda_{j - 1}))}_{\gT^{(i)}_{[2]}}\Big).
\end{align*}
Plugging in our choice that $\lambda_{i} = \frac{1}{i + 2}$ such that 
\begin{align*}
    \gT_{[1]} = \;& \frac{1}{k + 1} + \prod_{i = 1}^k\frac{i}{i + 1} + \sum_{i = 1}^{k - 1}\frac{1}{i + 1}\frac{i + 1}{k + 1} = 1, \quad \gT_{[2]}^{(i)} = \prod_{j = i}^k\frac{j}{j + 1} = \frac{i}{k + 1}, 
\end{align*}
 we then have 
\begin{equation*}
    \|\vu_k - \vu_*\| \leq \|\vu_0 - \vu_*\| + \frac{1}{k + 1}\sum_{i = 1}^k i\|\ve_{i - 1}\|.
\end{equation*}
Squaring the terms on both sides and taking expectation w.r.t.\ all randomness on both sides, we obtain 
\begin{align*}
    \E[\|\vu_k - \vu_*\|^2] \leq \;& \E\Big[\|\vu_0 - \vu_*\| + \frac{1}{k + 1}\sum_{i = 1}^k i\|\ve_{i - 1}\|\Big]^2 \\
    \overset{(\romannumeral1)}{\leq} \;& \frac{3}{2}\|\vu_0 - \vu_*\|^2 + \frac{3}{(k + 1)^2}\E\Big[\sum_{i = 1}^k i\|\ve_{i - 1}\|\Big]^2 \\
    \overset{(\romannumeral2)}{\leq} \;& \frac{3}{2}\|\vu_0 - \vu_*\|^2 + \frac{3}{k + 1}\E\Big[\sum_{i = 1}^k i^2\|\ve_{i - 1}\|^2\Big], 
\end{align*}
where we use Young's inequality for $(\romannumeral1)$ and the fact that $\big(\sum_{i = 1}^k x_i\big)^2 \leq k\sum_{i = 1}^k x_i^2$ for any $x_i \in \sR$ for $(\romannumeral2)$. 
Since $\E_k[\|\ve_k\|^2] \leq \frac{\|P^\eta(\vu_k)\|^2}{(k + 2)^8} = \frac{\|\vu_k - J_{\eta(F + G)}(\vu_k)\|^2}{(k + 2)^8}$ for all $k \geq 1$, thus $\E[\|\ve_k\|^2] \leq \frac{\E[\|P^\eta(\vu_k)\|^2]}{(k + 2)^8}$ by the law of total expectation, and $\E[\|\ve_0\|^2] \leq \frac{\|P^\eta(\vu_0)\|^2}{27}$, then we have  
\begin{align*}
    \E[\|\vu_k - \vu_*\|^2] \leq \;& \frac{3}{2}\|\vu_0 - \vu_*\|^2 + \frac{3}{k + 1}\sum_{i = 2}^k i^2\frac{\E[\|P^\eta(\vu_{i - 1})\|^2]}{(i + 2)^8} + \frac{\|P^\eta(\vu_0)\|^2}{9(k + 1)} \\
    \leq \;& \frac{3}{2}\|\vu_0 - \vu_*\|^2 + \frac{3}{k + 1}\sum_{i = 1}^{k - 1} \frac{\E[\|P^\eta(\vu_{i})\|^2]}{(i + 1)^6} + \frac{\|P^\eta(\vu_0)\|^2}{9(k + 1)}.
\end{align*}
Noticing that $P^\eta(\vu_*) = \mathbf{0}$ and $P^\eta$ is $2$-Lipschitz by nonexpansivity of the resolvent operator, we have for all $i \geq 0$
\begin{align*}
    \|P^\eta(\vu_i)\|^2 = \|P^\eta(\vu_i) - P^\eta(\vu_*)\|^2 \leq 4\|\vu_i - \vu_*\|^2, 
\end{align*}
which leads to 
\begin{align}
    \E[\|\vu_k - \vu_*\|^2] \leq \;& \frac{3}{2}\|\vu_0 - \vu_*\|^2 + \frac{12}{k + 1}\sum_{i = 1}^{k - 1} \frac{\E[\|\vu_i - \vu_*\|^2]}{(i + 1)^6} + \frac{4\|\vu_0 - \vu_*\|^2}{9(k + 1)}.\label{eq:uk-recursion}
\end{align}
We claim that for all $k \geq 1$: 
\begin{align*}
    \E[\|\vu_k - \vu_*\|^2] \leq 2\|\vu_0 - \vu_*\|^2. 
\end{align*}
We prove this claim by induction on $k$. First for the base case $k = 1$, we have by Eq.~\eqref{eq:uk-recursion} 
\begin{align*}
    \E[\|\vu_1 - \vu_*\|^2] \leq \frac{3}{2}\|\vu_0 - \vu_*\|^2 + \frac{2}{9}\|\vu_0 - \vu_*\|^2 \leq 2\|\vu_0 - \vu_*\|^2.
\end{align*}
Suppose that $\E[\|\vu_i - \vu_*\|^2] \leq 2\|\vu_0 - \vu_*\|^2$ for all $i \leq k - 1$, then for the case $k \geq 2$, we have by Eq.~\eqref{eq:uk-recursion} 
\begin{align*}
    \E[\|\vu_k - \vu_*\|^2] \leq \;& \frac{3}{2}\|\vu_0 - \vu_*\|^2 + \frac{12}{k + 1}\sum_{i = 1}^{k - 1} \frac{\E[\|\vu_i - \vu_*\|^2]}{(i + 1)^6} + \frac{4\|\vu_0 - \vu_*\|^2}{9(k + 1)} \\
    \leq \;& \frac{3}{2}\|\vu_0 - \vu_*\|^2 + \frac{24\|\vu_0 - \vu_*\|^2}{k + 1}\big(\sum_{i = 1}^{\infty} \frac{1}{i^6} - 1\big) +  \frac{4\|\vu_0 - \vu_*\|^2}{9(k + 1)} \\
    \overset{(\romannumeral2)}{\leq} \;& 2\|\vu_0 - \vu_*\|^2, 
\end{align*}
where we use $k \geq 2$ and $\sum_{i = 1}^{\infty} \frac{1}{i^6} = \frac{\pi^6}{945}$ for $(\romannumeral1)$. So our claim holds.
\end{proof}

\begin{restatable}{lemma}{ckMono}
\label{lemma:eck-change}
Let $F$ be monotone and Lipschitz and let $G$ be maximally monotone.
Assume that conditional on the algorithm randomness up to
iteration k, we have $\E_k[\|\ve_k\|^2] \leq \frac{\|P^\eta(\vu_k)\|^2}{(k + 2)^8}$ for all $k \geq 1$ and $\E[\|\ve_0\|^2] \leq \frac{\|P^\eta(\vu_0)\|^2}{27}$. Then for the iterates $\vu_k$ generated by Alg.~\ref{alg:monotone}, we have  
\begin{align}\label{eq:eck-change}
    \E[\gC_{k + 1}] - \E[\gC_k]  
    \leq \frac{16\|\vu_0 - \vu_*\|^2}{(k + 2)^2}, 
\end{align}
where 
\begin{align}
    \gC_k = \frac{k(k + 1)}{2}\|P^\eta(\vu_k)\|^2 + (k + 1)\innp{P^\eta(\vu_k), \vu_k - \vu_0}.
\end{align}
\end{restatable}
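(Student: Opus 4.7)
The plan is to derive an explicit identity for $\gC_{k+1}-\gC_k$ from the update~\eqref{eq:inexact-halpern}, use the $\tfrac{1}{2}$-cocoercivity of $P^\eta=\mathrm{Id}-J_{\eta(F+G)}$ (which follows from firm nonexpansiveness of $J_{\eta(F+G)}$) to make all quadratic-in-$P^\eta$ contributions cancel exactly, and then absorb the residual cross-terms involving $\ve_k$ by Cauchy--Schwarz at the level of the full expectation.

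First I would use $\lambda_k=1/(k+2)$ to rewrite~\eqref{eq:inexact-halpern} as
\[
\vu_{k+1}-\vu_0 = (1-\lambda_k)(\vu_k-\vu_0) - (1-\lambda_k)P^\eta(\vu_k) - (1-\lambda_k)\ve_k,
\]
and analogously $\vu_{k+1}-\vu_k = -\lambda_k(\vu_k-\vu_0)-(1-\lambda_k)P^\eta(\vu_k)-(1-\lambda_k)\ve_k$. Substituting the first identity into the definition of $\gC_{k+1}$ and using $(k+2)(1-\lambda_k)=k+1$ produces
\[
\gC_{k+1} = \tfrac{(k+1)(k+2)}{2}\|P^\eta(\vu_{k+1})\|^2 + (k+1)\innp{P^\eta(\vu_{k+1}),\vu_k-\vu_0} - (k+1)\innp{P^\eta(\vu_{k+1}),P^\eta(\vu_k)+\ve_k}.
\]
Applying the cocoercivity inequality $\innp{P^\eta(\vu_{k+1})-P^\eta(\vu_k),\vu_{k+1}-\vu_k}\geq \tfrac{1}{2}\|P^\eta(\vu_{k+1})-P^\eta(\vu_k)\|^2$ with the expression for $\vu_{k+1}-\vu_k$ yields an upper bound on $(k+1)\innp{P^\eta(\vu_{k+1})-P^\eta(\vu_k),\vu_k-\vu_0}$; plugging this back and simplifying, a direct check shows that the coefficients of $\|P^\eta(\vu_{k+1})\|^2$, $\|P^\eta(\vu_k)\|^2$ and $\innp{P^\eta(\vu_{k+1}),P^\eta(\vu_k)}$ each telescope to zero, leaving
\[
\gC_{k+1}-\gC_k \leq (k+1)^2\innp{P^\eta(\vu_k),\ve_k} - (k+1)(k+2)\innp{P^\eta(\vu_{k+1}),\ve_k}.
\]

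To finish, I would take full expectation of both sides and bound each inner product by Cauchy--Schwarz. The hypotheses $\E_k[\|\ve_k\|^2]\leq \|P^\eta(\vu_k)\|^2/(k+2)^8$ for $k\geq 1$ and $\E[\|\ve_0\|^2]\leq \|P^\eta(\vu_0)\|^2/27$, combined with Lemma~\ref{lemma:uk-rec} and the $2$-Lipschitzness of $P^\eta$ (using $P^\eta(\vu_*)=\mathbf{0}$), yield the uniform bounds $\E[\|P^\eta(\vu_j)\|^2]\leq 4\E[\|\vu_j-\vu_*\|^2]\leq 8\|\vu_0-\vu_*\|^2$ and thus $\E[\|\ve_k\|^2]\leq 8\|\vu_0-\vu_*\|^2/(k+2)^8$. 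Cauchy--Schwarz then bounds each full-expectation cross-term by $\sqrt{8}\|\vu_0-\vu_*\|\cdot \sqrt{8}\|\vu_0-\vu_*\|/(k+2)^4 = 8\|\vu_0-\vu_*\|^2/(k+2)^4$; multiplying by the coefficients $(k+1)^2$ and $(k+1)(k+2)$ produces $8\|\vu_0-\vu_*\|^2/(k+2)^2$ per term, which sum to the claimed $16\|\vu_0-\vu_*\|^2/(k+2)^2$.

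The main obstacle is the dependence of $P^\eta(\vu_{k+1})$ on $\ve_k$, which prevents a tower-property reduction of $\E[\innp{P^\eta(\vu_{k+1}),\ve_k}]$. However, since the conditional error is taken to decay as $(k+2)^{-8}$ while we only need $(k+2)^{-2}$ in the final bound, the crude Cauchy--Schwarz at the full-expectation level -- using the a priori $\ell^2$ bound on $P^\eta(\vu_{k+1})$ from Lemma~\ref{lemma:uk-rec} -- leaves ample slack and avoids any conditional decoupling argument. The only step that requires real care is the algebraic cancellation of the quadratic and pairwise $P^\eta$ terms after substituting the cocoercivity bound, which is the natural discrete analog of the classical exact Halpern potential identity.
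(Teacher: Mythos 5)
Your proof is correct and follows essentially the same route as the paper: you derive the identical key potential inequality $\gC_{k+1}-\gC_k \le (k+1)^2\innp{\ve_k,P^\eta(\vu_k)} - (k+1)(k+2)\innp{\ve_k,P^\eta(\vu_{k+1})}$ from the $\tfrac12$-cocoercivity of $P^\eta$ and the two rearrangements of the update, and then control the error cross-terms with the conditional bound on $\E_k\|\ve_k\|^2$ combined with the a~priori second-moment bound of Lemma~\ref{lemma:uk-rec}. The only cosmetic difference is that you apply Cauchy--Schwarz at the level of full expectations, whereas the paper splits the cross-term pointwise via Young's inequality before taking expectations; both yield the same constant $16$ (and note that the quadratic and bilinear coefficients do not all ``telescope to zero'' --- those in $\|P^\eta(\vu_k)\|^2$ and $\innp{P^\eta(\vu_k),\vu_k-\vu_0}$ telescope to match $\gC_k$, which is what your displayed inequality actually reflects).
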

\begin{proof}
By Eq.~\eqref{eq:inexact-halpern} and the definition of $\ve_k$, we have 
\begin{align}
    \vu_{k + 1} - \vu_k = \;& \lambda_k(\vu_0 - \vu_k) - (1 - \lambda_k)P^\eta(\vu_k) - (1 - \lambda_k)\ve_k, \label{eq:uk-diff-1}\\
    \vu_{k + 1} - \vu_k = \;& \frac{\lambda_k}{1 - \lambda_k}(\vu_0 - \vu_{k + 1}) - P^\eta(\vu_k) - \ve_k. \label{eq:uk-diff-2}
\end{align}
Noticing that $P^\eta$ is $\frac{1}{2}$-cocoercive by the nonexpansivity of the resolvent operator and using the equations above, we have 
\begin{align*}
    \frac{1}{2}\|P^\eta(\vu_{k + 1}) - P^\eta(\vu_k)\|^2 \leq \;& \innp{P^\eta(\vu_{k + 1}) - P^\eta(\vu_k), \vu_{k + 1} - \vu_k} \\
    = \;& \frac{\lambda_k}{1 - \lambda_k}\innp{P^\eta(\vu_{k + 1}), \vu_0 - \vu_{k + 1}} - \innp{P^\eta(\vu_{k + 1}), P^\eta(\vu_k) + \ve_k} \\
    & - \lambda_k\innp{P^\eta(\vu_k), \vu_0 - \vu_k} + (1 - \lambda_k)\big(\|P^\eta(\vu_k)\|^2 + \innp{P^\eta(\vu_k), \ve_k}\big)
\end{align*}
Expanding the quadratic term $\|P^\eta(\vu_{k + 1}) - P^\eta(\vu_k)\|^2$ and rearranging the terms, we obtain 
\begin{align*}
    \;& \frac{1}{2}\|P^\eta(\vu_{k + 1})\|^2  + \frac{\lambda_k}{1 - \lambda_k}\innp{P^\eta(\vu_{k + 1}), \vu_{k + 1} - \vu_0} \\
    \leq \;& \big(\frac{1}{2} - \lambda_k\big)\|P^\eta(\vu_k)\|^2 + \lambda_k\innp{P^\eta(\vu_k), \vu_k - \vu_0} + \innp{\ve_k, (1 - \lambda_k)P^\eta(\vu_k) - P^\eta(\vu_{k + 1})}. 
\end{align*}
Plugging in the choice $\lambda_k = \frac{1}{k + 2}$ and multiplying $(k + 1)(k + 2)$ on both sides, then we can bound the consecutive change of $\gC_k$ as below 
\begin{align}
    \gC_{k + 1} - \gC_k = \;& \frac{(k + 1)(k + 2)}{2}\|P^\eta(\vu_{k + 1})\|^2 - \frac{k(k + 1)}{2}\|P^\eta(\vu_k)\|^2 \notag \\
    & + (k + 2)\innp{P^\eta(\vu_{k + 1}), \vu_{k + 1} - \vu_0} - (k + 1)\innp{P^\eta(\vu_{k}), \vu_{k} - \vu_0} \notag \\
    \leq \;& (k + 1)(k + 2)\innp{\ve_k, \frac{k + 1}{k + 2}P^\eta(\vu_k) - P^\eta(\vu_{k + 1})}. \label{eq:ck-change}
\end{align}
Noticing that $P^\eta(\vu_*) = \mathbf{0}$ and $P^\eta$ is $2$-Lipschitz by the nonexpansivity of the resolvent operator, we have 
\begin{align*}
    \innp{\ve_k, \frac{k + 1}{k + 2}P^\eta(\vu_k) - P^\eta(\vu_{k + 1})}
    = \;& \innp{\ve_k, \frac{k + 1}{k + 2}(P^\eta(\vu_k) - P^\eta(\vu_*)) - (P^\eta(\vu_{k + 1}) - P^\eta(\vu_*))} \\
    \overset{(\romannumeral1)}{\leq} \;& 2\|\ve_k\|\Big(\frac{k + 1}{k + 2}\|\vu_k - \vu_*\| + \|\vu_{k + 1} - \vu_*\|\Big)\\
    \overset{(\romannumeral2)}{\leq} \;& (k+2)^4\|\ve_k\|^2 + \frac{2}{(k+2)^4}\big( \|\vu_k - \vu_*\|^2 + \| \vu_{k+1}-\vu_*\|^2\big),\label{eq: vbn3}
\end{align*}
where we use Cauchy-Schwarz inequality and Lipschitzness of $P^\eta$ for $(\romannumeral1)$, and $(\romannumeral2)$ is due to $\frac{k + 1}{k + 2} \leq 1$ and Young's inequality. Taking expectation w.r.t.\ all randomness on both sides for Eq.~\eqref{eq:ck-change} and using the results from Lemma~\ref{lemma:uk-rec}, we have for $k \geq 1$
\begin{align*}
    \E[\gC_{k + 1}] - \E[\gC_k] \leq \;& (k + 1)(k + 2)\E\Big[\big\langle\ve_k, \frac{k + 1}{k + 2}P^\eta(\vu_k) - P^\eta(\vu_{k + 1})\big\rangle\Big] \\
    \leq \;& (k + 1)(k + 2)\E\Big[(k+2)^4\|\ve_k\|^2 + \frac{2}{(k+2)^4}\big(\|\vu_k - \vu_*\|^2 + \| \vu_{k+1}-\vu_*\|^2\big)\Big] \\
    \overset{(\romannumeral1)}{\leq} \;& \frac{4\E[\|\vu_k - \vu_*\|^2]}{(k + 2)^2} + \frac{2}{(k + 2)^2}\big(\E[\|\vu_k - \vu_*\|^2] + \E[\|\vu_{k + 1} - \vu_*\|^2]\big) \\
    \overset{(\romannumeral2)}{\leq} \;& \frac{16\|\vu_0 - \vu_*\|^2}{(k + 2)^2}, 
\end{align*}
where $(\romannumeral1)$ is due to $\E[\|\ve_k\|^2] \leq \frac{\|P^\eta(\vu_k)\|^2}{(k + 2)^8}$ and $\|P^\eta(\vu_k)\|^2 \leq 4\|\vu_k - \vu_*\|^2$ and we use $\E[\|\vu_k - \vu_*\|^2] \leq 2\|\vu_0 - \vu_*\|^2$ for $(\romannumeral2)$.
\end{proof}

\thmMono*
\begin{proof}
We first prove $\E_k[\|\ve_k\|^2] \leq \frac{\|P^{\eta}(\vu_k)\|^2}{(k + 2)^8}$ by our number of inner iterations of Alg.~\ref{alg:monotone}. Given $\varepsilon_k > 0$, noticing that $\|P^\eta(\vu_k)\|^2 = \|\widetilde{J}_{\eta(F + G)}(\vu_k) - J_{\eta(F + G)}(\vu_k)\|^2$ and using the convergence results in Theorem~\ref{thm:strongly-monotone}, we have our subsolver Alg.~\ref{alg: forb_vr_sc} with initial point $\vu_k$ return $\widetilde{J}_{\eta(F + G)}(\vu_k)$ such that $\E_k[\|\widetilde{J}_{\eta(F + G)}(\vu_k) - J_{\eta(F + G)}(\vu_k)\|^2] \leq \varepsilon_k^2$ in the number of iterations 
\begin{align*}
    \Big\lceil 14\max\big\{n, \sqrt{n}(\eta L + 1)\big\}\log\big(\frac{\sqrt{6}\|P^\eta(\vu_k)\|}{\varepsilon_k}\big)\Big\rceil, 
\end{align*}
where we also use that $\Bar{F}^\eta$ is $1$-strongly monotone and $(\eta L + 1)$-average Lipschitz. 
So it suffices to choose $M_k = \big\lceil 56\max\big\{n, \sqrt{n}(\eta L + 1)\big\}\log\big(1.252(k + 2)\big)\big\rceil$ to reach the accuracy $\varepsilon_k = \frac{\|P^{\eta}(\vu_k)\|}{(k + 2)^4}$. 

Then combining with Assumptions~\ref{asp: asp1}~and~\ref{asp: asp2}, we have the assumptions of Lemmas~\ref{lemma:uk-rec}~and~\ref{lemma:eck-change} hold. By Eq.~\eqref{eq:eck-change} from Lemma~\ref{lemma:eck-change},  telescoping it from $k - 1$ to $1$ we obtain 
\begin{align*}
    \E[\gC_k] \leq \;& \E[\gC_1] + 16\|\vu_0 - \vu_*\|^2\sum_{i = 1}^{k - 1}\frac{1}{(i + 2)^2} \leq \E[\gC_1] + 7\|\vu_0 - \vu_*\|^2.
\end{align*}
Then we come to bound $\E[\gC_1]$. Recall that $P^\eta$ is $\frac{1}{2}$-cocoercive and $\vu_1 - \vu_0 = - \frac{1}{2}P^\eta(\vu_0) - \frac{1}{2}\ve_0$ by Eq.~\eqref{eq:uk-diff-1}, then we have 
\begin{align*}
    \;& \innp{P^\eta(\vu_1) - P^\eta(\vu_0), \vu_1 - \vu_0} \geq \frac{1}{2}\|P^\eta(\vu_1) - P^\eta(\vu_0)\|^2 \\
    \iff \;& \|P^\eta(\vu_1)\|^2 \leq \innp{P^\eta(\vu_1), P^\eta(\vu_0)} - \innp{\ve_0, P^\eta(\vu_1) - P^\eta(\vu_0)}, 
\end{align*}
which leads to 
\begin{align*}
    \gC_1 = \;& \|P^\eta(\vu_1)\|^2 + 2\innp{P^\eta(\vu_1), \vu_1 - \vu_0} \\
    = \;& \|P^\eta(\vu_1)\|^2 - \innp{P^\eta(\vu_1), P^\eta(\vu_0) + \ve_0} \\
    \leq \;& \innp{\ve_0, P^\eta(\vu_0) - 2P^\eta(\vu_1)}. 
\end{align*}
Note that 
\begin{align*}
    \innp{\ve_0, P^\eta(\vu_0) - 2P^\eta(\vu_1)} = \;& 2\innp{\ve_0, P^\eta(\vu_0) - P^\eta(\vu_1)} - \innp{\ve_0, P^\eta(\vu_0)} \\
    \overset{(\romannumeral1)}{\leq} \;& 3\|\ve_0\|^2 + \frac{1}{2}\|P^\eta(\vu_0) - P^\eta(\vu_1)\|^2 + \frac{1}{4}\|P^\eta(\vu_0)\|^2 \\
    \overset{(\romannumeral2)}{\leq} \;& 3\|\ve_0\|^2 + 2\|\vu_0 - \vu_1\|^2 + \frac{1}{4}\|P^\eta(\vu_0)\|^2.
\end{align*}
where we use Young's inequality for $(\romannumeral1)$ and $(\romannumeral2)$ is due to that $P^c$ is $2$-Lipschitz. For the second term in the above inequality, we use $\vu_1 - \vu_0 = - \frac{1}{2}P^\eta(\vu_0) - \frac{1}{2}\ve_0$ and Young's inequality  again, and obtain 
\begin{align*}
    2\|\vu_0 - \vu_1\|^2 = \frac{1}{2}\|P^\eta(\vu_0) + \ve_0\|^2 \leq \frac{3}{2}\|\ve_0\|^2 + \frac{3}{4}\|P^\eta(\vu_0)\|^2, 
\end{align*}
thus 
\begin{align*}
    \innp{\ve_0, P^\eta(\vu_0) - 2P^\eta(\vu_1)} 
    \leq \frac{9}{2}\|\ve_0\|^2 + \|P^\eta(\vu_0)\|^2.
\end{align*}
Taking expectation w.r.t.\ all randomness on bothsides and noticing that $\E[\|\ve_0\|^2] \leq \frac{\|P^\eta(\vu_0)\|^2}{27}$ and $\|P^\eta(\vu_0)\|^2 \leq 4\|\vu_0 - \vu_*\|^2$, we have 
\begin{align*}
    \E[\innp{\ve_0, P^\eta(\vu_0) - 2P^c(\vu_1)}] \leq 5\|\vu_0 - \vu_*\|^2, 
\end{align*}
which leads to a bound on $\E[\gC_1]$ as follows 
\begin{align*}
    \E[\gC_1] \leq \E[\innp{\ve_0, P^\eta(\vu_0) - 2P^\eta(\vu_1)}] \leq 5\|\vu_0 - \vu_*\|^2. 
\end{align*}
So we obtain 
\begin{align*}
    \E[\gC_k] \leq \E[\gC_1] + 7\|\vu_0 - \vu_*\|^2 \leq 12\|\vu_0 - \vu_*\|^2.
\end{align*}
On the other hand, since $P^\eta$ is monotone and $P^\eta(\vu_*) = \mathbf{0}$, we have 
\begin{align*}
    \innp{P^\eta(\vu_k), \vu_k - \vu_0} = \;& \innp{P^\eta(\vu_k) - P^\eta(\vu_*), \vu_k - \vu_*} + \innp{P^\eta(\vu_k), \vu_* - \vu_0} \\
    \geq \;& \innp{P^\eta(\vu_k), \vu_* - \vu_0} \\
    \overset{(\romannumeral1)}{\geq} \;& -\|P^\eta(\vu_k)\|\|\vu_0 - \vu_*\|, 
\end{align*}
where we use Cauchy-Schwarz inequality for $(\romannumeral1)$. So we have 
\begin{align*}
    \E[\gC_k] \geq \frac{k(k + 1)}{2}\E[\|P^\eta(\vu_k)\|^2] - (k + 1)\|\vu_0 - \vu_*\|\E[\|P^\eta(\vu_k)\|], 
\end{align*}
which leads to 
\begin{align*}
    \frac{k(k + 1)}{2}\E[\|P^\eta(\vu_k)\|^2] \leq (k + 1)\|\vu_0 - \vu_*\|\E[\|P^\eta(\vu_k)\|] + 12\|\vu_0 - \vu_*\|^2. 
\end{align*}
By Jensen's inequality we have $\E[\|P^\eta(\vu_k)\|] \leq \big(\E[\|P^\eta(\vu_k)\|^2]\big)^{1/2}$, thus 
\begin{align*}
    \frac{k(k + 1)}{2}\E[\|P^\eta(\vu_k)\|^2] \leq (k + 1)\|\vu_0 - \vu_*\|\big(\E[\|P^\eta(\vu_k)\|^2]\big)^{1/2} + 12\|\vu_0 - \vu_*\|^2.
\end{align*}
By the larger root of this quadratic inequality w.r.t.\ $\big(\E[\|P^\eta(\vu_k)\|^2]\big)^{1/2}$, we obtain for $k \geq 1$
\begin{align*}
    \big(\E[\|P^\eta(\vu_k)\|^2]\big)^{1/2} \leq \;& \frac{\|\vu_0 - \vu_*\|}{k} + \sqrt{\frac{\|\vu_0 - \vu_*\|^2}{k^2} + \frac{24\|\vu_0 - \vu_*\|^2}{k(k + 1)}} \\
    \overset{(\romannumeral1)}{\leq} \;& \frac{2\|\vu_0 - \vu_*\|}{k} + \frac{2\sqrt{6}\|\vu_0 - \vu_*\|}{\sqrt{k(k + 1)}} \\
    \leq \;& \frac{7\|\vu_0 - \vu_*\|}{k}. 
\end{align*}
where $(\romannumeral1)$ is due to the fact that $\sqrt{a + b} \leq \sqrt{a} + \sqrt{b}$.

Hence, to guarantee that Algorithm~\ref{alg:monotone} returns a point $\vu_K$ such that $\E[\|P^\eta(\vu_K)\|^2] \leq \eta^2\varepsilon^2$, thus $\E[\|P^\eta(\vu_K)\|] \leq \eta\varepsilon$ by Jensen's inequality, we need $K = \big\lceil \frac{7\|\vu_0 - \vu_*\|}{\eta\varepsilon} \big\rceil$ outer iterations.

We now look at the cost of each inner loop to estimate $J_{\eta(F+G)}(\vu_k)$. 
Notice that for the subsolver $\mathtt{VR-FoRB}$, we take the number of inner iterations $M_k = \big\lceil 56\max\big\{n, \sqrt{n}(\eta L + 1)\big\}\log\big(1.252(k + 2)\big)\big\rceil$ with the constant per-iteration cost, so we have $\gO\big((n + \sqrt{n}(\eta L + 1))\log(k + 2)\big)$ oracle queries for $k$-th inner loop. 
Then for the total number of oracle queries, we note that 
\begin{align*}
    \sum_{k = 1}^{K}\big((n + \sqrt{n}(\eta L + 1))\log(k + 2)\big) = \gO\Big(\big(n + \sqrt{n}(\eta L + 1)\big)(K + 2)\log(K + 2)\Big). 
\end{align*}
Plugging in the choice of $K$ and suppressing the logarithm terms, we obtain $\Tilde{\gO}\big((n + \sqrt{n}(\eta L + 1))(\frac{\|\vu_0 - \vu_*\|}{\eta\varepsilon} + 1)\big)$ oracle complexity. Taking $\eta = \frac{\sqrt{n}}{L}$, we have the total oracle complexity to be $\widetilde{\gO}\big( n + \frac{\sqrt{n}L\|\vu_0 - \vu_*\|}{\varepsilon}\big)$. 
\end{proof}

\subsection{Estimation of the output}\label{subsec: postprocess}
\postprocess*
\begin{proof}
    We combine Lemma~\ref{lem : postprocess} and Theorem~\ref{thm:monotone}.
\end{proof}
\begin{lemma}\label{lem : postprocess}
    Let Assumptions~\ref{asp: asp1}~and~\ref{asp: asp2} hold and $\vu_k$ be such that $\mathbb{E}[\|P^\eta(\vu_k)\|] \leq \eta \varepsilon$
    with $\eta = \frac{\sqrt{n}}{L}$. Then, for $\vv_{\textup{out}}$ outputted by $\mathtt{VR{-}FoRB}(\vu_k, M, \mathrm{Id} + \eta(F + G) - \vu_k)$, we have that
    \begin{equation}
        \mathbb{E}[\mathrm{Res}_{F+G}(\vv_{\textup{out}})] \leq2\varepsilon,\label{eq: hyt9}
    \end{equation}
    where $M=\lceil 42(n+\sqrt{n})\log (19n) \rceil$ 
    and complexity of this step is $\mathcal{O}\left(n \log n \right)$. 
\end{lemma}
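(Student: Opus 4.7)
The plan is to control the residual at $\vv_{\text{out}} = \vv_M$ by comparing it to the (inaccessible) residual at the exact resolvent value $\bar\vu := J_{\eta(F+G)}(\vu_k)$. By the resolvent identity, $\bar g := (\vu_k-\bar\vu)/\eta - F(\bar\vu) \in G(\bar\vu)$ and $\|F(\bar\vu)+\bar g\| = \|P^\eta(\vu_k)\|/\eta$, so the hypothesis gives $\mathbb{E}[\mathrm{Res}_{F+G}(\bar\vu)] \leq \varepsilon$ (this is just Proposition~\ref{prop: pc_to_res} in expectation). The last step of $\mathtt{VR{-}FoRB}$ is itself a proximal step, $\vv_M = J_{\tau\eta G}(\hat\vv_{M-1}-\tau\hat A_{M-1})$, which yields a natural element $g_M := \tfrac{1}{\tau\eta}(\hat\vv_{M-1}-\tau\hat A_{M-1}-\vv_M) \in G(\vv_M)$. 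The triangle inequality then reduces the lemma to showing $\mathbb{E}\|F(\vv_M)+g_M - F(\bar\vu)-\bar g\| \leq \varepsilon$.

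Substituting $A(\vu) = \eta F(\vu)+\vu-\vu_k$, $\hat\vv_{M-1} = (1-p)\vv_{M-1}+p\vw_{M-1}$, and $\hat A_{M-1} = A(\vw_{M-1}) + (nq_i)^{-1}(A_i(\vv_{M-1})-A_i(\vw_{M-2}))$, a direct algebraic rearrangement should yield
\begin{equation*}
F(\vv_M)+g_M-(F(\bar\vu)+\bar g) = [F(\vv_M)-F(\vw_{M-1})] + \tfrac{\bar\vu-\vw_{M-1}}{\eta} + \tfrac{\hat\vv_{M-1}-\vv_M}{\tau\eta} - \tfrac{A_i(\vv_{M-1})-A_i(\vw_{M-2})}{\eta\, n q_i},
\end{equation*}
which vanishes at the fixed point where all auxiliary iterates equal $\bar\vu$. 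Using $L_F \leq L$ (from Assumption~\ref{asp: asp2} via Jensen) for the first term and applying Assumption~\ref{asp: asp2} to the $(\eta L+1)$-expected-Lipschitz operator $A$ (Lemma~\ref{lem: feta}) for the stochastic term, each summand is bounded by (conditional expectations of) $\|\cdot-\bar\vu\|$-type distances of the auxiliary iterates, with the stochastic piece satisfying $\mathbb{E}_i\|(nq_i)^{-1}(A_i(\vv_{M-1})-A_i(\vw_{M-2}))\|^2 \leq (\eta L+1)^2\|\vv_{M-1}-\vw_{M-2}\|^2$.

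To calibrate $M$, I would apply Theorem~\ref{thm:strongly-monotone} conditionally on the filtration preceding the call, obtaining $\mathbb{E}[\|\vv_M-\bar\vu\|^2 \mid \text{past}] \leq C(1-\rho)^M \|\vu_k-\bar\vu\|^2$ with $\rho = \Theta(1/n)$ (using $p=1/n$, $\mu=1$, $L_A=\eta L+1$, $\tau\asymp 1/(\sqrt{n}L_A)$). Crucially, the composite potential $(1-p+4\tau\mu)a_k+b_k$ from that proof already controls $\mathbb{E}\|\vw_{M-1}-\bar\vu\|^2$, $\mathbb{E}\|\vv_{M-1}-\vw_{M-2}\|^2$, and $\mathbb{E}\|\vv_{M-1}-\vw_{M-1}\|^2$ at the same geometric rate, so all four auxiliary quantities above can be bounded uniformly. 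With $\eta=\sqrt{n}/L$ the largest coefficient in the perturbation is $1/(\tau\eta) = O(\sqrt{n}L)$, so the conditional expected perturbation is $O(\sqrt{n}L)\cdot\bar\varepsilon$ where $\bar\varepsilon = \sqrt{C}(1-\rho)^{M/2}\|\vu_k-\bar\vu\|$. Picking $M$ so that $(1-\rho)^{M/2} \leq 1/(c\sqrt{n}L\,\eta) = 1/(cn)$ makes the conditional perturbation at most $\|\vu_k-\bar\vu\|/\eta = \|P^\eta(\vu_k)\|/\eta$; taking outer expectation and invoking $\mathbb{E}\|P^\eta(\vu_k)\| \leq \eta\varepsilon$ via the tower property gives $\varepsilon$, as desired. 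This requires $M = \Theta(n\log n)$ inner iterations, matching $\lceil 42(n+\sqrt{n})\log(19n)\rceil$, and since each inner iteration has $O(1)$ oracle cost (as $p=1/n$), the overall cost is $O(n\log n)$.

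The main obstacle is the careful algebraic decomposition above and verifying that \emph{all four} auxiliary quantities ($\vw_{M-1}$, $\vv_{M-1}$, $\vw_{M-2}$, and $\hat\vv_{M-1}-\vv_M$) contract at the same geometric rate; the latter is not stated explicitly in Theorem~\ref{thm:strongly-monotone} but is visible from its proof, where the composite potential bundles exactly these terms. A secondary subtlety is that $\|\vu_k-\bar\vu\| = \|P^\eta(\vu_k)\|$ is itself random (inherited from the outer Halpern loop), which is handled cleanly by applying the contraction inequality conditionally and combining with the first-moment hypothesis via Jensen's inequality and the tower property.
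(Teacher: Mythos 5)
Your proposal is correct and follows essentially the same route as the paper's proof: use the last proximal step of $\mathtt{VR{-}FoRB}$ to extract a natural $g_M \in G(\vv_M)$, decompose $F(\vv_M)+g_M$ algebraically into a $\|\vu_k - J_{\eta(F+G)}(\vu_k)\|/\eta$ contribution plus iterate-difference terms (the paper keeps $\vu_k-\vv_t$ together and splits it at the end; you subtract $F(\bar\vu)+\bar g$ first, which is the same thing), re-derive the contraction from the proof of Theorem~\ref{thm:strongly-monotone} while retaining the discarded $\|\vv_{t+1}-\vv_t\|^2$ and $\|\vv_t-\vw_{t-1}\|^2$ terms so the auxiliary differences also decay geometrically, and calibrate $M$ against the dominant $1/(\tau\eta)=\Theta(\sqrt{n}L)$ coefficient. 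You correctly identified the two subtleties the proof hinges on: that Theorem~\ref{thm:strongly-monotone} as stated only controls $\|\vv_M-\vv_*\|$, so the composite potential recursion must be re-opened; and that $\|P^\eta(\vu_k)\|$ is random, handled by conditioning and the tower property.
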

\begin{proof}
Let us denote $ \vu_k^* = J_{\eta(F+G)}(\vu_k)$, i.e., $\|P^\eta(\vu_k)\| = \|\vu_k - \vu_k^*\|\leq\eta\varepsilon$ and consider the uniform sampling for brevity. Note that we have in this case $A(\vu) = \eta F(\vu) + \vu - \vu_k$, $A_i(\vu) =  \eta F_i(\vu) + \vu - \vu_k$ and $B = \eta G$. 
By the update rule of $\mathtt{VR{-}FoRB}$ (where we use the index $t$ for the inner loop to prevent confusion), we have $\alpha = 1 - p$ and then for $t \geq 1$ 
\begin{align*}
    \vv_{t+1} + \tau B(\vv_{t+1}) \ni \;& (1-p)\vv_t + p\vw_t - \tau \tilde A(\vv_t),\\
    \iff A(\vv_{t+1}) + B(\vv_{t+1}) \ni \;& \frac{1 - p}{\tau}(\vv_t - \vv_{t+1}) + \frac{p}{\tau}(\vw_t - \vv_{t+1}) + A(\vv_{t+1}) - \tilde A(\vv_t)&
\end{align*}
where $\tilde A(\vv_t) = A(\vw_t) - A_i(\vw_{t-1}) + A_i(\vv_t)$ and we also have the implicit definition $\eta\vg_{t+1} = \frac{(1-p)}{\tau}\vv_t + \frac{p}{\tau}\vw_t - \tilde A(\vv_t) - \frac{1}{\tau} \vv_{t+1} \in B(\vv_{t+1})=\eta G(\vv_{t+1})$ since $B=\eta G$.

By using the definitions $A(\vu)=\eta F(\vu)+\vu-\vu_k, A_i(\vu)=\eta F_i(\vu) + \vu - \vu_k$, $\tilde A(\vv_t) = \eta F(\vw_t) + \vw_t - \vu_k - \eta F_i(\vw_{t-1})-\vw_{t-1}+\eta F_i(\vv_t) + \vv_t$ and rearranging we get
\begin{align*}
    \eta F(\vv_{t+1}) + \eta\vg_{t+1} = \;& \frac{1-p}{\tau}(\vv_t - \vv_{t+1}) + \frac{p}{\tau} (\vw_t - \vv_{t+1}) \\ 
    & + \eta F(\vv_{t+1}) - \eta F(\vw_t) \\ & + \eta F_i(\vw_{t-1}) - \eta F_i(\vv_t) \\
    & + (\vu_k - \vv_t) + (\vw_{t - 1} - \vw_{t}).
\end{align*}
Note that we have $\mathbb{E}[\|\vw_t - \vv_t\|] = (1-p)\mathbb{E}[\|\vw_{t-1}-\vv_t\|]$ and $\mathbb{E} [\| \vw_t - \vw_{t-1}\|] = p\mathbb{E}[\|\vv_t - \vw_{t-1}\|]$ by Alg.~\ref{alg: forb_vr_sc} and the tower rule.
As a result, triangle inequalities and Lipschitzness of $F$ 
give
\begin{align}
    \eta\mathbb{E}[\mathrm{Res}_{F+G}(\vv_{t+1})] \leq \;& \frac{1-p}{\tau}\mathbb{E}[\|\vv_t - \vv_{t+1} \|] + \big( \frac{p}{\tau} + \eta L_F \big)\mathbb{E}[\|\vv_{t+1} - \vw_{t}\|] \notag  \\
    & + \eta L\mathbb{E}[\|\vv_t - \vw_{t-1}\|] + \mathbb{E}[\|\vv_t - \vu_k^*\|] + \mathbb{E}[\| \vu_k - \vu_k^*\|] \notag \\
    & + \mathbb{E}[\|\vw_t - \vw_{t-1}\|] \notag \\
    \leq \;& \big(\frac{1}{\tau}+\eta L_F+1\big)\mathbb{E}[\|\vv_t - \vv_{t+1} \|] \notag \\
    & + \big( \frac{(1-p)p}{\tau} + (1-p)\eta L_F + \eta L + p  \big)\mathbb{E}[\|\vv_{t} - \vw_{t-1}\|] \notag  \\
    & + \mathbb{E}[\|\vv_{t+1} - \vu_k^*\|] + \mathbb{E}[\| \vu_k - \vu_k^*\|] \notag \\
    \leq \;& \left(2\tau^{-1} +3\eta L + 3 \right)\bar\varepsilon + \eta\varepsilon,\label{eq: cnbh4}
\end{align}
where the last step is because $L_F \leq L$ and given accuracy $\bar \varepsilon > 0$, the output of $\mathtt{VR{-}FoRB}$ gives for $t=M$
\begin{equation}
    \mathbb{E}[\|\vv_{t+1} - \vu_k^*\|^2] \leq \bar \varepsilon^2,~~~ \mathbb{E}[\| \vv_{t+1} - \vv_t\|^2] \leq \bar \varepsilon^2,~~~\mathbb{E}[\|\vv_{t} - \vw_{t-1}\|^2] \leq \bar \varepsilon^2.\label{eq: uir4}
\end{equation}
We now bound the oracle complexity and the number of iterations to get these bounds.

Denote by $\mathbb{E}_k$ the expectation conditioned on all the randomness up to and including $\vu_k$. Recall that $A$ is $(\eta L + 1)$-average Lipschitz and $A + B$ is $1$-strongly monotone, and we have our parameters to be $p = \frac{1}{n}$, $\eta = \frac{\sqrt{n}}{L}$ and $\tau\mu = \frac{\sqrt{p(1 - p)}}{2(\eta L + 1)} = \frac{\sqrt{n - 1}}{2n(\sqrt{n} + 1)} \leq \frac{1}{2n}$. Then following the same derivation from Eq.~\eqref{eq: grt3} to Eq.~\eqref{eq: sfr5} in the proof of Theorem~\ref{thm:strongly-monotone}, we first do not discard the last term on the right-hand side of Eq.~\eqref{eq: grt3}, and obtain (\emph{cf.}~\eqref{eq: slo4}) 
\begin{align}\label{eq: post-1}
    (1 - p + 4\tau\mu) a_{t+1} + b_{t+1} \leq (1-p) a_t + b_t - \frac{p}{2} \E[\|\vv_t - \vw_{t-1}\|^2] - \frac{1 - p}{2}\E[\|\vv_{t + 1} - \vv_t\|^2].
\end{align}
Further, without using $p\leq 1$ in~\eqref{eq: vbn4}, instead of~\eqref{eq: sfr5}, we have
\begin{align*}
    (1-p)a_t + b_t \leq \;& (1-p+6c)a_t + (1-c)b_t + \left(\frac{3cp}{2} + 2c\right) \mathbb{E}[\| \vv_{t} - \vw_{t-1}\|^2] \\
    \leq \;& (1-p+6c)a_t + (1-c)b_t + \frac{3p^2 + 4p}{14} \mathbb{E}[\| \vv_{t} - \vw_{t-1}\|^2],
\end{align*}
where the last line used $c\leq \frac{p}{7}$.
Combining with~\eqref{eq: post-1} gives (\emph{cf.}~\eqref{eq: nhf4})
\begin{align}
    (1 - p + 4\tau\mu)a_{t+1} + b_{t+1} \leq \;& (1-p+6c) a_t + (1-c)b_t \notag\\
    & - \frac{3p(1-p)}{14}\mathbb{E}[\|\vv_t- \vw_{t-1}\|^2] - \frac{1-p}{2} \mathbb{E}[\| \vv_{t+1} - \vv_t\|^2].
\end{align}
With the same derivation as obtaining~\eqref{eq: vvf4}, we have after using $4\tau\mu\geq 0$ and $4\tau\mu \leq 2p$
\begin{align*}
    \;& \frac{1-p}{2}\mathbb{E}_k[\|\vv_{t+1} - \vv_t\|^2] + \frac{3p(1-p)}{14}\mathbb{E}_k[\|\vv_{t} - \vw_{t-1}\|^2] +\frac{1-p}{2}\mathbb{E}_k[\| \vv_{t+1} - \vu_k^*\|^2] \notag \\
    \leq \;& \frac{5}{2}\exp\Big(-\min\big\{ \frac{\tau\mu}{1 + 4\tau\mu}, c\big\}(t+1)\Big)\|\vv_0 - \vu_k^*\|^2,
\end{align*}
where we also use that the solution of the inner subproblem is $\vu_k^*$.
Unrolling the expectation and using that $\mathtt{VR{-}FoRB}$ is initialized as $\vv_0=\vu_k$, we have 
\begin{align}
\;& \frac{3p(1-p)}{14}\left( \mathbb{E}[\|\vv_{t+1} - \vv_t\|^2] + \mathbb{E}[\|\vv_{t} - \vw_{t-1}\|^2] +\mathbb{E}[\| \vv_{t+1} - \vu_k^*\|^2]\right) \notag \\
\leq \;& \frac{5}{2}\exp\Big(-\min\big\{ \frac{\tau\mu}{1+4\tau\mu}, c\big\}(t+1)\Big)\mathbb{E}[\|\vu_k -\vu_k^*\|^2] \notag \\
\leq \;& \frac{5}{2}\exp\Big(-\min\big\{ \frac{\tau\mu}{1+4\tau\mu}, c\big\}(t+1)\Big)\eta^2\varepsilon^2,
\end{align}
where the last step is by $\mathbb{E}[\|\vu_k - J_{\eta(F+G)}(\vu_k)\|^2] = \mathbb{E}[\|\vu_k - \vu_k^*\|^2] \leq \eta^2\varepsilon^2$.
Hence, as in the end of Theorem~\ref{thm:strongly-monotone}, we have that~\eqref{eq: uir4} holds in $\big\lceil 14\max\left\{n, \sqrt{n}(\eta L + 1) \right\}\log(\frac{35\eta^2\varepsilon^2}{3p(1-p)\bar \varepsilon^2})\big\rceil$ iterations and with complexity $\mathcal{O}\Big( n\log(\frac{n\varepsilon}{L\bar \varepsilon})\Big)$ as $\eta = \frac{\sqrt{n}}{L}$.
In particular, we choose $\bar \varepsilon = \frac{\eta \varepsilon}{2\tau^{-1}+3\eta L + 3} = \frac{\varepsilon}{\frac{4nL(\sqrt{n}+1)}{\sqrt{n(n-1)}} + 3L + \frac{3L}{\sqrt{n}}}$
on~\eqref{eq: cnbh4} we get~\eqref{eq: hyt9}, and also have $\frac{\varepsilon}{\bar\varepsilon} \leq 10\sqrt{n}L+6L$. Plugging in this value gives the number of iterations as $\lceil 42(n+\sqrt{n})\log (19n) \rceil$ and overall complexity as $\mathcal{O}(n\log n)$.
\end{proof}

\postproc*
\begin{proof}
By definition, we have that $\eta F$ is maximally $\frac{\rho}{\eta}$-cohypomonotone.
When $\frac{\rho}{\eta} \leq \frac{1}{2}$, we have that $J_{\eta F}$ is single-valued and nonexpansive \cite[Prop. 3.7, Thm. 2.17]{bauschke2021generalized}. 
Subproblem in this case is finding $\bar{\vu}$ such that 
\begin{equation*}
    \mathbf{0} \in (\mathrm{Id} + \eta F)(\bar{\vu}) - \vu_k.
\end{equation*}
Since $F$ is $\rho$-cohypomonotone and $L_F$-Lipschitz, we have that 
\begin{equation*}
    \langle F(\vu) - F(\vv), \vu - \vv \rangle \geq -\rho L_F^2 \| \vu - \vv\|^2,
\end{equation*}
i.e.,\ $\eta F$ is $\rho \eta L_F^2$-hypomonotone (see \cite[Ex. 12.28]{rockafellar2009variational} for the definition).
As a result, our subproblem is $(1-\rho \eta L_F^2)$ strongly monotone and $(\eta L + 1)$-Lipschitz in expectation. 

In summary, to ensure that the resolvent is nonexpansive, single-valued and the subproblem is strongly monotone, we require
\begin{equation*}
    \frac{\rho}{\eta} \leq \frac{1}{2} \text{~~~and~~~} \rho \eta L_F^2 < 1, 
\end{equation*}
Of course, these bounds are optimized with $\eta=\frac{\sqrt{2}}{L_F}$, which leads to the requirement $\rho < \frac{1}{\sqrt{2}L_F}$. However, this choice of $\eta$ does not give the best oracle complexity with finite-sum form. 
In particular, by using a standard deterministic extragradient algorithm as subsolver in our framework (see e.g., \cite[App A.3]{diakonikolas2020halpern} for a proof for extragradient, with additional adaptivity that one can drop for simplicity), this would give complexity $\widetilde{O}\left(\frac{nL_F}{\varepsilon}\left(\frac{1}{1-\sqrt{2}\rho L_F}\right)\right)$. Using a variance reduced solver with $\eta=\frac{\sqrt{2}}{L_F}$ does not improve this.

Hence, we pick $\eta = \frac{\sqrt{n}}{L}$ as before. We can then use the same estimations as in the proof of Theorem~\ref{thm:monotone} by only changing the strong monotonicity parameter for the inner subproblem which affects the complexity of the inner subsolver and hence the final complexity. 
\end{proof}

\subsection{Details about Remark~\ref{rmk:matix-game}}\label{subsec: rmk}
In our analyses of this section, we use the simplified assumption that computation of $F_i$ is $n$ times cheaper than $F=\frac{1}{n}\sum_{i=1}^n F_i$ which gave the choice $p=\frac{1}{n}$. This is the most natural assumption given a generic $F$. This was the setting also in previous works such as~\citet{carmon2019variance,alacaoglu2021stochastic} when dealing with a general $F$ with a finite sum form. On the other hand, our bounds could have been also written in terms of $p$ (the probability for full operator evaluations for $\vw_{k+1}$ in Alg.~\ref{alg: forb_vr_sc}), as in~\citet{alacaoglu2021stochastic}. In this case, the general form for $p$ in terms of the costs of $F$ and $F_i$, denoted for simplicity as $\text{Cost}(F)$ and $\text{Cost}(F_i)$  would be $p=\frac{\text{Cost}(F_i)}{\text{Cost}(F)}$. 

However, for specific examples such as matrix games or linearly constrained optimization, we would use $p = \frac{m_1+m_2}{2m_1m_2}$ given a dense matrix $\mA$. We refer to~\citet{carmon2019variance,palaniappan2016stochastic,alacaoglu2021stochastic} for more details about this representation. This choice gives rise to the claimed complexity improvements in Remark~\ref{rmk:matix-game}. 

\section{Experiment Details}\label{appx:exp}
In this section, we provide further details about our experiment setup. For the matrix game case (also mentioned in Remark~\ref{rmk:matix-game}), we solve the problem 
\begin{equation}\notag 
\min_{\vx \in \sR^{m_1}} \max_{\vy \in \sR^{m_2}} \innp{\mA\vx, \vy} + \delta_{\Delta^{m_1}}(\vx) + \delta_{\Delta^{m_2}}(\vy)
\end{equation}
for $\mA \in \mathbb{R}^{m_1 \times m_2}$, the simplices $\Delta^{m_1}$, $\Delta^{m_2}$, where $\delta$ is the indicator function. We use the policeman and burglar matrix from~\citet{nemirovski2013mini} with $m_1 = m_2 = 500$, where the entries are given by $A_{ij} = \vz_i(1 - \exp(-\theta|i - j|))$ with $\theta = 0.8$ and $\vz \sim \gN(\vzero, I_{m_1})$. For the computation of $J_{\eta G}$, which corresponds to projection onto the simplex in this case, we use~\cite[Algorithm 1]{condat2016fast}. 

For the test case of Lagrangian of a quadratic program, we use the saddle function from~\citet{ouyang2021lower} as follows 
\begin{align*}
    \min_{\vx \in \R^{m_1}}\max_{\vy \in \R^{m_2}} \frac{1}{2}\vx^\top \mH \vx - \vh^\top\vx - \innp{\mA\vx - \vb, \vy}, 
\end{align*}
where $m_1 = m_2 = 200$, $\mH = 2\mA^\top \mA$ and 
\begin{align*}
\mA = \frac{1}{4}\begin{bmatrix}
    & & & -1 & 1 \\
    & & \cdots & \cdots & \\
    & -1 & 1 & & \\
    -1 & 1 & & & \\
    1 & & & &
\end{bmatrix} \in \R^{m_1 \times m_2},
\quad 
\vb = \frac{1}{4}\begin{bmatrix}
    1 \\
    1 \\
    \cdots \\
    1 \\
    1
\end{bmatrix} \in \R^{m_1},
\quad 
\vh = \frac{1}{4}\begin{bmatrix}
    0 \\
    0 \\
    \cdots \\
    0 \\
    1
\end{bmatrix} \in \R^{m_1}.
\end{align*}
Also, for Alg.~\ref{alg:monotone}, we directly measure the residual on its output point $\vu_K$, without doing another approximation step as in Corollary~\ref{cor: postprocess} for simplicity. This is guaranteed by $\E[\|P^\eta(\vu_k)\|] = \E[\|\vu_k - J_{\eta(F + G)}(\vu_k)\|] = \gO(1/k)$, and $\vu_k$ can be a good empirical approximation of $J_{\eta(F + G)}(\vu_k)$ as the algorithm proceeds.

\section{Conclusion and Perspectives}\label{app: perspectives}
It is worth pointing out that our results and the open question posed in Section~\ref{sec: conclusion} closely resemble the recent development of improved duality gap guarantees for finite-sum monotone VIs with variance reduction. In particular, it was the work of~\citet{palaniappan2016stochastic} that provided the first variance reduced variational inequality algorithm which was an indirect procedure based on the Catalyst proximal point framework and forward-backward algorithm. This work already showed the benefit of variance reduction compared to deterministic algorithms for strongly monotone inclusions and monotone VIs by using standard reductions using regularization. 

A direct algorithm for the important special case of matrix games was provided by~\citet{carmon2019variance}. At the same time, this work also handled the general monotone VI case with an indirect approach. Other direct algorithms, given in~\citet{chavdarova2019reducing} for the strongly monotone case and in~\citet{alacaoglu2021forward} for the monotone case,  were simple but did not improve the complexity bounds compared to deterministic algorithms. It was the work of~\cite{alacaoglu2021stochastic} that obtained direct and single-loop algorithms with complexity improvements for general monotone VIs, nearly five years after the indirect result of~\citet{palaniappan2016stochastic}. An alternative direct approach focusing on finite-sum saddle point problems was also studied in~\cite{yazdandoost2023stochastic}. Soon after the direct algorithms, matching lower bounds for the duality gap were also provided for finite-sum monotone VIs~\citep{han2021lower}.

In this context, our results for the monotone Lipschitz case provided the first improvement with variance reduction for residual guarantees. This can be seen as corresponding to the results of~\citet{palaniappan2016stochastic,carmon2019variance} that had indirect algorithms with complexity improvements for the duality gap for monotone VIs. What remains to be done to complete the picture for finite-sum monotone inclusions and monotone VIs is developing direct algorithms  with tight complexity guarantees for the residual, similar to the process that we have seen for duality gap guarantees.

\end{document}